\DeclareMathAlphabet\mathbb{U}{msb}{m}{n}
\def\Rset{\mathbb{R}}
\let\P\undefined
\DeclareMathOperator*{\P}{\mathbb{P}}
\DeclareMathOperator*{\E}{\mathbb E}
\DeclareMathOperator*{\argmax}{argmax}
\DeclareMathOperator*{\argmin}{argmin}
\DeclarePairedDelimiter{\abs}{\lvert}{\rvert} 
\DeclarePairedDelimiter{\bracket}{[}{]}
\DeclarePairedDelimiter{\curl}{\{}{\}}
\DeclarePairedDelimiter{\paren}{(}{)}
\DeclarePairedDelimiter{\ceil}{\lceil}{\rceil}
\newcommand{\sA}{{\mathscr A}}
\newcommand{\sC}{{\mathscr C}}
\newcommand{\sD}{{\mathscr D}}
\newcommand{\sE}{{\mathscr E}}
\newcommand{\sF}{{\mathscr F}}
\newcommand{\sH}{{\mathscr H}}
\newcommand{\sK}{{\mathscr K}}
\newcommand{\sM}{{\mathscr M}}
\newcommand{\sR}{{\mathscr R}}
\newcommand{\sS}{{\mathscr S}}
\newcommand{\sX}{{\mathscr X}}
\newcommand{\sY}{{\mathscr Y}}
\newcommand{\Rad}{\mathfrak R}
\newcommand{\h}{\widehat}
\newcommand{\ov}{\overline}
\newcommand{\wt}{\widetilde}
\newcommand{\e}{\epsilon}
\newcommand{\ignore}[1]{}
\def\Nset{\mathbb{N}}
\newcommand{\hh}{{\sf h}}
\newcommand{\g}{{\sf g}}
\newcommand{\rr}{{\sf r}}
\newcommand{\pp}{{\sf p}}
\newcommand{\cost}{{\mathsf{cost}}}
\newcommand{\q}{{q}}
\title[Cardinality-Aware Set Prediction and Top-$k$ Classification]
      {Cardinality-Aware Set Prediction and Top-\texorpdfstring{$k$}{k} Classification}
\begin{document}

\maketitle

\begin{abstract}

We present a detailed study of cardinality-aware top-$k$
classification, a novel approach that aims to learn an accurate top-$k$ set
predictor while maintaining a low cardinality.
We introduce a new target loss function tailored to this setting that
accounts for both the classification error and the cardinality of the
set predicted.  To optimize this loss function, we propose two
families of surrogate losses: cost-sensitive comp-sum losses and
cost-sensitive constrained losses. Minimizing these loss functions
leads to new cardinality-aware algorithms that we describe in detail
in the case of both top-$k$ and threshold-based classifiers.
We establish $\sH$-consistency bounds for our cardinality-aware surrogate loss
functions, thereby providing a strong theoretical foundation for our
algorithms.\ignore{To do so, we give a theoretical analysis of top-$k$
classification and demonstrate that several common surrogate loss
functions in multi-class classification, such as comp-sum and
constrained losses, are supported by $\sH$-consistency bounds with
respect to the top-$k$ loss.}
We report the results of extensive experiments on CIFAR-10, CIFAR-100,
ImageNet, and SVHN datasets demonstrating the effectiveness and
benefits of our cardinality-aware algorithms.

\end{abstract}



\section{Introduction}
\label{sec:intro}

Top-$k$ classification consists of predicting the $k$ most likely
classes for a given input, as opposed to solely predicting the single
most likely class. Several compelling reasons support the adoption of this framework.\ignore{top-$k$ classification} First, it enhances accuracy by allowing the
model to consider the top $k$ predictions, accommodating uncertainty
and providing a more comprehensive prediction. This is
particularly valuable in scenarios where multiple correct answers
exist, such as image tagging, where a top-$k$ classifier can identify
multiple relevant objects in an image. Second, top-$k$ classification
is applicable in ranking and recommendation tasks such as suggesting
the top $k$ most relevant products in e-commerce based on user
queries. The confidence scores associated with the top $k$ predictions
also serve as a means to estimate the model's uncertainty, which is crucial in applications 
requiring insight into the model's confidence
level.

The predictions of a top-$k$ classifier are also useful in several natural settings. For example, ensemble learning can benefit from top-$k$ predictions as they can be
combined from multiple models, contributing to improved overall
performance by introducing a more robust and diverse set of
predictions. In addition, top-$k$ predictions can serve as input for
downstream tasks like natural language generation or dialogue systems,
enhancing the performance of these tasks by providing a broader range
of potential candidates. Finally, the interpretability of the model's
decision-making process is enhanced by examining the top $k$ predicted
classes, allowing users to gain insights into the rationale behind the
model's predictions.

The appropriate $k$ for a task at hand may be determined by the
application itself like a recommendor system always expecting a fixed
set size to be 
returned. For other applications, it may be natural to let the
cardinality of the returned set vary with the model's confidence or other
properties of the task. Designing effective algorithms with learning guarantees for
this setting is our main goal.

In this paper, we introduce the problem of cardinality-aware set prediction\ignore{In Section~\ref{sec:cardinality}, we introduce the problem formally
and discuss cardinality-aware
set prediction}, which is to learn an accurate set predictor while
maintaining a low cardinality. The core idea is that an effective
algorithm should dynamically adjust the cardinality of its prediction sets
based on input instances.  For top-$k$ classifiers, this means
selecting a larger $k$ for difficult inputs to ensure high accuracy,
while opting for a smaller $k$ for simpler inputs to maintain low
cardinality. Similarly, for threshold-based classifiers, a lower
threshold can be used for difficult inputs to minimize the risk of
misclassification, whereas a higher threshold can be applied to
simpler inputs to reduce cardinality.

To tackle this problem, we introduce a novel target loss function\ignore{tailored to this problem} which
captures both the classification error and the cardinality of a prediction set\ignore{(Section~\ref{sec:cardinality-aware-loss})}.  Minimizing this target loss function
directly is an instance-dependent cost-sensitive learning problem,
which is intractable for most hypothesis sets.  Instead, we derive two families of general surrogate loss functions that
benefit from 
smooth properties and favorable optimization solutions.

\ignore{(Section~\ref{sec:cost-sensitive-comp-sum})
(Section~\ref{sec:cost-sensitive-constrained})}

To provide theoretical guarantees for our cardinality-aware top-$k$ approach, we first
study consistency properties for surrogate loss functions for the general top-$k$ problem for a fixed $k$. Unlike standard classification, the consistency of surrogate loss functions for the top-$k$ problem has been relatively
unexplored. A crucial property in this context is the asymptotic notion of
\emph{Bayes-consistency}, which has been extensively studied in standard binary
and multi-class classification
\citep{Zhang2003,bartlett2006convexity,zhang2004statistical,
  bartlett2008classification}. While Bayes-consistency has been
explored for various top-$k$ surrogate losses
\citep{lapin2015top,lapin2016loss,lapin2018analysis,yang2020consistency,
  thilagar2022consistent}, some face limitations.  Non-convex
``hinge-like" surrogates \citep{yang2020consistency}, surrogates inspired by
ranking \citep{usunier2009ranking}, and polyhedral surrogates
\citep{thilagar2022consistent} cannot lead to effective algorithms as
they cannot be efficiently computed and optimized.  Negative results also
indicate that several convex "hinge-like" surrogates
\citep{lapin2015top,lapin2016loss,lapin2018analysis} fail to achieve
Bayes-consistency \citep{yang2020consistency}. On the positive side, it has been shown that the logistic loss (or
cross-entropy loss used with the softmax activation) is a
Bayes-consistent loss for top-$k$ classification
\citep{lapin2015top,yang2020consistency}.
\ignore{Both the loss functions of top-$k$ and
its cardinalilty-aware extension are non-continuous and
non-differentiable, and their direct optimization is
intractable.}\ignore{A number of algorithms minimizing surrogate losses for the top-$k$ classification problem  and inspired by this
literature, we develop similar
surrogate loss functions for the cardinality-aware task. }\ignore{This raises critical
questions: Which surrogate loss functions admit theoretical guarantees
and efficient minimization properties?  Can we design accurate top-$k$
and hence cardinality-aware top-$k$ algorithms with strong consistency guarantees?}\ignore{In Section~\ref{sec:comp}}

We show that, remarkably, several widely used families of
surrogate losses used in standard multi-class classification admit
\emph{$\sH$-consistency bounds} \citep{awasthi2022h,awasthi2022multi,
  mao2023cross,MaoMohriZhong2023characterization} with respect to the
top-$k$ loss.  These are strong non-asymptotic consistency guarantees that are
 specific to the actual hypothesis set $\sH$ adopted, and therefore also imply asymptotic Bayes-consistency.  \ignore{In Section~\ref{sec:pre}, w}We 
establish this property for the broad family of \emph{comp-sum
  losses} \citep{mao2023cross}, comprised of the composition of a non-decreasing and non-negative function with the
sum exponential losses. This includes the logistic loss, the
sum-exponential loss, the mean absolute error loss, and the
generalized cross-entropy loss. 
Additionally, we extend these results
\ignore{in Appendix~\ref{app:cstnd}} to \emph{constrained losses}, a family
originally introduced for multi-class SVM
\citep{lee2004multicategory}, which includes the constrained
exponential, hinge, squared hinge, and $\rho$-margin losses.\ignore{ Many of these loss functions are known for their
smooth properties and favorable optimization solutions. } The guarantees of $\sH$-consistency provide a strong foundation for principled algorithms in
top-$k$ classification by  directly minimizing  these surrogate
loss functions. \ignore{
Finally, in Section~\ref{sec:cardinality-aware-sur}}

We then leverage these results to derive strong guarantees for the two
families of cardinality-aware surrogate losses: cost-sensitive comp-sum and
cost-sensitive constrained losses.
Both families  are obtained by
augmenting their top-$k$ counterparts \citep{lapin2015top,lapin2016loss,
  berrada2018smooth,reddi2019stochastic,
  yang2020consistency,thilagar2022consistent}\ignore{ studied in
Section~\ref{sec:comp}} with instance-dependent cost terms.
We establish strong $\sH$-consistency
bounds, implying Bayes-consistency, for both families relative to the
cardinality-aware target loss. Our $\sH$-consistency bounds
for the top-$k$ problem are further beneficial here in that the cardinality-aware 
problem can consist of fixing and selecting from a family top-$k$ classifiers--we 
now know how to effectively learn each top-$k$ classifier.

The rest of the paper is organized as follows. In Section~\ref{sec:cardinality}, we formally introduce the cardinality-aware set prediction problem along with our new families of surrogate loss functions. Section~\ref{sec:cardinality-aware-algorithms} instantiates our algorithms in the case of both top-$k$ classifiers and threshold-based classifiers, and Section~\ref{sec:pre} presents strong theoretical guarantees. In Section~\ref{sec:experiments}, as well as in Appendix~\ref{app:add} and
Appendix~\ref{app:add-conformal}, we present experimental results on
the CIFAR-10, CIFAR-100, ImageNet, and SVHN datasets, demonstrating
the effectiveness of our algorithms.

\ignore{
\subsection{Related work}

This prompts further
questions: Which other smooth loss functions admit this property? More
importantly, can we establish non-asymptotic and hypothesis
set-specific guarantees for these surrogate loss functions,
quantifying their effectiveness? Beyond the fixed-$k$ setting, how can
we design algorithms that dynamically balance the trade-off between
achieving high classification accuracy and maintaining a low average
cardinality? This paper addresses these questions through a detailed
study of top-$k$ classification and cardinality-aware set prediction.
}
\ignore{In Section~\ref{sec:comp}, we show that, remarkably, several widely used families of
surrogate losses used in standard multi-class classification admit
\emph{$\sH$-consistency bounds} \citep{awasthi2022h,awasthi2022multi,
  mao2023cross,MaoMohriZhong2023characterization} with respect to the
top-$k$ loss.  These are strong consistency guarantees that are
non-asymptotic and specific to the hypothesis set $\sH$ adopted, and therefore also imply asymptotic Bayes-consistency.  In Section~\ref{sec:pre}, we 
establish this property for the broad family of \emph{comp-sum
  losses} \citep{mao2023cross}, comprised of the composition of a function $\Phi$ with the
sum exponential losses. This includes the logistic loss, the
sum-exponential loss, the mean absolute error loss, and the
generalized cross-entropy loss. 
Additionally, we extend these results
in Appendix~\ref{app:cstnd} to \emph{constrained losses}, a family
originally introduced for multi-class SVM
\citep{lee2004multicategory}, which includes the constrained
exponential, hinge, squared hinge, and $\rho$-margin losses. Many of these loss functions are known for their
smooth properties and favorable optimization solutions. The guarantees of $\sH$-consistency provide a strong foundation for principled algorithms in
top-$k$ classification, leveraging the minimization of these surrogate
loss functions. 
Finally, in Section~\ref{sec:cardinality-aware-sur} we extend these guarantees to the surrogate loss
functions used for our cardinality-aware top-$k$ problem.
}

\section{Cardinality-aware set prediction}
\label{sec:cardinality}

In this section, we introduce cardinality-aware set prediction, where the goal is to devise
algorithms that dynamically adjust the prediction set's size based on
the input instance to both achieve high accuracy and maintain a low
average cardinality. Specifically, for top-$k$ classifiers, our objective is to determine a
suitable cardinality $k$ for each input $x$, with higher values of $k$
for instances that are more difficult to classify. \ignore{ More generally,
given a family of set predictors $\curl*{\g_k \colon k \in \sK}$, we
aim to select, for each input $x$, the most suitable $\g_k$, $k \in
\sK$, to ensure both a high accuracy and limited cardinality.  In
particular, $\g_k$s may be threshold-based classifiers based on some
scoring function $s$: }

To address this problem, we first define a cardinality-aware loss
function that accounts for both the classification error and the
cardinality of the set predicted
(Section~\ref{sec:cardinality-aware-loss}). However, minimizing this
loss function directly is computationally intractable for non-trivial
hypothesis sets.  Thus, to optimize it, we introduce two families of
surrogate losses: cost-sensitive comp-sum losses
(Section~\ref{sec:cost-sensitive-comp-sum}) and cost-sensitive
constrained losses (Section~\ref{sec:cost-sensitive-constrained}). We will later show that these loss functions benefits from favorable guarantees in terms of $\sH$-consistency (Section~\ref{sec:cardinality-aware-sur}). \ignore{Minimizing these loss functions
leads to novel cardinality-aware algorithms, which we describe in detail
in the case of top-$k$ classifiers and threshold-based classifiers
(Section~\ref{sec:cardinality-aware-algorithms}).}

\subsection{Cardinality-aware problem formulation and loss function}
\label{sec:cardinality-aware-loss}
\ignore{
Here, we describe the learning setup for cardinality-aware set
prediction, in particular the formulation of a cardinality-aware loss
function.}

The learning setup for cardinality-aware set
prediction is as follows.

\textbf{Problem setup.} \ignore{We first introduce the learning task of top-$k$ classification with $n \geq
2$ classes, that is seeking to ensure that the correct class label
for a given input sample is among the top $k$\textbf{} predicted classes.
}
We denote by $\sX$ the input space and $\sY = [n] \colon = \curl*{1,
  \ldots, n}$ the label space. Let $\curl*{\g_k \colon k \in \sK}$ denote a
collection of given set predictors, induced by a parameterized set predictor $g_k \colon \sX \mapsto 2^\sY$, where each $\sK \subset \Rset$ is a set
of indices. This could be a subset of the family
of top-$k$ classifiers induced by some classifier $h$, or a family of
threshold-based classifiers based on some scoring function $s\colon \sX \times\sY \mapsto \Rset$. In that case, $\g_k(x)$ then comprises the set of $y$s with a
score $s(x, y)$ exceeding the threshold $\tau_k$ defining
$\g_k$. This formulation covers as a special case standard conformal
prediction set predictors \citep{shafer2008tutorial}, as well as set
predictors defined as confidence sets described in
\citep{denis2017confidence}.
We will denote by $\abs*{\g_k(x)}$ the cardinality of the set
$\g_k(x)$ predicted by $\g_k$ for the input $x$.  To simplify the
discussion, we will assume that $\abs*{\g_k(x)}$ is an increasing
function of $k$, for any $x$. For a family of top-$k$ classifiers or
threshold-based classifiers, this simply means that they are sorted in
increasing order of $k$ or decreasing order of the threshold values.

To account for the cost associated with cardinality, we introduce a
non-negative and increasing function $\cost\colon \Rset_+ \to \Rset_+$,
where $\cost(\abs*{\g_k(x)})$ represents the \emph{cost} associated to
the cardinality $\abs*{\g_k(x)}$. Common choices for $\cost$ include
$\cost(\abs*{\g_k(x)}) = \abs*{\g_k(x)}$, or a logarithmic function
$\cost(\abs*{\g_k(x)}) = \log (\abs*{\g_k(x)})$ as in our experiments
(see Section~\ref{sec:experiments}), to moderate the magnitude of the
cost relative to the binary classification loss.  Our analysis is
general and requires no assumption about $\cost$.

Our goal is to learn to assign to each input instance $x$ the most
appropriate index $k \in \sK$ to both achieve high accuracy and
maintain a low average cardinality.

\textbf{Cardinality-aware loss function.} As in the ordinary multi-class
classification problem, we consider a family $\sR$ of scoring
functions $r \colon \sX \times \sK \to \Rset$.  For any $x$, $r(x, k)$
denotes the score assigned to the \emph{label} (or index) $k \in \sK$,
given $x \in \sX$.  The label predicted is $\rr(x) = \argmax_{k \in
  \sK} r(x, k)$, with ties broken in favor of the largest index.
To account for both
classification accuracy and cardinality cost, we define the
\emph{cardinality-aware loss function} for a scoring function $r$ and
input-output label pair $(x, y) \in \sX \times \sY$ as a linearized
loss of these two criteria:
\begin{equation}
\label{eq:target-cardinality-loss}
\ell(r, x, y) = 1_{y \notin \g_{\rr(x)}(x)} + \lambda \, \cost(\abs*{\g_{\rr(x)}(x)}),
\end{equation}
where the first term is the standard loss for a top-$k$ prediction
taking the value one when the correct label $y$ is not included in the
top-$k$ set and zero otherwise, and $\lambda > 0$ is a hyperparameter that governs the balance
between prioritizing accuracy versus limiting cardinality. The
learning problem then consists of using a labeled training sample
$(x_1, y_1), \ldots (x_m, y_m)$ drawn i.i.d.\ from some (unknown)
distribution $\sD$\ignore{over $\sX \to \sY$} to select $r \in \sR$ with a
small expected cardinality-aware loss $\E_{(x, y) \sim \sD}[\ell (r, x, y)]$.

\ignore{Note that}The loss function \eqref{eq:target-cardinality-loss} can be equivalently
expressed in terms
of an instance-dependent cost function $c \colon \sX \times \sK \times
\sY \to \Rset_{+}$:
\begin{equation}
\label{eq:target-cardinality}
\ell (r, x, y) = c(x, \rr(x), y),
\end{equation}
where $c(x, k, y) = 1_{y \notin \g_{k}(x)} + \lambda \cost(\abs*{\g_{k}(x)})$. Minimizing \eqref{eq:target-cardinality} is an
instance-dependent cost-sensitive learning problem. However, directly
minimizing this target loss is intractable.
To optimize this loss function, we introduce two families of surrogate
losses in the next sections: cost-sensitive comp-sum losses and
cost-sensitive constrained losses. \ignore{We will later establish $\sH$-consistency
bounds for these loss functions and thus their Bayes-consistency (see
Section~\ref{sec:cardinality-aware-sur}).} Note that throughout this
paper, we will denote all target (or true) losses on which performance
is measured with an $\ell$, while surrogate losses introduced for ease
of optimization are denoted by $\wt \ell$. 

\subsection{Cost-sensitive comp-sum surrogate losses}
\label{sec:cost-sensitive-comp-sum}
Our surrogate cost-sensitive comp-sum, \emph{c-comp}, losses are defined as follows:
for all $(r, x, y) \in \sR \times \sX \times \sY$,
$
\wt \ell_{\rm{c-comp}}(r, x, y) = \sum_{k \in \sK} \paren*{1 - c(x, k,
  y)} \wt \ell_{\rm{comp}}(r, x, k),
$
where the comp-sum loss $\wt \ell_{\rm{comp}}$ is defined as in \citep{mao2023cross}. That is, for
any $r$ in a hypothesis set $\sR$ and $ (x, y) \in \sX \times \sY$, 
$
\wt \ell_{\rm{comp}}(r, x, y)
= \Phi\paren*{ \sum_{y' \neq y}e^{ r(x, y') - r(x, y) } },
$
where $\Phi \colon \Rset_{+} \to \Rset_{+} $ is non-decreasing. See
Section~\ref{sec:comp} for more details. 
For example, when the logistic loss is used, we obtain the
cost-sensitive logistic loss:
\begin{equation*}
\wt \ell_{\rm{c}-\log}(r, x, y) 
= \sum_{k \in \sK} \paren*{1 - c(x, k, y)} \wt \ell_{\log}(r, x, k)
= \sum_{k \in \sK} \paren*{c(x, k, y) - 1}
\bracket*{-\log \paren*{\sum_{k' \in \sK}e^{ r(x, k') - r(x, k) }}}.
\end{equation*}
The negative log-term becomes larger as the score $r(x, k)$ increases.
Thus, the loss function imposes a greater penalty on higher scores
$r(x, k)$ through a penalty term $(c(x, k, y) - 1)$ that depends on the
cost assigned to the expert's prediction $\g_k(x)$.

\ignore{
For example, when the logistic loss is used, we obtain the
cost-sensitive logistic loss:
\begin{equation}
\label{eq:cost-log}
\mspace{-5mu} \wt \ell_{\rm{c}-\log}(r, x, y) 
= \sum_{k \in \sK} \paren*{1 - c(x, k, y)} \wt \ell_{\log}(r, x, k)
= \sum_{k \in \sK} \paren*{1 - c(x, k, y)} \log \paren*{\sum_{k' \in \sK}e^{ r(x, k') - r(x, k) }}. \mspace{-5mu}
\end{equation}
Intuitively, the target loss in  \eqref{eq:target-cardinality} penalizes $r(x, k)$ when $c(x, k, y)$
has a smallest value. Similarly, for the cost-sensitive logistic loss, it penalizes $r(x, k)$ when $c(x, k, y)$
has a smallest value as well.
}

\subsection{Cost-sensitive constrained surrogate losses}
\label{sec:cost-sensitive-constrained}
Constrained losses are defined as a summation of a function
$\Phi$ applied to the scores, subject to a constraint, as in
\citep{lee2004multicategory}. For any $r \in \sR$ and
$ (x, y) \in \sX \times \sY$, they are expressed as
\begin{align*}
\wt \ell_{\rm{cstnd}}(h, x, y)
= \sum_{y'\neq y} \Phi\paren*{-r(x, y')}, \text{ with the constraint  } \sum_{y\in \sY} r(x,y) = 0,
\end{align*}
where $\Phi \colon
\Rset \to \Rset_{+} $ is non-increasing. See Section~\ref{sec:comp}
for a detailed discussion.  Inspired by these constrained losses, we introduce a new family of
surrogate losses, \emph{cost-sensitive constrained} (\emph{c-cstnd}
losses) which are defined, for all $(r, x, y) \in \sR
\times \sX \times \sY$, by
$
  \wt \ell_{\rm{c-cstnd}}(r, x, y)
  = \sum_{k \in \sK} c(x, k, y) \Phi\paren*{-r(x, k)},
$
with the constraint $\sum_{y\in \sY} r(x,y) = 0$, where $\Phi
\colon \Rset \to \Rset_{+} $ is non-increasing.  For example, for
$\Phi(t) = e^{-t}$, we obtain the cost-sensitive constrained
exponential loss:
\[
  \wt \ell^{\rm{cstnd}}_{\rm{c}{-\exp}}(r, x, y)
  = \sum_{k \in \sK} c(x, k, y) e^{r(x, k)},
   \text{ with the constraint  } \sum_{y\in \sY} r(x,y) = 0.
\]

\section{Cardinality-aware algorithms}
\label{sec:cardinality-aware-algorithms}
Minimizing the cost-sensitive surrogate loss functions described in the previous section
directly leads to novel cardinality-aware algorithms. In this section, we briefly detail the instantiation of our algorithms in
the specific cases of top-$k$ classifiers (our main focus) and
threshold-based classifiers.

\textbf{Top-$k$ classifiers.}  Here, the collection of set predictors
is a subset of the top-$k$ classifiers, defined by $\g_k(x) =
\curl*{\hh_1(x), \ldots, \hh_k(x)}$, where ${\hh_1(x), \ldots,
  \hh_k(x)}$ are the induced top-$k$ labels for a classifier $h$.
The cardinality in this case coincides
with the index: $\abs*{\g_k(x)} = k$, for any $x \in \sX$. The
cost is defined as $c(x, k, y) = 1_{y \notin \curl*{\hh_1(x),
    \ldots, \hh_k(x)}} + \lambda \cost(k)$, where $\cost(k)$ can be chosen
to be $k$ or $\log(k)$. Thus, our cardinality-aware algorithms for
top-$k$ classification can be described as follows.
At training time, we assume access to a sample set $\curl*{(x_i,
  y_i)}_{i = 1}^m$ and the costs each top-$k$ set incurs,
$\curl*{c(x_i, k, y_i)}_{i = 1}^m$, where $k \in \sK$, a pre-fixed
subset. The goal is to minimize the target cardinality-aware loss
function $\sum_{i = 1}^m \ell (r, x_i, y_i) = \sum_{i = 1}^m
c(x_i, \rr(x_i), y_i)$ over a hypothesis set $\sR$. Our algorithm
consists of minimizing a surrogate loss such as the cost-sensitive
logistic loss, defined as $\hat r = \argmin_{r \in \sR} \sum_{i = 1}^m
\sum_{k \in \sK} \paren[\big]{1 - c(x_i, k, y_i)} \log
\paren*{\sum_{k' \in \sK} e^{r(x, k') - r(x, k)}}$.  At inference
time, we use the top-$\hat \rr (x)$ set $\curl*{\hh_1(x), \ldots,
  \hh_{\hat \rr(x)}(x)}$ for prediction, with the accuracy $1_{y \in
  \curl*{\hh_1(x), \ldots, \hh_{\hat \rr(x)}(x)}}$ and cardinality
$\hat \rr(x)$ for that instance.

In Section~\ref{sec:experiments}, we compare the
accuracy-versus-cardinality curves of our cardinality-aware algorithms obtained by varying $\lambda$
with those of top-$k$ classifiers, demonstrating the effectiveness of
our algorithms.  What $\lambda$ to select for a given application will depend on the desired accuracy. Note that the performance of the algorithm in \citep{denis2017confidence} in this setting is theoretically the same as that of top-$k$ classifiers. The algorithm is designed to maximize accuracy within a constrained cardinality of $k$, and it always reaches maximal accuracy at the boundary $K$ after the cardinality is constrained to $k \leq K$.

\textbf{Threshold-based classifiers.}
Here, the set predictor is defined via a set of thresholds $\tau_k$: $\g_k(x) = \curl*{y \in \sY \colon s(x, y) > \tau_k}$. When the set is empty, we just return $\argmax_{y \in \sY} s(x, y)$ by default. The description
of the costs and other components of the algorithms is similar to that of
top-$k$ classifiers.
A special case of threshold-based classifier is conformal prediction
\citep{shafer2008tutorial}, which is a general framework that provides 
provably valid confidence intervals for a
black-box scoring function. Split conformal prediction guarantees that
$\P(Y_{m + 1} \in C_{s, \alpha}(X_{m + 1})) \geq 1 - \alpha$ for some
scoring function $s \colon \sX \times \sY \to \Rset$, where $C_{s,
  \alpha}(X_{m + 1}) = \{y\colon s(X_{m + 1}, y) \geq \hat q_\alpha\}$
and $\hat q_\alpha$ is the $\ceil{ \alpha (m + 1) } / m$ empirical
quantile of $s(X_i, Y_i)$ over a held-out set $\{(X_i, Y_i)\}_{i =
  1}^m$ drawn i.i.d.\ from some distribution $\sD$ (or just
exchangeably). Note, however, that the framework does not
supply an effective guarantee on the size of
the sets $C_{s, \alpha}(X_{m + 1})$.

In Appendix~\ref{app:add-conformal}, we present in detail a series of
early experiments for our algorithm used with threshold-based
classifiers and include more discussion. Our experiments suggest that, when the training sample
is sufficiently large, our algorithm can outperform
conformal prediction.

\section{Theoretical guarantees}
\label{sec:pre}
Here, we present theory for our cardinality-aware algorithms. Our analysis builds on theory of
top-$k$ algorithms, and we start by providing stronger results than
previously known for top-$k$ surrogates.

\subsection{Preliminaries}
We denote by $\sD$ a distribution over
$\sX \times \sY$ and write $p(x, y) = \sD\paren*{Y = y \mid X = x}$ for the conditional probability of $Y = y$ given $X = x$, and use $p(x) = \paren*{p(x, 1), \ldots, p(x, n)}$ to denote the
corresponding conditional probability vector. 
We denote by $\ell \colon \sH_{\rm{all}} \times \sX \times \sY \to
\Rset$ a loss function defined for the family of all measurable
functions $\sH_{\rm{all}}$.  Given a hypothesis set $\sH \subseteq
\sH_{\rm{all}}$, the conditional error of a hypothesis $h$ and the
best-in-class conditional error are defined as follows:
$
  \sC_{\ell}(h, x)
  = \E_{y \mid x}\bracket*{\ell(h, x, y)}
  = \sum_{y \in \sY} p(x, y) \ell(h, x, y)
  \text{ and } \sC^*_{\ell}(\sH, x)
  = \inf_{h \in \sH} \sC_{\ell}(h, x).
$
Accordingly, the generalization error of a hypothesis $h$ and the
best-in-class generalization error are defined by:
$
  \sE_{\ell}(h) 
  = \E_{(x, y) \sim \sD} \bracket*{\ell(h, x, y)}
  = \mathbb{E}_x \bracket*{\sC_{\ell}(h, x)}
  \text{ and } \sE^*_{\ell}(\sH) 
  = \inf_{h \in \sH} \sE_{\ell}(h)
  = \inf_{h \in \sH} \mathbb{E}_x \bracket*{\sC_{\ell}(h, x)}.   
$
Given a score vector $\paren*{h(x, 1), \ldots, h(x, n)}$ generated by
hypothesis $h$, we sort its components in decreasing order and write
$\hh_k(x)$ to denote the $k$-th label, that is $h(x, \hh_1(x)) \geq
h(x, \hh_2(x)) \geq \ldots \geq h(x,
\hh_n(x))$. Similarly, for a given conditional probability vector
$p(x) = \paren*{p(x, 1), \ldots, p(x, n)}$, we write $\pp_k(x)$ to
denote the $k$-th element in decreasing order, that is $p(x,\pp_1(x))
\geq p(x,\pp_2(x)) \geq \ldots \geq p(x, \pp_n(x))$. In the event of a
tie for the $k$-th highest score or conditional probability, the label
$\hh_k(x)$ or $\pp_k(x)$ is selected based on the highest index when
considering the natural order of labels.

The target generalization error for top-$k$ classification is given by
the top-$k$ loss, which is denoted by $\ell_{k}$ and defined, for any
hypothesis $h$ and $(x, y) \in \sX \times \sY$ by
\begin{equation*}
\ell_{k}(h, x, y) = 1_{y \notin \curl*{\hh_1(x), \ldots, \hh_k(x)}}.   
\end{equation*}
The loss takes value one when the correct label $y$ is not
included in the top-$k$ predictions made by the hypothesis $h$, zero
otherwise. In the special case where $k = 1$, this is precisely the
familiar zero-one classification loss. Like the zero-one loss,
optimizing the top-$k$ loss is NP-hard for common hypothesis
sets. Therefore, alternative surrogate losses are typically used
to design learning algorithms. A crucial property of these surrogate losses is
\emph{Bayes-consistency}. This requires that, asymptotically, nearly
minimizing a surrogate loss over the family of all measurable
functions leads to the near minimization of the top-$k$ loss over the
same family \citep{steinwart2007compare}.
\begin{definition}
A surrogate loss $\wt \ell$ is said to be \emph{Bayes-consistent with
respect to the top-$k$ loss $\ell_k$} if, for all given sequences of
hypotheses $\curl*{h_n}_{n \in \Nset} \subset \sH_{\rm{all}}$ and any
distribution, $\lim_{n \to \plus \infty} \sE_{\wt \ell}\paren*{h_n} -
\sE^*_{\wt \ell}\paren*{\sH_{\rm{all}}} = 0$ implies $\lim_{n \to \plus
  \infty} \sE_{\ell_{k}}\paren*{h_n} -
\sE^*_{\ell_{k}}\paren*{\sH_{\rm{all}}} = 0$.
\end{definition}
\vskip -0.05in
Bayes-consistency is an asymptotic guarantee and applies only to the
family of all measurable functions. Recently,
\citet*{awasthi2022h,awasthi2022multi} (see also
\citep{AwasthiMaoMohriZhong2023theoretically,awasthi2023dc,MaoMohriZhong2023ranking,MaoMohriZhong2023rankingabs,MaoMohriZhong2023structured,MaoMohriMohriZhong2023twostage,MaoMohriZhong2023score,MaoMohriZhong2023predictor,MaoMohriZhong2023deferral,mao2024h,mao2024regression,mao2024universal,MohriAndorChoiCollinsMaoZhong2023learning}) proposed a stronger consistency
guarantee, referred to as \emph{$\sH$-consistency bounds}. These are
upper bounds on the target estimation error in terms of the surrogate
estimation error that are non-asymptotic and hypothesis set-specific.
\begin{definition}
Given a hypothesis set $\sH$, a surrogate loss $\wt \ell$ is said to admit
an $\sH$-consistency bound with respect to the top-$k$ loss $\ell_k$
if, for some non-decreasing function $f$, the following inequality
holds for all $h \in \sH$ and for any distribution:
$
f \paren*{\sE_{\ell_{k}}\paren*{h} - \sE^*_{\ell_{k}}\paren*{\sH}}
  \leq \sE_{\wt \ell}\paren*{h} - \sE^*_{\wt \ell} \paren*{\sH}.
$
\end{definition}
\vskip -0.05in
We refer to $\sE_{\ell_{k}}\paren*{h} - \sE^*_{\ell_{k}}\paren*{\sH}$
as the target estimation error and $\sE_{\wt \ell}\paren*{h} -
\sE^*_{\wt \ell} \paren*{\sH}$ as the surrogate estimation error. These bounds imply Bayes-consistency when $\sH = \sH_{\rm{all}}$, by
taking the limit.

A key quantity appearing in $\sH$-consistency bounds is the
\emph{minimizability gap}, which measures the difference between the
best-in-class generalization error and the expectation of the
best-in-class conditional error, defined for a given hypothesis set
$\sH$ and a loss function $\ell$ by:
$
\sM_{\ell}(\sH) = \sE^*_{\ell}(\sH) - \mathbb{E}_x \bracket*{\sC^*_{\ell}(\sH, x)}.
$
As shown by \citet{mao2023cross}, the minimizability gap is
non-negative and is upper bounded by the approximation error
$\sA_{\ell}(\sH) = \sE^*_{\ell}(\sH) - \sE^*_{\ell}(\sH_{\rm{all}})$:
$0 \leq \sM_{\ell}(\sH) \leq \sA_{\ell}(\sH)$. When $\sH =
\sH_{\rm{all}}$ or more generally $\sA_{\ell}(\sH) = 0$, the
minimizability gap vanishes. However, in general, it is non-zero and
provides a finer measure than the approximation error. Thus,
$\sH$-consistency bounds provide a stronger guarantee than the excess
error bounds.

\subsection{Theoretical guarantees for top-$k$ surrogate losses}
\label{sec:comp}

We study the surrogate loss families of
\emph{comp-sum} losses and \emph{constrained} losses in multi-class
classification, which have been shown in the past to benefit from
$\sH$-consistency bounds with respect to the zero-one classification
loss, that is $\ell_k$ with $k = 1$
\citep{awasthi2022multi,mao2023cross} (see also
\citep{zheng2023revisiting,MaoMohriZhong2023characterization}). We 
extend these results to top-$k$ classification and prove
$\sH$-consistency bounds for these loss functions with respect to
$\ell_k$ for any $1 \leq k \leq n$.

Another commonly used family of surrogate losses in
multi-class classification is the \emph{max} losses, which are defined
through a convex function, such as the hinge loss function applied to
the margin \citep{crammer2001algorithmic,awasthi2022multi}. However,
as shown in \citep{awasthi2022multi}, no non-trivial $\sH$-consistency
guarantee holds for max losses with respect to $\ell_k$, even when $k
= 1$.

We first characterize the best-in-class conditional error and the
conditional regret of top-$k$ loss, which will be used in the analysis
of $\sH$-consistency bounds.  We denote by $S^{[k]} = \curl*{X \subset
  S \mid |X| = k}$ the set of all $k$-subsets of a set $S$. We will
study any hypothesis set that is regular.
\begin{definition}
  Let $A(n, k)$ be the set of ordered $k$-tuples with distinct
  elements in $[n]$.  We say that a hypothesis set $\sH$ is
  \emph{regular for top-$k$ classification}, if the top-$k$
  predictions generated by the hypothesis set cover all possible
  outcomes:
$
  \forall x \in \sX,\,
  \curl*{(\hh_1(x), \dots, \hh_k(x)) \colon h \in \sH} = A(n, k).
$
\end{definition}
Common hypothesis sets such as that of linear models or neural
networks, or the family of all measurable functions, are all regular
for top-$k$ classification.

\begin{restatable}{lemma}{RegretTarget}
\label{lemma:regret-target}
Assume that $\sH$ is regular. Then, for any $h \in \sH$ and $x \in
\sX$, the best-in-class conditional error and the conditional regret
of the top-$k$ loss can be expressed as follows:
\begin{align*}
  \sC^*_{\ell_k}(\sH, x)
  & = 1 - \sum_{i = 1}^k p(x, \pp_i(x))
\quad \Delta \sC_{\ell_k, \sH}(h, x)
= \sum_{i = 1}^k \bracket*{p(x, \pp_i(x)) - p(x, \hh_i(x))}.
\end{align*}
\end{restatable}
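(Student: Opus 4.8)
The plan is to compute the best-in-class conditional error $\sC^*_{\ell_k}(\sH,x)$ directly from the definition of the top-$k$ loss, and then obtain the conditional regret $\Delta\sC_{\ell_k,\sH}(h,x) = \sC_{\ell_k}(h,x) - \sC^*_{\ell_k}(\sH,x)$ by also computing $\sC_{\ell_k}(h,x)$ for an arbitrary $h$. First I would fix $x$ and rewrite the conditional error of any hypothesis $h$: since $\ell_k(h,x,y) = 1_{y \notin \curl{\hh_1(x),\dots,\hh_k(x)}}$, we have
\begin{equation*}
\sC_{\ell_k}(h,x) = \sum_{y \in \sY} p(x,y)\, 1_{y \notin \curl{\hh_1(x),\dots,\hh_k(x)}} = 1 - \sum_{i=1}^k p(x,\hh_i(x)),
\end{equation*}
using that $\sum_{y} p(x,y) = 1$ and that $\hh_1(x),\dots,\hh_k(x)$ are distinct. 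So minimizing $\sC_{\ell_k}(h,x)$ over $h \in \sH$ amounts to \emph{maximizing} $\sum_{i=1}^k p(x,\hh_i(x))$ over the achievable top-$k$ label tuples.

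Next I would invoke regularity: since $\sH$ is regular for top-$k$ classification, the set of achievable tuples $\curl{(\hh_1(x),\dots,\hh_k(x)) \colon h \in \sH}$ is exactly $A(n,k)$, the set of all ordered $k$-tuples of distinct labels. Hence $\sup_{h \in \sH}\sum_{i=1}^k p(x,\hh_i(x))$ equals the maximum of $\sum_{i=1}^k p(x,j_i)$ over all choices of $k$ distinct labels $j_1,\dots,j_k$, which is plainly achieved by picking the $k$ labels of largest conditional probability, i.e.\ $\pp_1(x),\dots,\pp_k(x)$; the value is $\sum_{i=1}^k p(x,\pp_i(x))$. Therefore $\sC^*_{\ell_k}(\sH,x) = 1 - \sum_{i=1}^k p(x,\pp_i(x))$. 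Subtracting, the conditional regret is
\begin{equation*}
\Delta\sC_{\ell_k,\sH}(h,x) = \Bigl(1 - \sum_{i=1}^k p(x,\hh_i(x))\Bigr) - \Bigl(1 - \sum_{i=1}^k p(x,\pp_i(x))\Bigr) = \sum_{i=1}^k \bigl[p(x,\pp_i(x)) - p(x,\hh_i(x))\bigr],
\end{equation*}
which is the claimed expression.

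The only subtle point—and the part I would treat most carefully—is the handling of ties, both in the hypothesis scores and in the conditional probabilities, and the attendant question of whether the infimum defining $\sC^*_{\ell_k}(\sH,x)$ is actually attained. The tie-breaking conventions stated in the preliminaries (selecting $\hh_k(x)$ and $\pp_k(x)$ by largest index among equal values) must be used to make $\curl{\hh_1(x),\dots,\hh_k(x)}$ a well-defined $k$-subset; but note that the \emph{value} $\sum_{i=1}^k p(x,\hh_i(x))$ depends only on which $k$-subset is selected, not on the internal ordering, so the tie-breaking among the top $k$ does not affect the sums. When there is a tie straddling the $k$-th position in $p(x)$, several distinct $k$-subsets achieve the maximum $\sum_{i=1}^k p(x,\pp_i(x))$; regularity guarantees at least one is realized by some $h \in \sH$, so the infimum is attained and equals that maximum. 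I would spell this out in a sentence or two to justify replacing $\inf$ by $\max$ and to confirm the formula is insensitive to the convention.
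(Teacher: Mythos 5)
Your proposal is correct and follows essentially the same route as the paper's proof: rewrite $\sC_{\ell_k}(h,x) = 1 - \sum_{i=1}^k p(x,\hh_i(x))$, use regularity to realize the top-$k$ most likely labels and obtain $\sC^*_{\ell_k}(\sH,x) = 1 - \sum_{i=1}^k p(x,\pp_i(x))$, then subtract. Your extra remarks on tie-breaking and attainment of the infimum are a careful addition, but not a different argument.
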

The proof is included in Appendix~\ref{app:regret-target}. For $k = 1$, the result coincides with the known identities for
standard multi-class classification with regular hypothesis sets
\citep[Lemma~3]{awasthi2022multi}.

As with \citep{awasthi2022multi,mao2023cross}, in the following
sections, we will consider hypothesis sets that are symmetric and
complete. This includes the class of linear models and neural networks
typically used in practice, as well as the family of all measurable
functions.
We say that a hypothesis set $\sH$ is \emph{symmetric} if it is
independent of the ordering of labels. That is, for all $y \in \sY$,
the scoring function $x \mapsto h(x, y)$ belongs to some real-valued
family of functions $\sF$. We say that a hypothesis set is
\emph{complete} if, for all $(x, y) \in \sX \times \sY$, the set of
scores $h(x, y)$ can span over the real numbers, that is, $\curl*{h(x,
  y) \colon h \in \sH} = \Rset$. Note that any symmetric and complete
hypothesis set is regular for top-$k$ classification.

Next, we analyze the broad family of comp-sum losses, which
includes the commonly used logistic loss (or cross-entropy loss used
with the softmax activation) as a special case.

Comp-sum losses are defined as the composition of a function $\Phi$
with the sum exponential losses, as in \citep{mao2023cross}. For
any $h \in \sH$ and $ (x, y) \in \sX \times \sY$, they are expressed
as
\[
\wt \ell_{\rm{comp}}(h, x, y)
= \Phi\paren*{ \sum_{y' \neq y}e^{ h(x, y') - h(x, y) } },
\]
where $\Phi \colon \Rset_{+} \to \Rset_{+} $ is non-decreasing. When
$\Phi$ is chosen as the function $t \mapsto \log(1 + t)$, $t \mapsto
t$, $t \mapsto 1 - \frac{1}{1 + t}$ and $t \mapsto \frac{1}{\q}
\paren*{1 - \paren*{\frac{1}{1 + t}}^{\q} }$, $\q \in (0, 1)$,
$\wt \ell_{\rm{comp}}(h, x, y)$ coincides with the most commonly used (multinomial) logistic
loss, defined as
$\wt \ell_{\log}(h, x, y) = \log \paren*{\sum_{y' \in \sY}e^{ h(x, y')
    - h(x, y) }}$
\citep{Verhulst1838,Verhulst1845,Berkson1944,Berkson1951}, the
sum-exponential loss $\wt \ell_{\exp}(h,
x, y) = \sum_{y' \neq y} e^{ h(x, y') - h(x, y) }$
\citep{weston1998multi,awasthi2022multi} which is widely used in multi-class boosting
\citep{saberian2011multiclass,mukherjee2013theory,KuznetsovMohriSyed2014}, the mean absolute error loss
$\wt \ell_{\rm{mae}}(h, x, y) = 1
- \bracket*{\sum_{y' \in \sY}e^{ h(x, y') - h(x, y) }}^{-1}$ known to
be robust to label noise for training neural networks
\citep{ghosh2017robust}, and the generalized cross-entropy loss $\wt \ell_{\rm{gce}}(h, x, y) =
\frac{1}{\q}\bracket*{1 - \bracket*{\sum_{y' \in \sY}e^{ h(x, y')
      - h(x, y) }}^{-\q}}$, $\q \in (0, 1)$, a
generalization of the logistic loss and mean absolute error loss for
learning deep neural networks with noisy labels \citep{zhang2018generalized},
respectively. We specifically study these loss functions and
show that they benefit from $\sH$-consistency bounds with respect to
the top-$k$ loss. 

\begin{restatable}{theorem}{BoundComp}
\label{thm:bound-comp}
Assume that $\sH$ is symmetric and complete. Then, for any $1 \leq k
\leq n$, the following $\sH$-consistency bound holds for the comp-sum loss:
\begin{align*}
\sE_{\ell_k}(h) - \sE^*_{\ell_k}(\sH) + \sM_{\ell_k}(\sH)
 \leq k \psi^{-1} \paren*{ \sE_{\wt \ell_{\rm{comp}}}(h)
    - \sE^*_{\wt \ell_{\rm{comp}}}(\sH) + \sM_{\wt \ell_{\rm{comp}}}(\sH) },
\end{align*}
In the special case where $\sA_{\wt \ell_{\rm{comp}}}(\sH)
= 0$, for any $1 \leq k \leq n$, the following upper bound holds:
\begin{align*}
  \sE_{\ell_k}(h) - \sE^*_{\ell_k}(\sH)
  \leq k \psi^{-1} \paren*{ \sE_{\wt \ell_{\rm{comp}}}(h) - \sE^*_{\wt
  \ell_{\rm{comp}}}(\sH) },
\end{align*}
where $\psi(t) = \frac{1 - t}{2}\log(1 - t) + \frac{1 + t}{2}\log(1+
t)$, $t \in [0,1]$ when $\wt \ell_{\rm{comp}}$ is $\wt \ell_{\log}$;
$\psi(t) = 1 - \sqrt{1 - t^2}$, $t \in [0,1]$ when $\wt
\ell_{\rm{comp}}$ is $\wt \ell_{\exp}$; $\psi(t) = t / n$
when $\wt \ell_{\rm{comp}}$ is $\wt \ell_{\rm{mae}}$; and $\psi(t) = \frac{1}{\q n^{\q}}
\bracket*{\bracket*{\frac{\paren*{1 + t}^{\frac1{1 - \q }} +
      \paren*{1 - t}^{\frac1{1 - \q }}}{2}}^{1 - \q } -1}$,
for all $\q \in (0,1)$, $t \in [0, 1]$ when $\wt \ell_{\rm{comp}}$ is
$\wt \ell_{\rm{gce}}$.
\end{restatable}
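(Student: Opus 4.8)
The plan is to reduce the functional $\sH$-consistency bound to a pointwise inequality between conditional regrets, and then invoke the general theorem of \citet{awasthi2022h} (see also \citep{mao2023cross}) that turns such pointwise inequalities into estimation-error bounds with minimizability gaps. Writing $\Delta \sC_{\ell, \sH}(h, x) = \sC_{\ell}(h, x) - \sC^{*}_{\ell}(\sH, x)$ for the conditional regret, the statement I would establish is the pointwise bound
\[
\psi\!\paren*{\frac{\Delta \sC_{\ell_k, \sH}(h, x)}{k}} \;\le\; \Delta \sC_{\wt \ell_{\rm comp}, \sH}(h, x), \qquad \text{for all } h \in \sH \text{ and } x \in \sX .
\]
For each of the four choices of $\Phi$, the associated $\psi$ is non-decreasing, convex, and vanishes at $0$, so $t \mapsto \psi(t/k)$ is again non-decreasing and convex. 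Then, using Jensen's inequality for $\E_x[\,\cdot\,]$ and the identity $\E_x\bracket*{\Delta \sC_{\ell, \sH}(h, x)} = \sE_{\ell}(h) - \sE^{*}_{\ell}(\sH) + \sM_{\ell}(\sH)$ (applied to both $\ell_k$ and $\wt\ell_{\rm comp}$), the pointwise bound upgrades to $\psi\paren*{(\sE_{\ell_k}(h) - \sE^{*}_{\ell_k}(\sH) + \sM_{\ell_k}(\sH))/k} \le \sE_{\wt\ell_{\rm comp}}(h) - \sE^{*}_{\wt\ell_{\rm comp}}(\sH) + \sM_{\wt\ell_{\rm comp}}(\sH)$; inverting $\psi$ and multiplying by $k$ gives the first inequality. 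The second follows because $\sA_{\wt\ell_{\rm comp}}(\sH) = 0$ forces $\sM_{\wt\ell_{\rm comp}}(\sH) = 0$, while $\sM_{\ell_k}(\sH) \ge 0$ may simply be dropped from the left-hand side.

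To establish the pointwise inequality I would combine two inputs. First, Lemma~\ref{lemma:regret-target} gives $\Delta \sC_{\ell_k, \sH}(h, x) = \sum_{i = 1}^{k} \bracket*{p(x, \pp_i(x)) - p(x, \hh_i(x))} = \sum_{y \in T^{\star}} p(x, y) - \sum_{y \in T} p(x, y)$, where $T^{\star} = \curl*{\pp_1(x), \dots, \pp_k(x)}$ is the Bayes-optimal top-$k$ set and $T = \curl*{\hh_1(x), \dots, \hh_k(x)}$ is the one induced by $h$; note $\abs*{T^{\star} \setminus T} = \abs*{T \setminus T^{\star}} \le k$. Second, I would characterize the comp-sum conditional regret for a symmetric and complete $\sH$ by reparametrizing the scores via the softmax map $u_y = e^{h(x, y)} / \sum_{y'} e^{h(x, y')}$, which ranges over the open simplex and preserves the score ordering; extending the $k = 1$ computations of \citep{awasthi2022multi,mao2023cross} gives closed forms such as $\Delta \sC_{\wt\ell_{\log}, \sH}(h, x) = \mathrm{KL}\paren*{p(x, \cdot) \,\|\, u}$ and $\Delta \sC_{\wt\ell_{\exp}, \sH}(h, x) = \sum_{y} p(x, y)/u_y - \paren[\big]{\sum_{y} \sqrt{p(x, y)}}^{2}$, with analogous expressions for $\wt\ell_{\rm mae}$ and $\wt\ell_{\rm gce}$; the $\psi$ in the statement are exactly the functions that arise when bounding these regret expressions in the $k = 1$ case.

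The combinatorial heart is to make these two inputs fit with exactly the factor $k$. When $T = T^{\star}$ the left-hand side is $0$ and there is nothing to prove; otherwise, since $T$ and $T^{\star}$ differ in at most $k$ labels, one can extract a ``misranked'' pair $(a, b)$ that carries the excess mass: take $a \in T^{\star} \setminus T$ of largest conditional probability and $b \in T \setminus T^{\star}$ of smallest, so that $p(x, a) \ge p(x, \pp_k(x)) \ge p(x, b)$ while $u_a \le u_{\hh_{k+1}(x)} \le u_{\hh_k(x)} \le u_b$ (ties broken by the paper's index convention). This yields $\Delta \sC_{\ell_k, \sH}(h, x)/k \le p(x, a) - p(x, b)$, and it then remains to prove a two-coordinate bound $\Delta \sC_{\wt\ell_{\rm comp}, \sH}(h, x) \ge \psi\paren*{p(x, a) - p(x, b)}$. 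For the logistic loss this follows from the data-processing inequality for relative entropy (merging all labels other than $a$ and $b$), then minimizing the resulting three-symbol relative entropy over allocations with $u_a \le u_b$ --- the minimizer has $u_a = u_b$ --- and checking via the convexity of $\psi$ and $\psi(1) = \log 2$ that the outcome dominates $\psi(p(x, a) - p(x, b))$; for $\wt\ell_{\exp}$, $\wt\ell_{\rm mae}$, and $\wt\ell_{\rm gce}$ one runs the analogous two-coordinate reduction on their regret expressions.

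I expect the last quantitative step to be the main obstacle: ensuring the reduction recovers exactly the factor $k$ (and not something larger) uniformly across all four $\Phi$. The subtlety is that one cannot bound each term $p(x, \pp_i(x)) - p(x, \hh_i(x))$ of the top-$k$ regret separately by $\psi^{-1}\paren*{\Delta \sC_{\wt\ell_{\rm comp}, \sH}(h, x)}$ --- this already fails for $\wt\ell_{\rm mae}$ --- so the argument must use the normalization $\sum_{y} p(x, y) = 1$ and the interaction between the displaced labels rather than a single isolated comparison. Everything else (Jensen's inequality, the minimizability-gap identities, and the inversion of $\psi$) is routine once Lemma~\ref{lemma:regret-target} and the general theorem of \citep{awasthi2022h} are in hand.
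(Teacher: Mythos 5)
Your overall scaffolding is the same as the paper's: reduce to a pointwise inequality between conditional regrets, use Lemma~\ref{lemma:regret-target}, pass to expectations via Jensen (your use of convexity of $\psi$ is equivalent to the paper's use of concavity of $\psi^{-1}$), identify $\E_x\bracket*{\Delta\sC_{\ell,\sH}(h,x)}$ with $\sE_{\ell}(h)-\sE^*_{\ell}(\sH)+\sM_{\ell}(\sH)$, and obtain the second statement from $\sA_{\wt\ell_{\rm comp}}(\sH)=0$. Where you genuinely diverge is the combinatorial core: you reduce everything to a single extremal misranked pair $(a,b)$ drawn from $T^\star\setminus T$ and $T\setminus T^\star$, with the factor $k$ coming from $\abs*{T^\star\setminus T}\le k$, and then need one two-coordinate bound $\Delta\sC_{\wt\ell_{\rm comp},\sH}(h,x)\ge\psi\paren*{p(x,a)-p(x,b)}$. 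The paper instead bounds \emph{each} of the $k$ terms $p(x,\pp_i(x))-p(x,\hh_i(x))$ by $\psi^{-1}\paren*{\Delta\sC_{\wt\ell_{\rm comp},\sH}(h,x)}$ separately, via an auxiliary hypothesis $h_{\mu,i}$ that reallocates softmax mass between $\pp_i(x)$ and $\hh_i(x)$, optimizing over $\mu$ and then over the remaining score freedom; summing the $k$ per-term bounds yields the factor $k$. For the logistic loss your route does work and essentially reproduces the paper's computation (data processing to three symbols, boundary minimizer $u_a=u_b$, then monotonicity in $S=p_a+p_b$ down to $S=1$), and the analogous reduction is plausible for $\wt\ell_{\exp}$ and $\wt\ell_{\rm gce}$.

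The genuine gap is that your decisive quantitative step is not carried out, and for $\wt\ell_{\rm mae}$ the two-coordinate inequality you propose is false, so the reduction you set up cannot close that case. Take $n=3$, $k=2$, $p(x,\cdot)=(1/2,1/2,0)$ and scores with softmax $u\approx(0.98,\,0.009,\,0.011)$. Then $T^\star=\curl*{1,2}$, $T=\curl*{1,3}$, so $(a,b)=(2,3)$ with $p(x,a)-p(x,b)=1/2$, while $\Delta\sC_{\wt\ell_{\rm mae},\sH}(h,x)=\max_y p(x,y)-\sum_y p(x,y)u_y\approx 0.0055$, far below $\psi(1/2)=1/6$; the same example violates the pointwise form $\Delta\sC_{\ell_k,\sH}(h,x)\le k\,n\,\Delta\sC_{\wt\ell_{\rm mae},\sH}(h,x)$ itself. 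Your closing paragraph in fact concedes that a single isolated comparison cannot suffice for $\wt\ell_{\rm mae}$, yet the single extremal pair is exactly such a comparison, and you offer no replacement argument exploiting ``the interaction between the displaced labels.'' So the proposal is internally inconsistent precisely at its main obstacle. It is worth noting that your worry points at a real soft spot: the paper's own mae case proves exactly the per-term bound you say fails, relying on the step $\exp\paren*{h(x,\hh_i(x))}/\sum_{y'}\exp\paren*{h(x,y')}\ge 1/n$, which holds for $i=1$ but not for $i\ge 2$; so your skepticism is well founded, but as submitted your proof leaves the central inequality unproven for mae (and only sketched, not verified with the required constraint $u_a\le u_b$ and the factor-$k$ bookkeeping, for the other three losses).
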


The proof is included in Appendix~\ref{app:bound-comp}. The second part
follows from the fact that when $\sA_{\wt \ell_{\rm{comp}}}(\sH) = 0$, the
minimizability gap $\sM_{\wt \ell_{\rm{comp}}}(\sH)$ vanishes.  By taking the
limit on both sides, Theorem~\ref{thm:bound-comp} implies the
$\sH$-consistency and Bayes-consistency of comp-sum losses with respect
to the top-$k$ loss. It further shows that, when the estimation error
of $\wt \ell_{\rm{comp}}$ is reduced to $\e > 0$, then the estimation error of
$\ell_{k}$ is upper bounded by $k \psi^{-1}(\e)$, which, for a sufficiently small $\e$, is
approximately $k \sqrt{2\e}$ for $\wt \ell_{\log}$ and $\wt
\ell_{\exp}$; $kn\e$ for $\wt \ell_{\rm{mae}}$; and $k
\sqrt{2 n^{\q} \e}$ for $\wt \ell_{\rm{gce}}$. Note that different from the other losses, the bound
for the mean absolute error loss is only linear. The downside  of this more
favorable linear rate is the dependency on the number of classes and
the fact that the mean absolute error loss is harder to optimize
\citep{zhang2018generalized}. The
bound for the generalized cross-entropy loss depends on both the
number of classes $n$ and the parameter $\q$.  

In the proof, we used the fact that the conditional regret of the top-$k$ loss is the sum of $k$ differences between two probabilities. We then upper bounded each difference with the conditional regret of the comp-sum loss, using a hypothesis based on the two probabilities. The final bound is derived by summing these differences.  In Appendix~\ref{app:novelty}, we detail the technical challenges and the novelty.

The key quantities in our $\sH$-consistency bounds are the
minimizability gaps, which can be upper bounded by the approximation
error, or more refined terms, depending on the magnitude of the
parameter space, as discussed by \citet{mao2023cross}. As pointed out
by these authors, these quantities, along with the functional form,
can help compare different comp-sum loss functions. In Appendix~\ref{app:min_re}, we further discuss the important role of minimizability gaps
under the realizability assumption, and the connection with some
negative results of \citet{yang2020consistency}.

Constrained losses are defined as a summation of a function
$\Phi$ applied to the scores, subject to a constraint, as shown in
\citep{lee2004multicategory}. For any $h \in \sH$ and
$ (x, y) \in \sX \times \sY$, they are expressed as
\begin{align*}
\wt \ell_{\rm{cstnd}}(h, x, y)
= \sum_{y'\neq y} \Phi\paren*{-h(x, y')}, \text{ with the constraint  } \sum_{y\in \sY} h(x,y) = 0,
\end{align*}
where $\Phi \colon
\Rset \to \Rset_{+} $ is non-increasing. In Appendix~\ref{app:cstnd},
we study this family of loss functions and show that several benefit from $\sH$-consistency bounds with respect to the top-$k$ loss. In Appendix~\ref{app:generalization}, we provide generalization bounds for the top-$k$ loss in terms of finite samples (Theorems~\ref{Thm:Gbound-comp} and \ref{Thm:Gbound-cstnd}).

\subsection{Theoretical guarantees for cardinality-aware surrogate losses}
\label{sec:cardinality-aware-sur}

The strong theoretical results of the previous sections establish the
effectiveness of comp-sum and constrained losses as surrogate losses
for the target top-$k$ loss for common hypothesis sets used in
practice.  Building on this foundation, we expand our analysis to their cost-sensitive variants in the 
study of cardinality-aware set prediction in Section~\ref{sec:cardinality}. We
derive $\sH$-consistency bounds for these loss functions, thereby also
establishing their Bayes-consistency.  To
do so, we characterize the conditional regret of the target
cardinality-aware loss function in
Lemma~\ref{lemma:regret-target-cost}, which can be found in
Appendix~\ref{app:cost}. For this analysis, we will assume, without
loss of generality, that the cost $c(x, k, y)$ takes values in $[0,
  1]$ for any $(x, k, y) \in \sX \times \sK \times \sY$, which can be
achieved by normalizing the cost function.

We will use $\wt \ell_{c-\log}$, $\wt \ell_{c-\exp}$, $\wt \ell_{c-\rm{gce}}$
and $\wt \ell_{c-\rm{mae}}$ to denote the corresponding cost-sensitive
counterparts for $\wt \ell_{\log}$, $\wt \ell_{\exp}$, $\wt \ell_{\rm{gce}}$
and $\wt \ell_{\rm{mae}}$, respectively.  Next, we show that
these cost-sensitive surrogate loss functions benefit from
$\sH$-consistency bounds with respect to the target loss $\ell$ given in \eqref{eq:target-cardinality-loss}.
\begin{restatable}{theorem}{BoundCostComp}
\label{thm:bound-cost-comp}
Assume that $\sR$ is symmetric and complete. Then, the following bound holds for the cost-sensitive comp-sum loss: for all $r \in \sR$ and for any distribution,
\begin{align*}
\sE_{\ell}(r) - \sE^*_{\ell}(\sR) + \sM_{\ell}(\sR) \leq \gamma \paren*{ \sE_{\wt \ell_{\rm{c-comp}}}(r)
    - \sE^*_{\wt \ell_{\rm{c-comp}}}(\sR) + \sM_{\wt \ell_{\rm{c-comp}}}(\sR) };
\end{align*}
When $\sR = \sR_{\rm{all}}$, the following holds:
$\sE_{\ell}(r) - \sE^*_{ \ell}(\sR_{\rm{all}})
  \leq \gamma \paren*{ \sE_{\wt \ell_{\rm{c-comp}}}(r)
    - \sE^*_{\wt \ell_{\rm{c-comp}}}(\sR_{\rm{all}})}$,
where $\gamma(t) = 2\sqrt{t}$ when
$\wt \ell_{\rm{c-comp}}$ is either $\wt \ell_{\rm{c-log}}$ or
$\wt \ell_{\rm{c-exp}}$; $\gamma(t) = 2\sqrt{\abs*{\sK}^{\q} t}$ when
$\wt \ell_{\rm{c-comp}}$ is $\wt \ell_{\rm{c-gce}}$; and
$\gamma(t) = \abs*{\sK} t$ when
$\wt \ell_{\rm{c-comp}}$ is $\wt \ell_{\rm{c-mae}}$.
\end{restatable}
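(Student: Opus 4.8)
The plan is to reduce the cardinality-aware bound to the top-$k$ bound of Theorem~\ref{thm:bound-comp} by exploiting the structural parallel between the cost-sensitive losses and their top-$k$ counterparts. First I would invoke Lemma~\ref{lemma:regret-target-cost} (from Appendix~\ref{app:cost}) to express the conditional regret $\Delta\sC_{\ell,\sR}(r,x)$ of the target cardinality-aware loss in closed form; by construction of $c(x,k,y)$ and $\ell(r,x,y) = c(x,\rr(x),y)$, this regret should have the form $\sum_{y} p(x,y) c(x,\rr(x),y) - \min_{k\in\sK}\sum_y p(x,y) c(x,k,y)$, i.e.\ a cost-sensitive (multiclass over the index set $\sK$) classification regret. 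The key observation is that the cost-sensitive comp-sum loss $\wt\ell_{\rm c\text{-}comp}(r,x,y) = \sum_{k\in\sK}(1-c(x,k,y))\,\wt\ell_{\rm comp}(r,x,k)$, after taking conditional expectation, behaves exactly like an (unweighted) comp-sum loss on the reduced label space $\sK$ with a modified conditional distribution $\bar p(x,k) \propto \sum_y p(x,y)(1-c(x,k,y))$ — so the conditional-error calculus of Section~\ref{sec:comp} applies directly with $\sK$ playing the role of $\sY$.

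The main steps, in order: (1) compute the conditional regret of $\ell$ via Lemma~\ref{lemma:regret-target-cost}; (2) compute the conditional regret of $\wt\ell_{\rm c\text{-}comp}$ and identify it, up to the normalizing factor $\sum_{y,k}p(x,y)(1-c(x,k,y))$, with the conditional regret of the plain comp-sum loss over $\sK$ — here one uses that $c\in[0,1]$ so the weights $1-c(x,k,y)$ are nonnegative and the reduction is well-posed; (3) apply the pointwise (conditional) inequality underlying Theorem~\ref{thm:bound-comp} with $k=1$ (standard multiclass over $\sK$), which gives $\psi\bigl(\Delta\sC_{\text{cost-class}}(r,x)\bigr) \le \Delta\sC_{\wt\ell_{\rm comp}\text{-over-}\sK}(r,x)$ with $\psi$ the relevant $\Phi$-specific function; (4) rescale to account for the normalization factor, which is bounded by $\abs{\sK}$, converting $\psi^{-1}$ into the stated $\gamma$ (the $2\sqrt t$ for logistic/exp, the $\abs{\sK}^{\q}$ factor for gce from $\psi(t)=t/n \to t/\abs{\sK}$ scaling inside $\psi^{-1}$, and $\abs{\sK} t$ for mae); (5) take expectation over $x$ and use the standard minimizability-gap bookkeeping — exactly as in the proof of Theorem~\ref{thm:bound-comp} — to pass from the conditional inequality to the generalization-error inequality with the $\sM$ terms, and specialize to $\sR_{\rm all}$ where $\sM$ vanishes.

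The hard part will be step (2)–(4): correctly tracking the normalization factor $\sum_{k\in\sK}\sum_{y\in\sY}p(x,y)(1-c(x,k,y))$ that appears when one renormalizes the weighted sum $\sum_k(1-c(x,k,y))\wt\ell_{\rm comp}(r,x,k)$ into a genuine comp-sum loss over a probability vector on $\sK$. This factor is where the $\abs{\sK}$-dependence in $\gamma$ comes from, and one must verify that $\psi$ being convex with $\psi(0)=0$ lets us absorb the factor cleanly (e.g.\ $\psi^{-1}(c\,t) \le$ the claimed $\gamma(t)$ for $c\le\abs{\sK}$, using concavity of $\psi^{-1}$), and that the approximation/minimizability-gap term transforms consistently. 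A secondary subtlety is checking that $\sR$ symmetric and complete makes the index-space problem "regular" in the sense needed so that $\sC^*$ is attained at the claimed minimizer — this mirrors the remark after the definition of regularity but must be stated for the label set $\sK$ rather than $\sY$.
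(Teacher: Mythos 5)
Your setup is on target: Lemma~\ref{lemma:regret-target-cost} is indeed the starting point, and writing the conditional error of $\wt \ell_{\rm{c-comp}}$ as $\sum_{k \in \sK} \ov q(x,k)\, \wt \ell_{\rm{comp}}(r,x,k)$ with $\ov q(x,k) = \sum_y p(x,y)(1-c(x,k,y))$ is exactly how the paper begins. But the paper does \emph{not} renormalize these weights and reduce to Theorem~\ref{thm:bound-comp}; it argues directly with the unnormalized $\ov q(x,k)\in[0,1]$, using a new perturbation of the softmax values $\sS_\mu(x,k)$ (with $\mu$ restricted to $[-\sS(x,k_{\min}(x)),\sS(x,\rr(x))]$) in place of the score-perturbation $h_{\mu,i}$ of Section~\ref{sec:comp}. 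This difference is not cosmetic: it is what produces the stated constants.

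The genuine gap is your step (4). Set $Z(x)=\sum_{k\in\sK}\ov q(x,k)$ and $\bar p(x,k)=\ov q(x,k)/Z(x)$. Your reduction gives $\Delta\sC_{\ell,\sR}(r,x)=Z(x)\,\delta$ with $\delta=\bar p(x,k_{\min}(x))-\bar p(x,\rr(x))$, and $\Delta\sC_{\wt \ell_{\rm{c-comp}},\sR}(r,x)=Z(x)\,R$ with $\psi(\delta)\le R$ from the top-$1$ result over $\sK$. Chaining these yields $\Delta\sC_{\ell,\sR}(r,x)\le Z(x)\,\psi^{-1}\paren*{\Delta\sC_{\wt \ell_{\rm{c-comp}},\sR}(r,x)/Z(x)}$, which for the logistic and exponential cases (where $\psi(t)\ge t^2/2$, so $\psi^{-1}(u)\le\sqrt{2u}$) gives at best $\sqrt{2 Z(x)\,t}\le\sqrt{2\abs*{\sK}\,t}$ --- strictly weaker than the claimed $\gamma(t)=2\sqrt{t}$ once $\abs*{\sK}>2$; the same bookkeeping inflates the gce constant to order $\sqrt{\abs*{\sK}^{\q+1}t}$ rather than $2\sqrt{\abs*{\sK}^{\q}t}$. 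Only the mae case, where $\psi$ is linear, survives the naive rescaling. In other words, "absorbing the normalization factor cleanly via concavity of $\psi^{-1}$" is precisely the step that fails, because the black-box $\psi$ of Theorem~\ref{thm:bound-comp} is pinned at total two-label mass $1$, whereas here the relevant quantity is the unnormalized pair $\ov q(x,k_{\min}(x)),\,\ov q(x,\rr(x))$, each at most $1$ and hence with sum at most $2$, independently of $Z(x)$. To recover the theorem along your route you would have to carry the refined two-point inequality (e.g.\ $a\log\frac{2a}{a+b}+b\log\frac{2b}{a+b}\ge\frac{(a-b)^2}{2(a+b)}$ applied to $a=\ov q(x,k_{\min}(x))$, $b=\ov q(x,\rr(x))$, using $a+b\le 2$) rather than $\psi^{-1}$ itself; the $\abs*{\sK}$ factors in the stated $\gamma$ for gce and mae come from the softmax lower bound $\sS(x,k)\ge 1/\abs*{\sK}$, not from the total weight mass $Z(x)$. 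This is exactly what the paper's direct softmax-perturbation proof implements.
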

The proof is included in Appendix~\ref{app:bound-cost-comp}. The
second part follows from the fact that when $\sR = \sR_{\rm{all}}$,
all the minimizability gaps vanish. In particular,
Theorem~\ref{thm:bound-cost-comp} implies the Bayes-consistency of
cost-sensitive comp-sum losses. The bounds for cost-sensitive
generalized cross-entropy and mean absolute error loss depend on the
number of set predictors, making them less favorable when $\abs*{\sK}$ is large. As pointed out earlier, while the cost-sensitive mean absolute error
loss admits a linear rate, it is difficult to optimize even in the
standard classification, as reported by \citet{zhang2018generalized}.

In the proof, we represented the comp-sum loss as a function of the softmax and introduced a softmax-dependent function $\sS_{\mu}$ to upper bound the conditional regret of the target cardinality-aware loss function by that of the cost-sensitive comp-sum loss. This technique is novel and differs from the approach used in the standard scenario (Section~\ref{sec:comp}).

We
will use $\wt \ell^{\rm{cstnd}}_{\rm{c}{-\exp}}$, $\wt \ell_{c-\rm{sq-hinge}}$, $\wt \ell_{c-\rm{hinge}}$ and $\wt
\ell_{c-\rho}$ to denote the corresponding cost-sensitive counterparts
for $\wt \ell^{\rm{cstnd}}_{\exp}$, $\wt \ell_{\rm{sq-hinge}}$, $\wt \ell_{\rm{hinge}}$ and $\wt
\ell_{\rho}$, respectively.  Next, we show that
these cost-sensitive surrogate losses benefit from
$\sH$-consistency bounds with respect to the target loss $\ell$ given in \eqref{eq:target-cardinality-loss}.
\begin{restatable}{theorem}{BoundCostCstnd}
\label{thm:bound-cost-cstnd}
Assume that $\sR$ is symmetric and complete. Then, the following bound holds for the cost-sensitive constrained loss: for all $r \in \sR$ and for any distribution,
\begin{align*}
\sE_{\ell}(r) - \sE^*_{\ell}(\sR) + \sM_{\ell}(\sR) \leq \gamma \paren*{ \sE_{\wt \ell_{\rm{c-cstnd}}}(r) - \sE^*_{\wt \ell_{\rm{c-cstnd}}}(\sR) + \sM_{\wt \ell_{\rm{c-cstnd}}}(\sR) };
\end{align*}
When $\sR = \sR_{\rm{all}}$, the following holds:
$\sE_{\ell}(r) - \sE^*_{\ell}(\sR_{\rm{all}}) \leq \gamma \paren[\big]{ \sE_{\wt \ell_{\rm{c-cstnd}}}(r) - \sE^*_{\wt \ell_{\rm{c-cstnd}}}(\sR_{\rm{all}})}$,
where $\gamma(t) = 2\sqrt{t}$ when
$\wt \ell_{\rm{c-cstnd}}$ is $\wt \ell^{\rm{cstnd}}_{\rm{c}{-\exp}}$ or
$\wt \ell_{c-\rm{sq-hinge}}$; $\gamma(t) = t$ when
$\wt \ell_{\rm{c-cstnd}}$ is $\wt \ell_{c-\rm{hinge}}$ or $\wt \ell_{c-\rho}$.
\end{restatable}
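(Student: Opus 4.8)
The plan is to prove a pointwise (conditional) calibration inequality between the cost-sensitive constrained loss $\wt \ell_{\rm{c-cstnd}}$ and the target cardinality-aware loss $\ell$ of \eqref{eq:target-cardinality-loss}, and then integrate it, in the spirit of the proof of Theorem~\ref{thm:bound-cost-comp}; the structural feature to exploit is that $\wt \ell_{\rm{c-cstnd}}$ is \emph{separable} across the indices $k \in \sK$, up to the affine constraint $\sum_{k \in \sK} r(x,k) = 0$. First I would record the relevant conditional objects. Writing $\ov c(x,k) = \sum_{y \in \sY} p(x,y)\, c(x,k,y) \in [0,1]$ for the expected cost of index $k$ at $x$, and noting that $\rr(x) = \argmax_{k \in \sK} r(x,k)$ does not depend on $y$, the conditional error of $\ell$ is $\sC_{\ell}(r,x) = \ov c(x,\rr(x))$; Lemma~\ref{lemma:regret-target-cost} (Appendix~\ref{app:cost}) then gives $\sC^*_{\ell}(\sR,x) = \min_{k \in \sK}\ov c(x,k)$ and $\Delta \sC_{\ell,\sR}(r,x) = \ov c(x,\rr(x)) - \min_{k \in \sK}\ov c(x,k)$. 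Taking the conditional expectation over $y$ inside $\wt \ell_{\rm{c-cstnd}}$ turns the weights $c(x,k,y)$ into $\ov c(x,k)$, so $\sC_{\wt \ell_{\rm{c-cstnd}}}(r,x) = \sum_{k \in \sK}\ov c(x,k)\,\Phi(-r(x,k))$, still subject to $\sum_{k \in \sK} r(x,k) = 0$.

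The core step is to show $\sC_{\wt \ell_{\rm{c-cstnd}}}(r,x) - \sC^*_{\wt \ell_{\rm{c-cstnd}}}(\sR,x) \ge \Psi\paren*{\Delta \sC_{\ell,\sR}(r,x)}$ for all $x$ and all feasible $r$, with $\Psi$ convex, increasing, $\Psi(0)=0$ and $\Psi^{-1} = \gamma$. Fix $x$, set $k_0 = \rr(x)$ and pick $k^\star \in \argmin_{k}\ov c(x,k)$, and abbreviate $a = \ov c(x,k_0)$, $b = \ov c(x,k^\star)$, so $a \ge b$ and $\Delta \sC_{\ell,\sR}(r,x) = a - b$; note also $r(x,k_0) = \max_k r(x,k) \ge \frac{1}{\abs*{\sK}}\sum_k r(x,k) = 0$ and $r(x,k^\star) \le r(x,k_0)$. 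To bound $\sC^*_{\wt \ell_{\rm{c-cstnd}}}(\sR,x)$ from above, I would evaluate $\wt \ell_{\rm{c-cstnd}}$ at the feasible competitor obtained from $r$ by keeping all coordinates outside $\{k_0, k^\star\}$ and re-optimizing only $r(x,k_0), r(x,k^\star)$ under their fixed partial sum (legitimate since $\sR$ is symmetric and complete). For $\Phi(t) = e^{-t}$ this inner minimization is an AM--GM line, the untouched coordinates cancel against $\sC_{\wt \ell_{\rm{c-cstnd}}}(r,x)$, and what remains is exactly $\paren*{\sqrt{a}\,e^{r(x,k_0)/2} - \sqrt{b}\,e^{r(x,k^\star)/2}}^2$; bounding this below using $e^{r(x,k^\star)/2} \le e^{r(x,k_0)/2}$, $e^{r(x,k_0)/2} \ge 1$, and $\sqrt a - \sqrt b = (a-b)/(\sqrt a + \sqrt b) \ge (a-b)/2$ for $a,b \in [0,1]$ — where the normalization $c \in [0,1]$ is essential — yields $\Psi(u) = u^2/4$, i.e.\ $\gamma(t) = 2\sqrt t$. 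The squared-hinge loss is treated by the identical two-coordinate re-optimization with $\Phi(t) = \max(0,1-t)^2$; for the hinge and $\rho$-margin losses the same computation gives the stronger linear inequality $\sC_{\wt \ell_{\rm{c-cstnd}}}(r,x) - \sC^*_{\wt \ell_{\rm{c-cstnd}}}(\sR,x) \ge a - b$, i.e.\ $\Psi = \mathrm{id}$, $\gamma(t) = t$, and this case requires no convexity of $\Phi$.

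The rest is the standard integration argument. Convexity of $\Psi$ and Jensen's inequality give $\E_x\bracket*{\Psi(\Delta \sC_{\ell,\sR}(r,x))} \ge \Psi\paren*{\E_x[\Delta \sC_{\ell,\sR}(r,x)]}$, while by the definition of the minimizability gaps $\E_x[\Delta \sC_{\ell,\sR}(r,x)] = \sE_{\ell}(r) - \sE^*_{\ell}(\sR) + \sM_{\ell}(\sR)$ and $\E_x\bracket*{\sC_{\wt \ell_{\rm{c-cstnd}}}(r,x) - \sC^*_{\wt \ell_{\rm{c-cstnd}}}(\sR,x)} = \sE_{\wt \ell_{\rm{c-cstnd}}}(r) - \sE^*_{\wt \ell_{\rm{c-cstnd}}}(\sR) + \sM_{\wt \ell_{\rm{c-cstnd}}}(\sR)$. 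Chaining the pointwise bound with these two identities and applying the non-decreasing function $\gamma = \Psi^{-1}$ to both sides gives the first inequality of the theorem; the second follows because all minimizability gaps vanish when $\sR = \sR_{\rm{all}}$.

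I expect the main obstacle to be the pointwise step, uniformly over the four choices of $\Phi$: pinning down exactly the stated constants requires (i) reducing the constrained minimization defining $\sC^*_{\wt \ell_{\rm{c-cstnd}}}(\sR,x)$ to a clean two-coordinate problem despite the global sum-zero constraint, (ii) invoking the normalization $c \in [0,1]$ at precisely the step that turns $(\sqrt a - \sqrt b)^2$ into $(a-b)^2/4$, and (iii) handling the non-convex $\rho$-margin loss directly, together with the tie-breaking rule in $\rr(x)$ (largest index), which can make $\ov c(x,\rr(x))$ strictly exceed $\min_k \ov c(x,k)$ even when $r(x,\cdot)$ attains its maximum at several indices, so that the pointwise inequality must be verified on these boundary configurations too.
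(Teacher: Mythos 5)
Your proposal is correct and follows essentially the same route as the paper's proof in Appendix~\ref{app:bound-cost-cstnd}: the two-coordinate re-optimization under a fixed partial sum is exactly the paper's $\mu$-shifted competitor $r_\mu$, the closed-form sup over $\mu$ together with the constraints $r(x,\rr(x)) \geq r(x,k^\star)$, $r(x,\rr(x)) \geq 0$ and the normalization $\wt q \in [0,1]$ yields the same pointwise bounds $(a-b)^2/4$ (exponential, squared hinge) and $a-b$ (hinge, $\rho$-margin), and the Jensen/expectation step with Lemma~\ref{lemma:regret-target-cost} and vanishing minimizability gaps for $\sR_{\rm{all}}$ completes the argument just as in the paper.
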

The proof is included in Appendix~\ref{app:bound-cost-cstnd}. The
second part follows from the fact that when $\sR = \sR_{\rm{all}}$,
all the minimizability gaps vanish.  In particular,
Theorem~\ref{thm:bound-cost-cstnd} implies the Bayes-consistency of
cost-sensitive constrained losses. Note that while the constrained
hinge loss and $\rho$-margin loss have a more favorable linear rate in
the bound, their optimization may be more challenging compared to
other smooth loss functions.

\section{Experiments}
\label{sec:experiments}

Here, we report empirical results for our cardinality-aware
algorithm and show that it consistently outperforms 
top-$k$ classifiers on benchmark datasets CIFAR-10, CIFAR-100
\citep{Krizhevsky09learningmultiple}, SVHN \citep{Netzer2011} and
ImageNet \citep{deng2009imagenet}.

We used the outputs of the
second-to-last layer of ResNet \citep{he2016deep} as features for the
CIFAR-10, CIFAR-100 and SVHN datasets. For the ImageNet dataset, we
used the CLIP \citep{radford2021learning} model to extract features. We adopted a linear model, trained using multinomial logistic loss, for the classifier $h$ on the
extracted features from the datasets. We used a two-hidden-layer feedforward neural network
with ReLU activation functions \citep{nair2010rectified} for the
cardinality selector $r$. Both the classifier $h$ and the cardinality
selector $r$ were trained using the Adam optimizer
\citep{kingma2014adam}, with a learning rate of $1\times 10^{-3}$, a
batch size of $128$, and a weight decay of $1\times 10^{-5}$.

Figure~\ref{fig:topk} compares the accuracy versus cardinality curves
of the cardinality-aware algorithm with that of top-$k$ classifiers induced by $h$ for the various datasets.
The accuracy of a top-$k$ classifier is measured by $\E_{(x, y) \sim
  S}\bracket*{1 - \ell_{k}(h, x, y)}$, that is the fraction of the
sample in which the top-$k$ predictions include the true label. It
naturally grows as the cardinality $k$ increases, as shown in
Figure~\ref{fig:topk}.
The accuracy of the carnality-aware algorithms is measured by $\E_{(x,
  y) \sim S} \bracket[\big]{1_{y \in \curl*{\hh_1(x), \ldots, \hh_{ \rr(x)}(x)}}}$, that is the fraction
of the sample in which the predictions selected by the model $r$
include the true label, and the corresponding cardinality is measured
by $\E_{(x, y) \sim S} \bracket*{\rr(x)}$, that is the average size of
the selected predictions.
The cardinality selector $r$ was trained by minimizing the
cost-sensitive logistic loss $\wt \ell_{\rm{c}-\log}$ with the cost $c(x, k, y)$ defined as
$\ell_{k}(h, x, y) + \lambda \log(k)$ and normalized to $[0, 1]$ through division by its maximum value over $\sX \times \sK \times \sY$. We allow for top-$k$ experts with $k \in \sK = \curl*{1, 2, 4,
  8}$ and vary $\lambda$. Starting from high values of $\lambda$, as $\lambda$
decreases in Figure~\ref{fig:topk}, our cardinality-aware algorithm
yields solutions with higher average cardinality and increased accuracy. This is because $\lambda$ controls the trade-off
between cardinality and accuracy. The plots end at $\lambda=0.01$.

Figure~\ref{fig:topk} shows that the cardinality-aware algorithm is
superior across the CIFAR-100, ImageNet, CIFAR-10 and SVHN
datasets. For a given cardinality, the cardinality-aware algorithm
always achieves higher accuracy than a top-$k$ classifier. In other
words, to achieve the same level of accuracy, the predictions made by
the cardinality-aware algorithm can be significantly smaller in size
compared to those made by the corresponding top-$k$ classifier.
In particular, on the CIFAR-100, CIFAR-10 and SVHN datasets, the
cardinality-aware algorithm achieves the same accuracy (98\%) as the
top-$k$ classifier while using roughly only half of the
cardinality. As with the ImageNet dataset, it achieves the same
accuracy (95\%) as the top-$k$ classifier with only two-thirds of the
cardinality. This illustrates the effectiveness of our
cardinality-aware algorithm.

\begin{figure}[t]
\vskip -.5in
\begin{center}
\begin{tabular}{@{\hspace{-.5cm}}c@{\hspace{0cm}}c@{\hspace{0cm}}c@{\hspace{0cm}}c@{\hspace{0cm}}}
\includegraphics[scale=0.24]{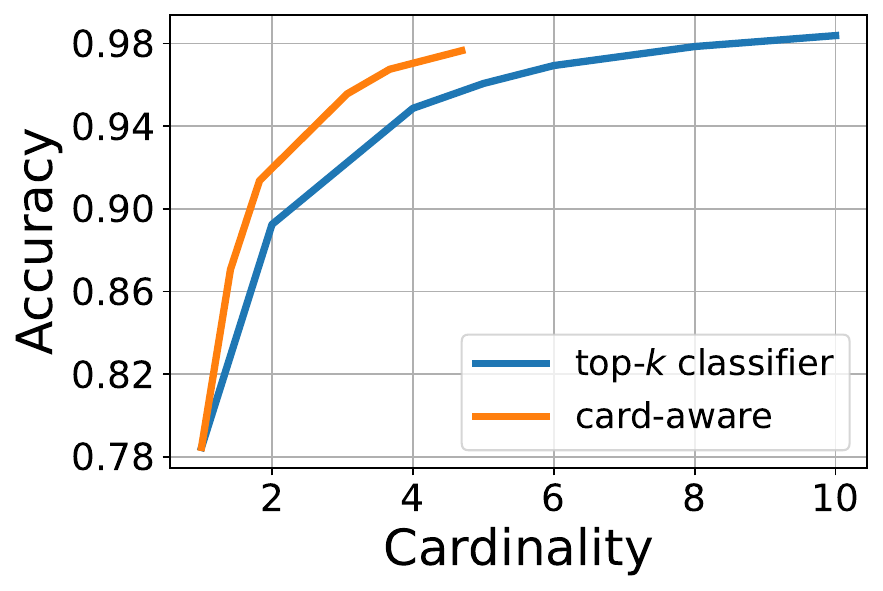} & 
\includegraphics[scale=0.24]{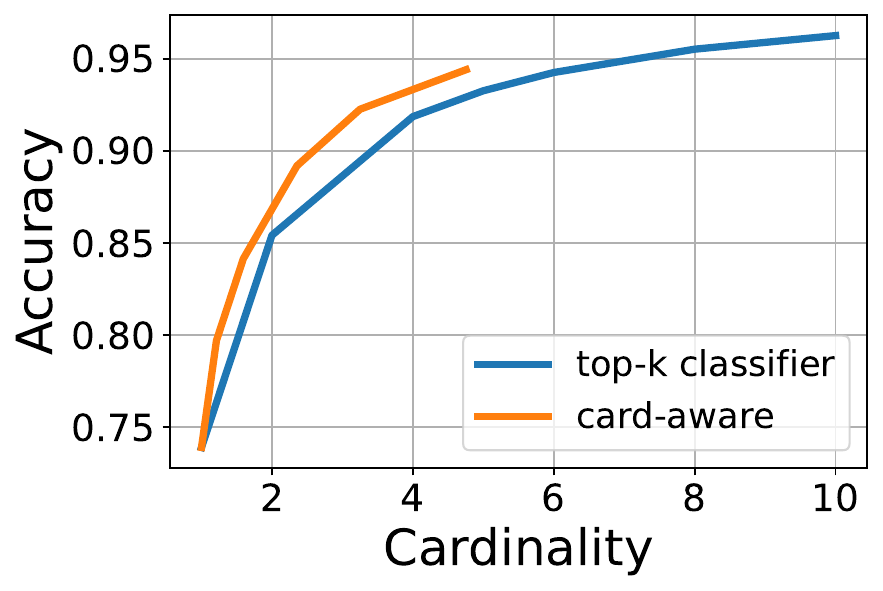} &
\includegraphics[scale=0.24]{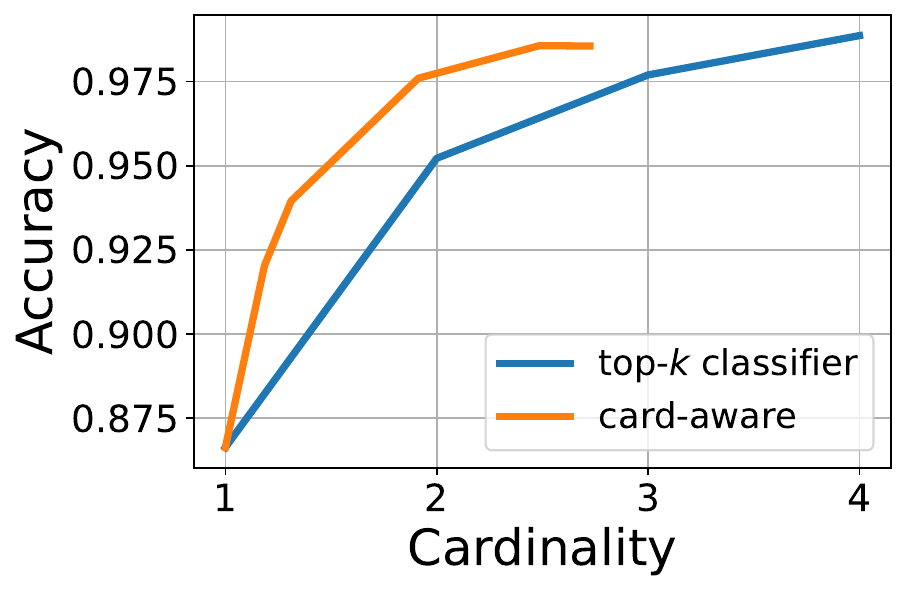} &
\includegraphics[scale=0.24]{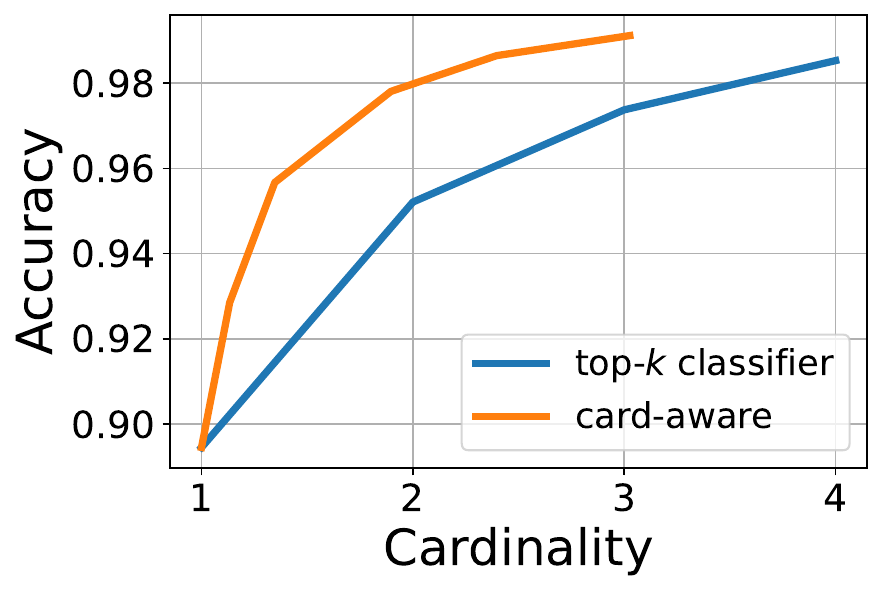}\\[-0.15cm]
\\[-0.15cm]
{\small CIFAR-100} & {\small ImageNet} 
& {\small CIFAR-10} & {\small SVHN} 
\end{tabular}
\vskip -0.1in
\caption{Accuracy versus cardinality for $\sK = \curl*{1, 2, 4, 8}$. Cardinality $\cost(k) = \log k$. }
\label{fig:topk}
\end{center}
\vskip -0.15in
\end{figure} 

\begin{figure}[t]
\vskip -.05in
\begin{center}
\begin{tabular}{@{\hspace{-.5cm}}c@{\hspace{0cm}}c@{\hspace{0cm}}c@{\hspace{0cm}}c@{\hspace{0cm}}}
\includegraphics[scale=0.24]{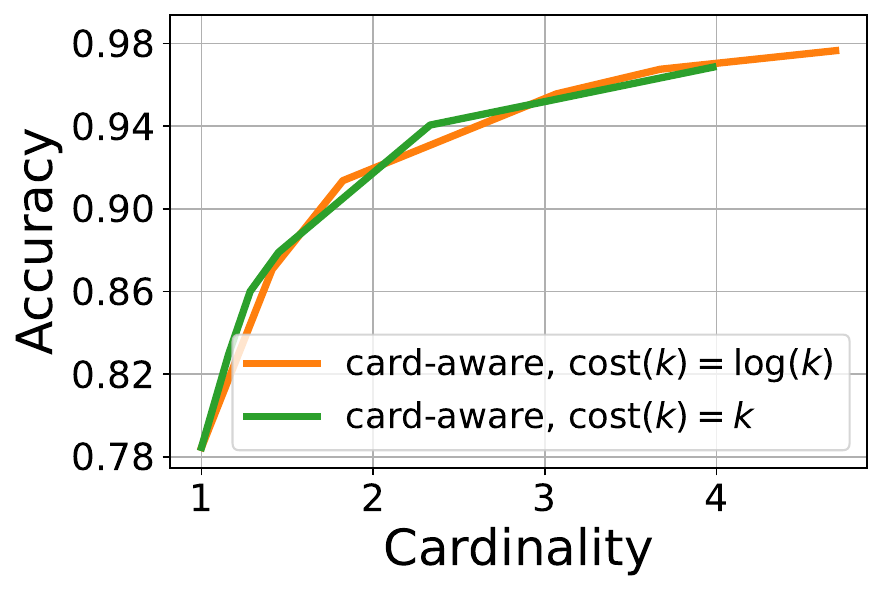} & 
\includegraphics[scale=0.24]{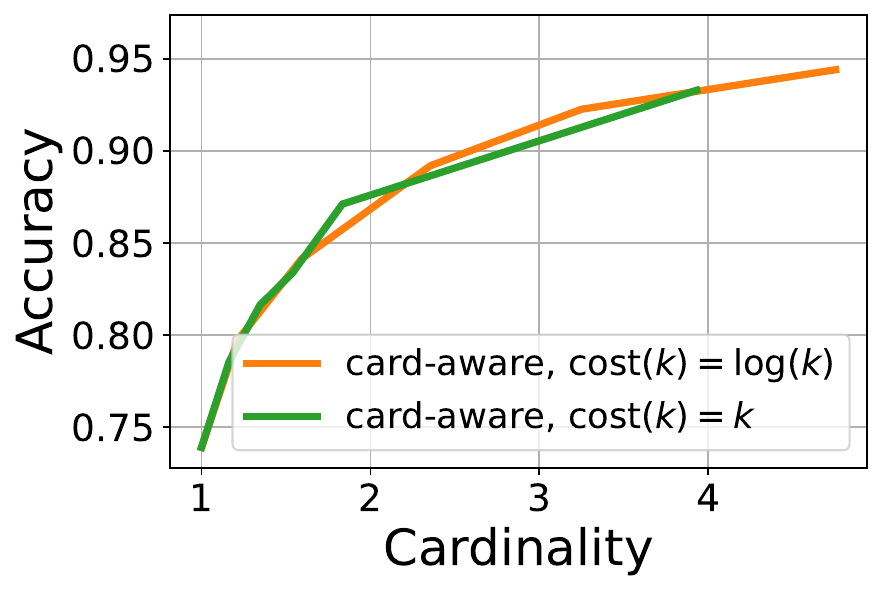} &
\includegraphics[scale=0.24]{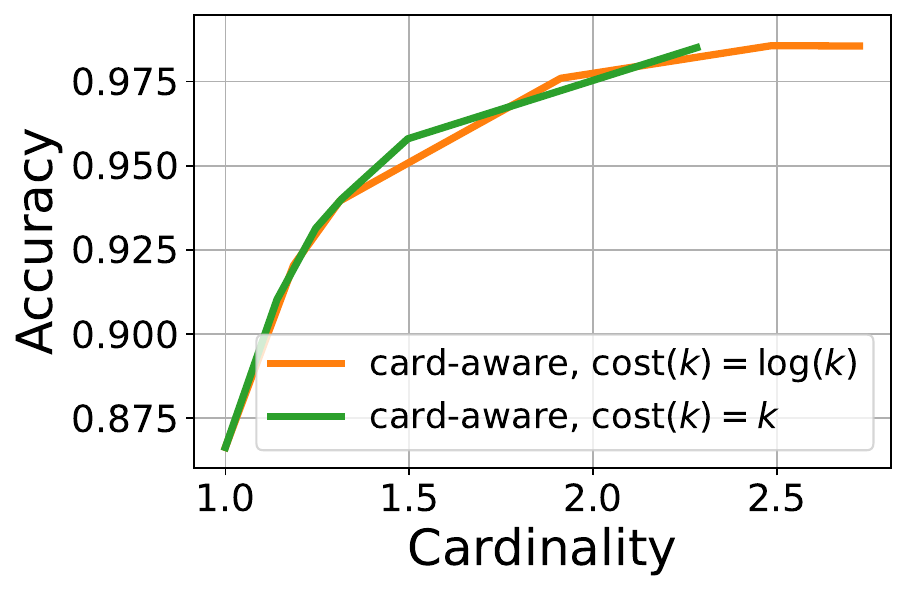} &
\includegraphics[scale=0.24]{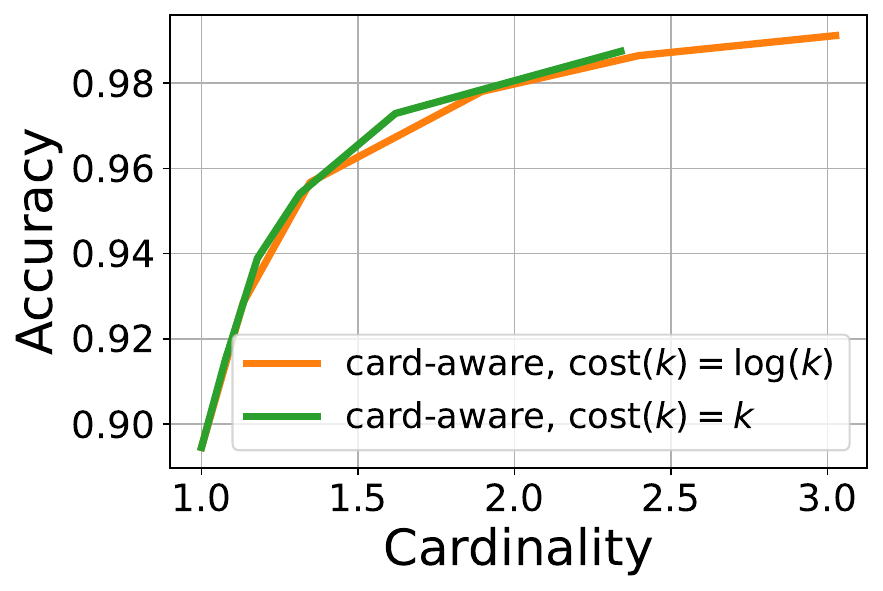}\\[-0.15cm]
\\[-0.15cm]
{\small CIFAR-100} & {\small ImageNet} 
& {\small CIFAR-10} & {\small SVHN} 
\end{tabular}
\vskip -0.1in
\caption{Accuracy versus cardinality for
  $\cost(k) = \log k$ and $\cost(k) = k$, with $\sK = \curl*{1, 2, 4, 8}$.}
\label{fig:topk_compare}
\end{center}
\vskip -0.3in
\end{figure} 

Figure~\ref{fig:topk_compare} presents the comparison of
$\cost(\abs*{\g_k(x)})\! =\! k$ and $\cost(\abs*{\g_k(x)})\! =\! \log k$ in
the same setting (for each dataset, the orange curve in
Figure~\ref{fig:topk_compare} coincides with the orange curve in
Figure~\ref{fig:topk}).
The comparison suggests that the choice between the
linear and logarithmic cardinality costs has negligible
impact on our algorithm's performance\ignore{ in this setting},
highlighting its robustness in this regard.
\ignore{These results demonstrate that different choices of $\cost(\abs*{\g_k(x)})$ basically lead to the same curve, which verifies the effectiveness and benefit of our algorithm.
}
We present additional experimental results with
different choices of set $\sK$ \ignore{varying the number of experts)} in
Figure~\ref{fig:topk_lambda} and Figure~\ref{fig:topk_Ck} in
Appendix~\ref{app:add}. Our
cardinality-aware algorithm consistently outperforms top-$k$
classifiers across all configurations.

\section{Conclusion}

We introduced a new cardinality-aware set prediction framework for
which we proposed two families of surrogate losses
with strong $\sH$-consistency guarantees:
cost-sensitive comp-sum and constrained losses. This leads to principled
and practical cardinality-aware algorithms for top-$k$ classification,
which we showed empirically to be very effective.
Additionally, we established a theoretical foundation for top-$k$
classification with fixed cardinality $k$ by proving that several
common surrogate loss functions, including comp-sum losses and
constrained losses in standard classification, admit $\sH$-consistency
bounds with respect to the top-$k$ loss. This provides a theoretical
justification for the use of these losses in top-$k$ classification
and opens new avenues for further research in this area.

\bibliography{topk}

\begin{thebibliography}{54}
\providecommand{\natexlab}[1]{#1}
\providecommand{\url}[1]{\texttt{#1}}
\expandafter\ifx\csname urlstyle\endcsname\relax
  \providecommand{\doi}[1]{doi: #1}\else
  \providecommand{\doi}{doi: \begingroup \urlstyle{rm}\Url}\fi

\bibitem[Awasthi et~al.(2022{\natexlab{a}})Awasthi, Mao, Mohri, and
  Zhong]{awasthi2022h}
Pranjal Awasthi, Anqi Mao, Mehryar Mohri, and Yutao Zhong.
\newblock {$H$}-consistency bounds for surrogate loss minimizers.
\newblock In \emph{International Conference on Machine Learning}, pages
  1117--1174, 2022{\natexlab{a}}.

\bibitem[Awasthi et~al.(2022{\natexlab{b}})Awasthi, Mao, Mohri, and
  Zhong]{awasthi2022multi}
Pranjal Awasthi, Anqi Mao, Mehryar Mohri, and Yutao Zhong.
\newblock Multi-class {$ H $}-consistency bounds.
\newblock In \emph{Advances in neural information processing systems}, pages
  782--795, 2022{\natexlab{b}}.

\bibitem[Awasthi et~al.(2023{\natexlab{a}})Awasthi, Mao, Mohri, and
  Zhong]{AwasthiMaoMohriZhong2023theoretically}
Pranjal Awasthi, Anqi Mao, Mehryar Mohri, and Yutao Zhong.
\newblock Theoretically grounded loss functions and algorithms for adversarial
  robustness.
\newblock In \emph{International Conference on Artificial Intelligence and
  Statistics}, pages 10077--10094, 2023{\natexlab{a}}.

\bibitem[Awasthi et~al.(2023{\natexlab{b}})Awasthi, Mao, Mohri, and
  Zhong]{awasthi2023dc}
Pranjal Awasthi, Anqi Mao, Mehryar Mohri, and Yutao Zhong.
\newblock {DC}-programming for neural network optimizations.
\newblock \emph{Journal of Global Optimization}, 2023{\natexlab{b}}.

\bibitem[Bartlett and Wegkamp(2008)]{bartlett2008classification}
Peter~L Bartlett and Marten~H Wegkamp.
\newblock Classification with a reject option using a hinge loss.
\newblock \emph{Journal of Machine Learning Research}, 9\penalty0 (8), 2008.

\bibitem[Bartlett et~al.(2006)Bartlett, Jordan, and
  McAuliffe]{bartlett2006convexity}
Peter~L. Bartlett, Michael~I. Jordan, and Jon~D. McAuliffe.
\newblock Convexity, classification, and risk bounds.
\newblock \emph{Journal of the American Statistical Association}, 101\penalty0
  (473):\penalty0 138--156, 2006.

\bibitem[Berkson(1944)]{Berkson1944}
Joseph Berkson.
\newblock Application of the logistic function to bio-assay.
\newblock \emph{Journal of the American Statistical Association}, 39:\penalty0
  357–--365, 1944.

\bibitem[Berkson(1951)]{Berkson1951}
Joseph Berkson.
\newblock Why {I} prefer logits to probits.
\newblock \emph{Biometrics}, 7\penalty0 (4):\penalty0 327–--339, 1951.

\bibitem[Berrada et~al.(2018)Berrada, Zisserman, and Kumar]{berrada2018smooth}
Leonard Berrada, Andrew Zisserman, and M~Pawan Kumar.
\newblock Smooth loss functions for deep top-k classification.
\newblock In \emph{International Conference on Learning Representations}, 2018.

\bibitem[Crammer and Singer(2001)]{crammer2001algorithmic}
Koby Crammer and Yoram Singer.
\newblock On the algorithmic implementation of multiclass kernel-based vector
  machines.
\newblock \emph{Journal of machine learning research}, 2\penalty0
  (Dec):\penalty0 265--292, 2001.

\bibitem[Deng et~al.(2009)Deng, Dong, Socher, Li, Li, and Li]{deng2009imagenet}
Jia Deng, Wei Dong, Richard Socher, Li-Jia Li, Kai Li, and Fei-Fei Li.
\newblock Imagenet: A large-scale hierarchical image database.
\newblock In \emph{2009 IEEE conference on computer vision and pattern
  recognition}, pages 248--255. Ieee, 2009.

\bibitem[Denis and Hebiri(2017)]{denis2017confidence}
Christophe Denis and Mohamed Hebiri.
\newblock Confidence sets with expected sizes for multiclass classification.
\newblock \emph{Journal of Machine Learning Research}, 18\penalty0
  (102):\penalty0 1--28, 2017.

\bibitem[Ghosh et~al.(2017)Ghosh, Kumar, and Sastry]{ghosh2017robust}
Aritra Ghosh, Himanshu Kumar, and P~Shanti Sastry.
\newblock Robust loss functions under label noise for deep neural networks.
\newblock In \emph{Proceedings of the AAAI conference on artificial
  intelligence}, 2017.

\bibitem[He et~al.(2016)He, Zhang, Ren, and Sun]{he2016deep}
Kaiming He, Xiangyu Zhang, Shaoqing Ren, and Jian Sun.
\newblock Deep residual learning for image recognition.
\newblock In \emph{Proceedings of the IEEE conference on computer vision and
  pattern recognition}, pages 770--778, 2016.

\bibitem[Kingma and Ba(2014)]{kingma2014adam}
Diederik~P Kingma and Jimmy Ba.
\newblock Adam: A method for stochastic optimization.
\newblock \emph{arXiv preprint arXiv:1412.6980}, 2014.

\bibitem[Krizhevsky(2009)]{Krizhevsky09learningmultiple}
Alex Krizhevsky.
\newblock Learning multiple layers of features from tiny images.
\newblock Technical report, Toronto University, 2009.

\bibitem[Kuznetsov et~al.(2014)Kuznetsov, Mohri, and
  Syed]{KuznetsovMohriSyed2014}
Vitaly Kuznetsov, Mehryar Mohri, and Umar Syed.
\newblock Multi-class deep boosting.
\newblock In \emph{Advances in Neural Information Processing Systems}, pages
  2501--2509, 2014.

\bibitem[Lapin et~al.(2015)Lapin, Hein, and Schiele]{lapin2015top}
Maksim Lapin, Matthias Hein, and Bernt Schiele.
\newblock Top-k multiclass svm.
\newblock In \emph{Advances in neural information processing systems}, 2015.

\bibitem[Lapin et~al.(2016)Lapin, Hein, and Schiele]{lapin2016loss}
Maksim Lapin, Matthias Hein, and Bernt Schiele.
\newblock Loss functions for top-k error: Analysis and insights.
\newblock In \emph{Proceedings of the IEEE conference on computer vision and
  pattern recognition}, pages 1468--1477, 2016.

\bibitem[Lapin et~al.(2018)Lapin, Hein, and Schiele]{lapin2018analysis}
Maksim Lapin, Matthias Hein, and Bernt Schiele.
\newblock Analysis and optimization of loss functions for multiclass, top-k,
  and multilabel classification.
\newblock \emph{IEEE Transactions on Pattern Analysis \& Machine Intelligence},
  40\penalty0 (07):\penalty0 1533--1554, 2018.

\bibitem[Lee et~al.(2004)Lee, Lin, and Wahba]{lee2004multicategory}
Yoonkyung Lee, Yi~Lin, and Grace Wahba.
\newblock Multicategory support vector machines: Theory and application to the
  classification of microarray data and satellite radiance data.
\newblock \emph{Journal of the American Statistical Association}, 99\penalty0
  (465):\penalty0 67--81, 2004.

\bibitem[Long and Servedio(2013)]{long2013consistency}
Phil Long and Rocco Servedio.
\newblock Consistency versus realizable {H}-consistency for multiclass
  classification.
\newblock In \emph{International Conference on Machine Learning}, pages
  801--809, 2013.

\bibitem[Mao et~al.(2023{\natexlab{a}})Mao, Mohri, Mohri, and
  Zhong]{MaoMohriMohriZhong2023twostage}
Anqi Mao, Christopher Mohri, Mehryar Mohri, and Yutao Zhong.
\newblock Two-stage learning to defer with multiple experts.
\newblock In \emph{Advances in neural information processing systems},
  2023{\natexlab{a}}.

\bibitem[Mao et~al.(2023{\natexlab{b}})Mao, Mohri, and
  Zhong]{MaoMohriZhong2023characterization}
Anqi Mao, Mehryar Mohri, and Yutao Zhong.
\newblock {H}-consistency bounds: Characterization and extensions.
\newblock In \emph{Advances in Neural Information Processing Systems},
  2023{\natexlab{b}}.

\bibitem[Mao et~al.(2023{\natexlab{c}})Mao, Mohri, and
  Zhong]{MaoMohriZhong2023ranking}
Anqi Mao, Mehryar Mohri, and Yutao Zhong.
\newblock {H}-consistency bounds for pairwise misranking loss surrogates.
\newblock In \emph{International conference on Machine learning},
  2023{\natexlab{c}}.

\bibitem[Mao et~al.(2023{\natexlab{d}})Mao, Mohri, and
  Zhong]{MaoMohriZhong2023rankingabs}
Anqi Mao, Mehryar Mohri, and Yutao Zhong.
\newblock Ranking with abstention.
\newblock In \emph{ICML 2023 Workshop The Many Facets of Preference-Based
  Learning}, 2023{\natexlab{d}}.

\bibitem[Mao et~al.(2023{\natexlab{e}})Mao, Mohri, and
  Zhong]{MaoMohriZhong2023structured}
Anqi Mao, Mehryar Mohri, and Yutao Zhong.
\newblock Structured prediction with stronger consistency guarantees.
\newblock In \emph{Advances in Neural Information Processing Systems},
  2023{\natexlab{e}}.

\bibitem[Mao et~al.(2023{\natexlab{f}})Mao, Mohri, and Zhong]{mao2023cross}
Anqi Mao, Mehryar Mohri, and Yutao Zhong.
\newblock Cross-entropy loss functions: Theoretical analysis and applications.
\newblock In \emph{International Conference on Machine Learning},
  2023{\natexlab{f}}.

\bibitem[Mao et~al.(2024{\natexlab{a}})Mao, Mohri, and
  Zhong]{MaoMohriZhong2023deferral}
Anqi Mao, Mehryar Mohri, and Yutao Zhong.
\newblock Principled approaches for learning to defer with multiple experts.
\newblock In \emph{International Symposium on Artificial Intelligence and
  Mathematics}, 2024{\natexlab{a}}.

\bibitem[Mao et~al.(2024{\natexlab{b}})Mao, Mohri, and
  Zhong]{MaoMohriZhong2023predictor}
Anqi Mao, Mehryar Mohri, and Yutao Zhong.
\newblock Predictor-rejector multi-class abstention: Theoretical analysis and
  algorithms.
\newblock In \emph{International Conference on Algorithmic Learning Theory},
  2024{\natexlab{b}}.

\bibitem[Mao et~al.(2024{\natexlab{c}})Mao, Mohri, and
  Zhong]{MaoMohriZhong2023score}
Anqi Mao, Mehryar Mohri, and Yutao Zhong.
\newblock Theoretically grounded loss functions and algorithms for score-based
  multi-class abstention.
\newblock In \emph{International Conference on Artificial Intelligence and
  Statistics}, 2024{\natexlab{c}}.

\bibitem[Mao et~al.(2024{\natexlab{d}})Mao, Mohri, and Zhong]{mao2024h}
Anqi Mao, Mehryar Mohri, and Yutao Zhong.
\newblock {$ H $}-consistency guarantees for regression.
\newblock \emph{arXiv preprint arXiv:2403.19480}, 2024{\natexlab{d}}.

\bibitem[Mao et~al.(2024{\natexlab{e}})Mao, Mohri, and
  Zhong]{mao2024regression}
Anqi Mao, Mehryar Mohri, and Yutao Zhong.
\newblock Regression with multi-expert deferral.
\newblock \emph{arXiv preprint arXiv:2403.19494}, 2024{\natexlab{e}}.

\bibitem[Mao et~al.(2024{\natexlab{f}})Mao, Mohri, and Zhong]{mao2024universal}
Anqi Mao, Mehryar Mohri, and Yutao Zhong.
\newblock A universal growth rate for learning with smooth surrogate losses.
\newblock \emph{arXiv preprint arXiv:2405.05968}, 2024{\natexlab{f}}.

\bibitem[Mohri et~al.(2024)Mohri, Andor, Choi, Collins, Mao, and
  Zhong]{MohriAndorChoiCollinsMaoZhong2023learning}
Christopher Mohri, Daniel Andor, Eunsol Choi, Michael Collins, Anqi Mao, and
  Yutao Zhong.
\newblock Learning to reject with a fixed predictor: Application to
  decontextualization.
\newblock In \emph{International Conference on Learning Representations}, 2024.

\bibitem[Mohri et~al.(2018)Mohri, Rostamizadeh, and
  Talwalkar]{mohri2018foundations}
Mehryar Mohri, Afshin Rostamizadeh, and Ameet Talwalkar.
\newblock \emph{Foundations of machine learning}.
\newblock MIT press, 2018.

\bibitem[Mukherjee and Schapire(2013)]{mukherjee2013theory}
Indraneel Mukherjee and Robert~E Schapire.
\newblock A theory of multiclass boosting.
\newblock \emph{Journal of Machine Learning Research}, 2013.

\bibitem[Nair and Hinton(2010)]{nair2010rectified}
Vinod Nair and Geoffrey~E Hinton.
\newblock Rectified linear units improve restricted boltzmann machines.
\newblock In \emph{Proceedings of the 27th international conference on machine
  learning (ICML-10)}, pages 807--814, 2010.

\bibitem[Netzer et~al.(2011)Netzer, Wang, Coates, Bissacco, Wu, and
  Ng]{Netzer2011}
Yuval Netzer, Tao Wang, Adam Coates, Alessandro Bissacco, Bo~Wu, and Andrew~Y
  Ng.
\newblock Reading digits in natural images with unsupervised feature learning.
\newblock In \emph{Advances in Neural Information Processing Systems}, 2011.

\bibitem[Radford et~al.(2021)Radford, Kim, Hallacy, Ramesh, Goh, Agarwal,
  Sastry, Askell, Mishkin, Clark, et~al.]{radford2021learning}
Alec Radford, Jong~Wook Kim, Chris Hallacy, Aditya Ramesh, Gabriel Goh,
  Sandhini Agarwal, Girish Sastry, Amanda Askell, Pamela Mishkin, Jack Clark,
  et~al.
\newblock Learning transferable visual models from natural language
  supervision.
\newblock In \emph{International conference on machine learning}, pages
  8748--8763. PMLR, 2021.

\bibitem[Reddi et~al.(2019)Reddi, Kale, Yu, Holtmann-Rice, Chen, and
  Kumar]{reddi2019stochastic}
Sashank~J Reddi, Satyen Kale, Felix Yu, Daniel Holtmann-Rice, Jiecao Chen, and
  Sanjiv Kumar.
\newblock Stochastic negative mining for learning with large output spaces.
\newblock In \emph{The 22nd International Conference on Artificial Intelligence
  and Statistics}, pages 1940--1949, 2019.

\bibitem[Saberian and Vasconcelos(2011)]{saberian2011multiclass}
Mohammad Saberian and Nuno Vasconcelos.
\newblock Multiclass boosting: Theory and algorithms.
\newblock \emph{Advances in neural information processing systems}, 24, 2011.

\bibitem[Shafer and Vovk(2008)]{shafer2008tutorial}
Glenn Shafer and Vladimir Vovk.
\newblock A tutorial on conformal prediction.
\newblock \emph{Journal of Machine Learning Research}, 9\penalty0 (3), 2008.

\bibitem[Steinwart(2007)]{steinwart2007compare}
Ingo Steinwart.
\newblock How to compare different loss functions and their risks.
\newblock \emph{Constructive Approximation}, 26\penalty0 (2):\penalty0
  225--287, 2007.

\bibitem[Thilagar et~al.(2022)Thilagar, Frongillo, Finocchiaro, and
  Goodwill]{thilagar2022consistent}
Anish Thilagar, Rafael Frongillo, Jessica~J Finocchiaro, and Emma Goodwill.
\newblock Consistent polyhedral surrogates for top-k classification and
  variants.
\newblock In \emph{International Conference on Machine Learning}, pages
  21329--21359, 2022.

\bibitem[Usunier et~al.(2009)Usunier, Buffoni, and
  Gallinari]{usunier2009ranking}
Nicolas Usunier, David Buffoni, and Patrick Gallinari.
\newblock Ranking with ordered weighted pairwise classification.
\newblock In \emph{International conference on machine learning}, pages
  1057--1064, 2009.

\bibitem[Verhulst(1838)]{Verhulst1838}
Pierre~François Verhulst.
\newblock Notice sur la loi que la population suit dans son accroissement.
\newblock \emph{Correspondance math\'ematique et physique}, 10:\penalty0
  113–--121, 1838.

\bibitem[Verhulst(1845)]{Verhulst1845}
Pierre~François Verhulst.
\newblock Recherches math\'ematiques sur la loi d'accroissement de la
  population.
\newblock \emph{Nouveaux M\'emoires de l'Acad\'emie Royale des Sciences et
  Belles-Lettres de Bruxelles}, 18:\penalty0 1–--42, 1845.

\bibitem[Weston and Watkins(1998)]{weston1998multi}
Jason Weston and Chris Watkins.
\newblock Multi-class support vector machines.
\newblock Technical report, Citeseer, 1998.

\bibitem[Yang and Koyejo(2020)]{yang2020consistency}
Forest Yang and Sanmi Koyejo.
\newblock On the consistency of top-k surrogate losses.
\newblock In \emph{International Conference on Machine Learning}, pages
  10727--10735, 2020.

\bibitem[Zhang(2004{\natexlab{a}})]{Zhang2003}
Tong Zhang.
\newblock Statistical behavior and consistency of classification methods based
  on convex risk minimization.
\newblock \emph{The Annals of Statistics}, 32\penalty0 (1):\penalty0 56--85,
  2004{\natexlab{a}}.

\bibitem[Zhang(2004{\natexlab{b}})]{zhang2004statistical}
Tong Zhang.
\newblock Statistical analysis of some multi-category large margin
  classification methods.
\newblock \emph{Journal of Machine Learning Research}, 5\penalty0
  (Oct):\penalty0 1225--1251, 2004{\natexlab{b}}.

\bibitem[Zhang and Sabuncu(2018)]{zhang2018generalized}
Zhilu Zhang and Mert Sabuncu.
\newblock Generalized cross entropy loss for training deep neural networks with
  noisy labels.
\newblock In \emph{Advances in neural information processing systems}, 2018.

\bibitem[Zheng et~al.(2023)Zheng, Wu, Bao, Cao, Li, and
  Zhu]{zheng2023revisiting}
Chenyu Zheng, Guoqiang Wu, Fan Bao, Yue Cao, Chongxuan Li, and Jun Zhu.
\newblock Revisiting discriminative vs. generative classifiers: Theory and
  implications.
\newblock In \emph{International Conference on Machine Learning}, 2023.

\end{thebibliography}

\newpage
\appendix

\renewcommand{\contentsname}{Contents of Appendix}
\tableofcontents
\addtocontents{toc}{\protect\setcounter{tocdepth}{4}} 
\clearpage

\section{Proof of Lemma~\ref{lemma:regret-target}}
\label{app:regret-target}

\RegretTarget*
\begin{proof}
By definition, for any $h \in \sH$ and $x \in \sX$, the conditional
error of top-$k$ loss can be written as
\begin{equation*}
\sC_{ \ell_k }(h, x) =  \sum_{y\in \sY} p(x,y) 1_{y \notin \curl*{\hh_1(x), \ldots, \hh_k(x)}} = 1 - \sum_{i = 1}^k p(x, \hh_i(x)).
\end{equation*}
By definition of the labels $\pp_i(x)$, which are the most likely
top-$k$ labels, $\sC_{ \ell_k }(h, x)$ is minimized for $\hh_i(x) =
k_{\min}(x)$, $i \in [k]$. Since $\sH$ is regular, this choice is
realizable for some $h \in \sH$. Thus, we have
\begin{equation*}
  \sC^*_{\ell_k }(\sH, x)
  = \inf_{h \in \sH} \sC_{ \ell_k }(h, x)
  = 1 - \sum_{i = 1}^k p(x, \pp_i(x)).    
\end{equation*}
Furthermore, the calibration gap can be expressed as
\begin{align*}
\Delta\sC_{\ell_k, \sH}(h, x)  = \sC_{ \ell_k }(h, x) - \sC^*_{\ell_k}(\sH, x) = \sum_{i = 1}^k \paren*{p(x, \pp_i(x)) - p(x, \hh_i(x))},
\end{align*}
which completes the proof.
\end{proof}

\section{Proofs of \texorpdfstring{$\sH$}{H}-consistency bounds for comp-sum losses}
\label{app:bound-comp}

\BoundComp*
\begin{proof}
\textbf{Case I: $\wt \ell_{\rm{comp}} = \wt \ell_{\log}$.} For logistic loss $\wt \ell_{\log}$, the conditional regret can be written as 
\begin{align*}
\Delta\sC_{\wt \ell_{\log}, \sH}(h, x) 
& = \sum_{y = 1}^n p(x, y) \wt \ell_{\log}(h, x, y) - \inf_{h \in \sH} \sum_{y = 1}^n p(x, y) \wt \ell_{\log}(h, x, y)\\
& \geq \sum_{y = 1}^n p(x, y) \wt \ell_{\log}(h, x, y) - \inf_{\mu \in \Rset} \sum_{y = 1}^n p(x, y) \wt \ell_{\log}(h_{\mu, i}, x, y),
\end{align*}
where for any $i \in [k]$, $h_{\mu, i}(x, y) = \begin{cases}
h(x, y), & y \notin \curl*{\pp_i(x), \hh_i(x)}\\
\log\paren*{e^{h(x, \pp_i(x))} + \mu} & y = \hh_i(x)\\
\log\paren*{e^{h(x, \hh_i(x))} - \mu} & y = \pp_i(x).
\end{cases}$
Note that such a choice of $h_{\mu, i}$ leads to the following equality holds:
\begin{equation*}
\sum_{y \notin \curl*{\hh_i(x), \pp_i(x)}} p(x, y) \wt \ell_{\log}(h, x, y) = \sum_{y \notin \curl*{\hh_i(x), \pp_i(x)}}  p(x, y) \wt \ell_{\log}(h_{\mu, i}, x, y).
\end{equation*}
Therefore, for any $i \in [k]$, the conditional regret of logistic loss can be lower bounded as
\begin{align*}
\Delta\sC_{\wt \ell_{\log}, \sH}(h, x)  & \geq -p(x, \hh_i(x)) \log\paren*{\frac{e^{h(x, \hh_i(x))}}{\sum_{y \in \sY} e^{h(x, y)}}} - p(x, \pp_i(x)) \log\paren*{\frac{e^{h(x, \pp_i(x))}}{\sum_{y \in \sY} e^{h(x, y)}}}\\
& \qquad + \sup_{\mu \in \Rset} \paren*{ p(x, \hh_i(x)) \log\paren*{\frac{e^{h(x, \pp_i(x))} + \mu}{\sum_{y \in \sY} e^{h(x, y)}}} + p(x, \pp_i(x)) \log\paren*{\frac{e^{h(x, \hh_i(x))} - \mu}{\sum_{y \in \sY} e^{h(x, y)}}} }\\
& = \sup_{\mu \in \Rset} \paren*{ p(x, \hh_i(x))\log\paren*{\frac{e^{h(x, \pp_i(x))} + \mu}{e^{h(x, \hh_i(x))}}} + p(x, \pp_i(x)) \log\paren*{\frac{e^{h(x, \hh_i(x))} - \mu }{e^{h(x, \pp_i(x))}}} }.
\end{align*}
By the concavity of the function, differentiate with respect to $\mu$, we obtain that the supremum is achieved by $\mu^* = \frac{p(x, \hh_i(x)) e^{h(x, \hh_i(x))} - p(x, \pp_i(x)) e^{h(x, \pp_i(x))} }{p(x, \hh_i(x)) + p(x, \pp_i(x))}$. Plug in $\mu^*$, we obtain
\begin{align*}
& \Delta\sC_{\wt \ell_{\log}, \sH}(h, x)\\ 
& \geq p(x, \hh_i(x)) \log \paren*{\frac{p(x, \hh_i(x))}{p(x, \hh_i(x)) + p(x, \pp_i(x))} \frac{e^{h(x, \hh_i(x))} + e^{h(x, \pp_i(x))} }{e^{h(x, \hh_i(x))}}}\\
& \qquad + p(x, \pp_i(x)) \log \paren*{\frac{p(x, \pp_i(x))}{p(x, \hh_i(x)) + p(x, \pp_i(x))} \frac{e^{h(x, \hh_i(x))} + e^{h(x, \pp_i(x))} }{e^{h(x, \pp_i(x))}}}\\
& \geq p(x, \hh_i(x)) \log \paren*{\frac{2p(x, \hh_i(x))}{p(x, \hh_i(x)) + p(x, \pp_i(x))}} + p(x, \pp_i(x)) \log \paren*{\frac{2 p(x, \pp_i(x))}{p(x, \hh_i(x)) + p(x, \pp_i(x))}}
\tag{minimum is achieved when $h(x, \hh_i(x)) = h(x, \pp_i(x))$}.
\end{align*}
let $S_i = p(x, \pp_i(x)) + p(x, \hh_i(x))$ and $\Delta_i = p(x, \pp_i(x)) - p(x, \hh_i(x))$, we have
\begin{align*}
\Delta\sC_{\wt \ell_{\log}, \sH}(h, x) 
& \geq \frac{S_i - \Delta_i}{2}\log(\frac{S_i - \Delta_i}{S_i}) + \frac{S_i + \Delta_i}{2}\log(\frac{S_i + \Delta_i}{S_i})\\
& \geq \frac{1 - \Delta_i}{2}\log(1 - \Delta_i) + \frac{1 + \Delta_i}{2}\log(1 + \Delta_i)
\tag{minimum is achieved when $S_i = 1$}\\
&  = \psi \paren*{p(x, \pp_i(x)) - p(x, \hh_i(x))},
\end{align*}
where $\psi(t) = \frac{1 - t}{2}\log(1 - t) + \frac{1 + t}{2}\log(1+ t)$, $t \in [0,1]$.
Therefore, the conditional regret of the top-$k$ loss can be upper bounded as follows:
\begin{equation*}
\Delta \sC_{\ell_k, \sH}(h, x) = \sum_{i = 1}^k \paren*{p(x, \pp_i(x)) - p(x, \hh_i(x))} \leq k \psi^{-1} \paren*{\Delta\sC_{\wt \ell_{\log}, \sH}(h, x) }.
\end{equation*}
By the concavity of $\psi^{-1}$, taking expectations on both sides of the preceding equation, we obtain
\begin{equation*}
\sE_{\ell_k}(h) - \sE^*_{\ell_k}(\sH) + \sM_{\ell_k}(\sH) \leq k \psi^{-1} \paren*{ \sE_{\wt \ell_{\log}}(h) - \sE^*_{\wt \ell_{\log}}(\sH) + \sM_{\wt \ell_{\log}}(\sH) }.
\end{equation*}
The second part
follows from the fact that when $\sA_{\wt \ell_{\log}}(\sH) = 0$, the
minimizability gap $\sM_{\wt \ell_{\log}}(\sH)$ vanishes. 

\textbf{Case II: $\wt \ell_{\rm{comp}} = \wt \ell_{\exp}$.} For sum exponential loss $\wt \ell_{\exp}$, the conditional regret can be written as 
\begin{align*}
\Delta\sC_{\wt \ell_{\exp}, \sH}(h, x) 
& = \sum_{y = 1}^n p(x, y) \wt \ell_{\exp}(h, x, y) - \inf_{h \in \sH} \sum_{y = 1}^n p(x, y) \wt \ell_{\exp}(h, x, y)\\
& \geq \sum_{y = 1}^n p(x, y) \wt \ell_{\exp}(h, x, y) - \inf_{\mu \in \Rset} \sum_{y = 1}^n p(x, y) \wt \ell_{\exp}(h_{\mu, i}, x, y),
\end{align*}
where for any $i \in [k]$, $h_{\mu, i}(x, y) = \begin{cases}
h(x, y), & y \notin \curl*{\pp_i(x), \hh_i(x)}\\
\log\paren*{e^{h(x, \pp_i(x))} + \mu} & y = \hh_i(x)\\
\log\paren*{e^{h(x, \hh_i(x))} - \mu} & y = \pp_i(x).
\end{cases}$
Note that such a choice of $h_{\mu, i}$ leads to the following equality holds:
\begin{equation*}
\sum_{y \notin \curl*{\hh_i(x), \pp_i(x)}} p(x, y) \wt \ell_{\exp}(h, x, y) = \sum_{y \notin \curl*{\hh_i(x), \pp_i(x)}}  p(x, y) \wt \ell_{\exp}(h_{\mu, i}, x, y).
\end{equation*}
Therefore, for any $i \in [k]$, the conditional regret of sum exponential loss can be lower bounded as
\begin{align*}
\Delta\sC_{\wt \ell_{\exp}, \sH}(h, x)  & \geq \sum_{y' \in \sY} \exp \paren*{h(x, y')}\bracket*{\frac{p(x, \hh_i(x))} {\exp \paren*{h(x,  \hh_i(x))}} + \frac{p(x, \pp_i(x))} {\exp\paren*{h(x, \pp_i(x))}}}\\
& \qquad + \sup_{\mu \in \Rset} \paren*{- \sum_{y'\in \sY} \exp\paren*{h(x, y')}\bracket*{\frac{p(x, \hh_i(x))}{ \exp\paren*{h(x, \pp_i(x))} + \mu} + \frac{p(x, \pp_i(x))}{\exp\paren*{h(x, \hh_i(x))} - \mu}} }.
\end{align*}
By the concavity of the function, differentiate with respect to $\mu$, we obtain that the supremum is achieved by $\mu^* =\frac{\exp \bracket*{h(x,\hh_i(x))}\sqrt{p(x,\hh_i(x))} - \exp\bracket*{h(x, \pp_i(x))}\sqrt{p(x, \pp_i(x))}}{\sqrt{p(x, \hh_i(x))} + \sqrt{p(x,\pp_i(x))}}$. Plug in $\mu^*$, we obtain
\begin{align*}
& \Delta\sC_{\wt \ell_{\exp}, \sH}(h, x)\\ 
& \geq \sum_{y'\in \sY} \exp\paren*{h(x, y')} \bracket*{\frac{p(x, \hh_i(x))} {\exp\paren*{h(x, \hh_i(x))}} + \frac{p(x, \pp_i(x))} {\exp\paren*{h(x, \pp_i(x))}} - \frac{\paren*{\sqrt{p(x, \hh_i(x))} + \sqrt{p(x, \pp_i(x))}}^2}{\exp\paren*{h(x, \pp_i(x))} + \exp\paren*{h(x, \hh_i(x))}}}\\
&\geq \bracket*{1 + \frac{\exp\paren*{h(x, \pp_i(x))}}{\exp\paren*{h(x, \hh_i(x))}}}p(x, \hh_i(x)) + \bracket*{1 + \frac{\exp\paren*{h(x, \hh_i(x))}}{\exp\paren*{h(x, \pp_i(x))}}}p(x, \pp_i(x)) - \paren*{\sqrt{p(x, \hh_i(x))} + \sqrt{p(x, \pp_i(x))}}^2
\tag{$\sum_{y' \in \sY} \exp\paren*{h(x, y')}\geq \exp\paren*{h(x, \pp_i(x))} + \exp\paren*{h(x, \hh_i(x))}$}\\
& \geq 2 p(x,\hh_i(x)) + 2 p(x,\pp_i(x)) - \paren*{\sqrt{p(x, \hh_i(x))} + \sqrt{p(x, \pp_i(x))}}^2 \tag{minimum is attained when $\frac{\exp\paren*{h(x, \pp_i(x))}}{\exp\paren*{h(x,\hh_i(x))}} = 1$}.
\end{align*}
let $S_i = p(x, \pp_i(x)) + p(x, \hh_i(x))$ and $\Delta_i = p(x, \pp_i(x)) - p(x, \hh_i(x))$, we have
\begin{align*}
\Delta\sC_{\wt \ell_{\exp}, \sH}(h, x) 
& \geq 2 S_i - \paren*{\sqrt{\frac{S_i + \Delta_i}{2}} + \sqrt{\frac{S_i - \Delta_i}{2}}}^2\\
& \geq 2\bracket*{1 -\bracket*{\frac{\paren*{1 + \Delta_i}^{\frac1{2}} + \paren*{1 - \Delta_i}^{\frac1{2}}}{2}}^{2}} 
\tag{minimum is achieved when $S_i = 1$}\\
& = 1 - \sqrt{1 - (\Delta_i)^2} \\
&  = \psi \paren*{p(x, \pp_i(x)) - p(x, \hh_i(x))},
\end{align*}
where $\psi(t) = 1 - \sqrt{1 - t^2}$, $t \in [0,1]$.
Therefore, the conditional regret of the top-$k$ loss can be upper bounded as follows:
\begin{equation*}
\Delta \sC_{\ell_k, \sH}(h, x) = \sum_{i = 1}^k \paren*{p(x, \pp_i(x)) - p(x, \hh_i(x))} \leq k \psi^{-1} \paren*{\Delta\sC_{\wt \ell_{\exp}, \sH}(h, x) }.
\end{equation*}
By the concavity of $\psi^{-1}$, taking expectations on both sides of the preceding equation, we obtain
\begin{equation*}
\sE_{\ell_k}(h) - \sE^*_{\ell_k}(\sH) + \sM_{\ell_k}(\sH) \leq k \psi^{-1} \paren*{ \sE_{\wt \ell_{\exp}}(h) - \sE^*_{\wt \ell_{\exp}}(\sH) + \sM_{\wt \ell_{\exp}}(\sH) }.
\end{equation*}
The second part
follows from the fact that when $\sA_{\wt \ell_{\exp}}(\sH) =
0$, the minimizability gap $\sM_{\wt \ell_{\exp}}(\sH)$
vanishes.

\textbf{Case III: $\wt \ell_{\rm{comp}} = \wt \ell_{\rm{mae}}$.} For mean absolute error loss $\wt \ell_{\rm{mae}}$, the conditional regret can be written as 
\begin{align*}
\Delta\sC_{\wt \ell_{\rm{mae}}, \sH}(h, x) 
& = \sum_{y = 1}^n p(x, y) \wt \ell_{\rm{mae}}(h, x, y) - \inf_{h \in \sH} \sum_{y = 1}^n p(x, y) \wt \ell_{\rm{mae}}(h, x, y)\\
& \geq \sum_{y = 1}^n p(x, y) \wt \ell_{\rm{mae}}(h, x, y) - \inf_{\mu \in \Rset} \sum_{y = 1}^n p(x, y) \wt \ell_{\rm{mae}}(h_{\mu, i}, x, y),
\end{align*}
where for any $i \in [k]$, $h_{\mu, i}(x, y) = \begin{cases}
h(x, y), & y \notin \curl*{\pp_i(x), \hh_i(x)}\\
\log\paren*{e^{h(x, \pp_i(x))} + \mu} & y = \hh_i(x)\\
\log\paren*{e^{h(x, \hh_i(x))} - \mu} & y = \pp_i(x).
\end{cases}$
Note that such a choice of $h_{\mu, i}$ leads to the following equality holds:
\begin{equation*}
\sum_{y \notin \curl*{\hh_i(x), \pp_i(x)}} p(x, y) \wt \ell_{\rm{mae}}(h, x, y) = \sum_{y \notin \curl*{\hh_i(x), \pp_i(x)}}  p(x, y) \wt \ell_{\rm{mae}}(h_{\mu, i}, x, y).
\end{equation*}
Therefore, for any $i \in [k]$, the conditional regret of mean absolute error loss can be lower bounded as
\begin{align*}
& \Delta\sC_{\wt \ell_{\rm{mae}}, \sH}(h, x)\\  & \geq p(x,\hh_i(x)) \paren*{1-\frac{\exp\paren*{h(x,\hh_i(x))}}{\sum_{y'\in \sY}\exp\paren*{h(x,y')}}} +p(x,\pp_i(x)) \paren*{1-\frac{\exp\paren*{h(x,\pp_i(x))}}{\sum_{y'\in \sY}\exp\paren*{h(x,y')}}}\\
& \quad + \sup_{\mu \in \Rset} \paren*{-p(x,\pp_i(x)) \paren*{1-\frac{\exp\paren*{h(x,\hh_i(x))}-\mu}{\sum_{y'\in \sY}\exp\paren*{h(x,y')}}} -p(x,\hh_i(x)) \paren*{1-\frac{\exp\paren*{h(x,\pp_i(x))}+\mu}{\sum_{y'\in \sY}\exp\paren*{h(x,y')}}}}.
\end{align*}
By the concavity of the function, differentiate with respect to $\mu$, we obtain that the supremum is achieved by $\mu^* = -\exp\bracket*{h(x, \pp_i(x)}$. Plug in $\mu^*$, we obtain
\begin{align*}
& \Delta\sC_{\wt \ell_{\rm{mae}}, \sH}(h, x)\\ 
& \geq p(x,\pp_i(x))\frac{\exp\paren*{h(x,\hh_i(x))}}{\sum_{y'\in \sY}\exp\paren*{h(x,y')}}-p(x,\hh_i(x))\frac{\exp\paren*{h(x,\hh_i(x))}}{\sum_{y'\in \sY}\exp\paren*{h(x,y')}}\\
&  \geq \frac{1}{n}
\paren*{p(x,\pp_i(x)) - p(x,\hh_i(x))}
\tag{$\frac{\exp\paren*{h(x,\hh_i(x))}}{\sum_{y'\in \sY}\exp\paren*{h(x,y')}}\geq \frac1{n}$}
\end{align*}
Therefore, the conditional regret of the top-$k$ loss can be upper bounded as follows:
\begin{equation*}
\Delta \sC_{\ell_k, \sH}(h, x) = \sum_{i = 1}^k \paren*{p(x, \pp_i(x)) - p(x, \hh_i(x))} \leq k n \paren*{\Delta\sC_{\wt \ell_{\rm{mae}}, \sH}(h, x) }.
\end{equation*}
Take expectations on both sides of the preceding equation, we obtain
\begin{equation*}
\sE_{\ell_k}(h) - \sE^*_{\ell_k}(\sH) + \sM_{\ell_k}(\sH) \leq k n \paren*{ \sE_{\wt \ell_{\rm{mae}}}(h) - \sE^*_{\wt \ell_{\rm{mae}}}(\sH) + \sM_{\wt \ell_{\rm{mae}}}(\sH) }.
\end{equation*}
The second part
follows from the fact that when $\sA_{\wt \ell_{\rm{mae}}}(\sH) = 0$, the
minimizability gap $\sM_{\wt \ell_{\rm{mae}}}(\sH)$ vanishes.

\textbf{Case IV: $\wt \ell_{\rm{comp}} = \wt \ell_{\rm{gce}}$.} For generalized cross-entropy loss $\wt \ell_{\rm{gce}}$, the conditional regret can be written as 
\begin{align*}
& \Delta\sC_{\wt \ell_{\rm{gce}}, \sH}(h, x)\\ 
& = \sum_{y = 1}^n p(x, y) \wt \ell_{\rm{gce}}(h, x, y) - \inf_{h \in \sH} \sum_{y = 1}^n p(x, y) \wt \ell_{\rm{gce}}(h, x, y)\\
& \geq \sum_{y = 1}^n p(x, y) \wt \ell_{\rm{gce}}(h, x, y) - \inf_{\mu \in \Rset} \sum_{y = 1}^n p(x, y) \wt \ell_{\rm{gce}}(h_{\mu, i}, x, y),
\end{align*}
where for any $i \in [k]$, $h_{\mu, i}(x, y) = \begin{cases}
h(x, y), & y \notin \curl*{\pp_i(x), \hh_i(x)}\\
\log\paren*{e^{h(x, \pp_i(x))} + \mu} & y = \hh_i(x)\\
\log\paren*{e^{h(x, \hh_i(x))} - \mu} & y = \pp_i(x).
\end{cases}$
Note that such a choice of $h_{\mu, i}$ leads to the following equality holds:
\begin{equation*}
\sum_{y \notin \curl*{\hh_i(x), \pp_i(x)}} p(x, y) \wt \ell_{\rm{gce}}(h, x, y) = \sum_{y \notin \curl*{\hh_i(x), \pp_i(x)}}  p(x, y) \wt \ell_{\rm{gce}}(h_{\mu, i}, x, y).
\end{equation*}
Therefore, for any $i \in [k]$, the conditional regret of generalized cross-entropy loss can be lower bounded as
\begin{align*}
& \q \Delta\sC_{\wt \ell_{\rm{gce}}, \sH}(h, x)\\
& \geq  p(x, \hh_i(x)) \paren*{1 - \bracket*{\frac{\exp\paren*{h(x, \hh_i(x))}}{\sum_{y' \in \sY}\exp\paren*{h(x, y')}}}^{\q}} + p(x, \pp_i(x)) \paren*{1 - \bracket*{\frac{\exp\paren*{h(x, \pp_i(x))}}{\sum_{y'\in \sY}\exp\paren*{h(x, y')}}}^{\q}}\\
& + \sup_{\mu \in \Rset} \paren*{ -p(x, \hh_i(x)) \paren*{1 - \bracket*{\frac{\exp\paren*{h(x, \pp_i(x))} + \mu}{\sum_{y' \in \sY}\exp\paren*{h(x, y')}}}^{\q}} - p(x, \pp_i(x)) \paren*{1 - \bracket*{\frac{\exp\paren*{h(x, \hh_i(x))} - \mu}{\sum_{y' \in \sY}\exp\paren*{h(x, y')}}}^{\q}} }.
\end{align*}
By the concavity of the function, differentiate with respect to $\mu$, we obtain that the supremum is achieved by $\mu^* = \frac{\exp\bracket*{h(x,\hh_i(x))}p(x, \pp_i(x))^{\frac1{\q - 1}} - \exp\bracket*{h(x, \pp_i(x))}p(x,\hh_i(x))^{\frac1{\q - 1}}}{p(x,\hh_i(x))^{\frac1{\q - 1}} + p(x,\pp_i(x))^{\frac1{\q - 1}}}$. Plug in $\mu^*$, we obtain
\begin{align*}
& \q \Delta\sC_{\wt \ell_{\rm{gce}}, \sH}(h, x)\\ 
& \geq p(x, \hh_i(x))\bracket*{\frac{\bracket*{\exp\paren*{h(x, \hh_i(x))} + \exp\paren*{h(x, \pp_i(x))}}p(x, \pp_i(x))^{\frac1{\q - 1}}}{\sum_{y'\in \sY}\exp\paren*{h(x, y')}\bracket*{p(x, \hh_i(x))^{\frac1{\q - 1}} + p(x, \pp_i(x))^{\frac1{\q - 1}}}}}^{\q } - p(x, \hh_i(x))\bracket*{\frac{\exp\paren*{h(x, \hh_i(x))}}{\sum_{y'\in \sY}\exp\paren*{h(x, y')}}}^{\q }\\
&\quad +p(x, \pp_i(x))\bracket*{\frac{\bracket*{\exp\paren*{h(x, \hh_i(x))} + \exp\paren*{h(x, \pp_i(x))}}p(x, \hh_i(x))^{\frac1{\q - 1}}}{\sum_{y'\in \sY}\exp\paren*{h(x, y')}\bracket*{p(x, \hh_i(x))^{\frac1{\q - 1}} + p(x, \pp_i(x))^{\frac1{\q - 1}}}}}^{\q } - p(x, \pp_i(x))\bracket*{\frac{\exp\paren*{h(x, \pp_i(x))}}{\sum_{y'\in \sY}\exp\paren*{h(x, y')}}}^{\q }\\
&\geq \frac{1}{n^{\q}}\paren*{p(x,\hh_i(x))\bracket*{\frac{2p(x,\pp_i(x))^{\frac1{\q - 1}}}{p(x,\hh_i(x))^{\frac1{\q - 1}}+p(x,\pp_i(x))^{\frac1{\q - 1}}}}^{\q}-p(x,\hh_i(x))}\\
& + \frac{1}{n^{\q}}\paren*{p(x,\pp_i(x))\bracket*{\frac{2p(x,\hh_i(x))^{\frac1{\q - 1}}}{p(x,\hh_i(x))^{\frac1{\q - 1}}+p(x,\pp_i(x))^{\frac1{\q - 1}}}}^{\q}-p(x,\pp_i(x))}
\tag{$\paren*{\frac{\exp\paren*{h(x,\pp_i(x))}}{\sum_{y'\in \sY}\exp\paren*{h(x,y')}}}^{\q }\geq \frac1{n^{\q }}$ and minimum is attained when $\frac{\exp\paren*{h(x,\pp_i(x))}}{\exp\paren*{h(x,\hh_i(x))}}=1$}\\
\end{align*}
let $S_i = p(x, \pp_i(x)) + p(x, \hh_i(x))$ and $\Delta_i = p(x, \pp_i(x)) - p(x, \hh_i(x))$, we have
\begin{align*}
\Delta\sC_{\wt \ell_{\rm{gce}}, \sH}(h, x) 
& \geq \frac{1}{\q n^{\q}}\paren*{\bracket*{\frac{\paren*{S_i + \Delta_i}^{\frac1{1 - \q }}+\paren*{S_i - \Delta_i}^{\frac1{1 - \q}}}{2}}^{1 - \q} - S_i}\\
& \geq \frac{1}{\q n^{\q}}\paren*{\bracket*{\frac{\paren*{1 + \Delta_i}^{\frac1{1 - \q }}+\paren*{1 - \Delta_i}^{\frac1{1 - \q}}}{2}}^{1 - \q} - 1}\\
\tag{minimum is achieved when $S_i = 1$}\\
&  = \psi \paren*{p(x, \pp_i(x)) - p(x, \hh_i(x))},
\end{align*}
where $\psi(t) = \frac{1}{\q n^{\q}}
\bracket*{\bracket*{\frac{\paren*{1 + t}^{\frac1{1 - \q }} +
      \paren*{1 - t}^{\frac1{1 - \q }}}{2}}^{1 - \q } -1}$, $t \in [0,1]$.
Therefore, the conditional regret of the top-$k$ loss can be upper bounded as follows:
\begin{equation*}
\Delta \sC_{\ell_k, \sH}(h, x) = \sum_{i = 1}^k \paren*{p(x, \pp_i(x)) - p(x, \hh_i(x))} \leq k \psi^{-1} \paren*{\Delta\sC_{\wt \ell_{\rm{gce}}, \sH}(h, x) }.
\end{equation*}
By the concavity of $\psi^{-1}$, taking expectations on both sides of the preceding equation, we obtain
\begin{equation*}
\sE_{\ell_k}(h) - \sE^*_{\ell_k}(\sH) + \sM_{\ell_k}(\sH) \leq k \psi^{-1} \paren*{ \sE_{\wt \ell_{\rm{gce}}}(h) - \sE^*_{\wt \ell_{\rm{gce}}}(\sH) + \sM_{\wt \ell_{\rm{gce}}}(\sH) }.
\end{equation*}
The second
part follows from the fact that when $\sA_{\wt \ell_{\rm{gce}}}(\sH) = 0$,
the minimizability gap $\sM_{\wt \ell_{\rm{gce}}}(\sH)$ vanishes.
\end{proof}

\section{Minimizability gaps and realizability}
\label{app:min_re}

The key quantities in our $\sH$-consistency bounds are the
minimizability gaps, which can be upper bounded by the approximation
error, or more refined terms, depending on the magnitude of the
parameter space, as discussed by \citet{mao2023cross}. As pointed out
by these authors, these quantities, along with the functional form,
can help compare different comp-sum loss functions.

Here, we further discuss the important role of minimizability gaps
under the realizability assumption, and the connection with some
negative results of \citet{yang2020consistency}.

\begin{definition}[\textbf{top-$k$-$\sH$-realizability}]
\label{def:rel}
A distribution $\sD$ over $\sX\times\sY$ is
\emph{top-$k$-$\sH$-realizable}, if there exists a hypothesis $h\in
\sH$ such that $\mathbb{P}_{(x,y)\sim \sD}\paren*{h(x, y) > h(x,
  \hh_{k + 1}(x))} = 1$.
\end{definition}
This extends the $\sH$-realizability definition from standard
(top-$1$) classification \citep{long2013consistency} to top-$k$
classification for any $k \geq 1$.
\begin{definition}
  We say that a hypothesis set \emph{$\sH$ is closed under scaling},
  if it is a cone, that is for all $h \in \sH$ and $\beta \in
  \Rset_{+}$, $\beta h \in \sH$.
\end{definition}
\begin{definition}
We say that a surrogate loss $\wt \ell$ is \emph{realizable
$\sH$-consistent with respect to $\ell_k$}, if for all $k \in [1, n]$,
and for any sequence of hypotheses $\curl*{h_n}_{n \in \Nset} \subset
\sH$ and top-$k$-$\sH$-realizable distribution, $\lim_{n \to \plus
  \infty} \sE_{\wt \ell}\paren*{h_n} - \sE^*_{\wt \ell}\paren*{\sH} = 0$
implies $\lim_{n \to \plus \infty} \sE_{\ell_{k}}\paren*{h_n} -
\sE^*_{\ell_{k}}\paren*{\sH} = 0$.
\end{definition}
When $\sH$ is closed under scaling, for $k = 1$ and all comp-sum loss
functions $\ell = \wt \ell_{\log}$, $\wt \ell_{\exp}$,
$\wt \ell_{\rm{gce}}$ and $\wt \ell_{\rm{mae}}$, it can be shown that
$\sE^*_{\wt \ell}(\sH) = \sM_{\wt \ell}(\sH) = 0$ for any $\sH$-realizable
distribution. For example, for $\ell = \wt \ell_{\log}$, by using the
Lebesgue dominated convergence theorem, we have
\begin{align*}
  \sM_{\wt \ell_{\log}}(\sH) &\leq \sE^*_{\wt \ell_{\log}}(\sH)
  \leq \lim_{\beta \to \plus \infty} \sE_{\wt \ell_{\log}}(\beta h^*)
= \lim_{\beta \to \plus \infty} \log \bracket[\bigg]{1 + \sum_{y' \neq y} e^{\beta \paren*{ h^*(x, y') - h^*(x, y)}}} = 0,
\end{align*}
where $h^*$ satisfies $\mathbb{P}_{(x,y)\sim \sD}\paren*{h^*(x, y) > h^*(x, \hh_{2}(x))} = 1$
Therefore, Theorem~\ref{thm:bound-comp} implies that all these loss
functions are realizable $\sH$-consistent with respect to $\ell_{0-1}$
($\ell_k$ for $k = 1$) when $\sH$ is closed under scaling.
\begin{restatable}{theorem}{RealizabilityP}
\label{thm:realizability-p}
Assume that $\sH$ is closed under scaling. Then, $\wt \ell_{\log}$,
$\wt \ell_{\exp}$, $\wt \ell_{\rm{gce}}$ and $\wt \ell_{\rm{mae}}$ are
realizable $\sH$-consistent with respect to $\ell_{0-1}$.
\end{restatable}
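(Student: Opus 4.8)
The plan is to derive Theorem~\ref{thm:realizability-p} as an immediate corollary of the $\sH$-consistency bound of Theorem~\ref{thm:bound-comp} specialized to $k = 1$, once we check that, under a realizable distribution and a scaling-closed $\sH$, both the surrogate best-in-class error and the surrogate minimizability gap vanish. Fix $\wt\ell$ to be one of $\wt\ell_{\log}$, $\wt\ell_{\exp}$, $\wt\ell_{\rm{gce}}$, $\wt\ell_{\rm{mae}}$, let $\sD$ be top-$1$-$\sH$-realizable, and let $h^\ast \in \sH$ satisfy $h^\ast(x, y) > h^\ast(x, \hh_2(x))$ with probability one. Since $\sH$ is closed under scaling, $\beta h^\ast \in \sH$ for every $\beta > 0$.

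\textbf{Step 1 (surrogate error and minimizability gap vanish).} On the probability-one event above, $y$ is the unique top label of $h^\ast(x, \cdot)$, hence $h^\ast(x, y') - h^\ast(x, y) \le 0$ for all $y' \neq y$, so for every $\beta \ge 0$ we have $\sum_{y' \neq y} e^{\beta(h^\ast(x, y') - h^\ast(x, y))} \le n - 1$. This bounds each $\wt\ell(\beta h^\ast, x, y)$ by a fixed constant ($\log n$, $n-1$, $1$, or $\tfrac1\q$, respectively), supplying an integrable dominating function; and since $e^{\beta(h^\ast(x, y') - h^\ast(x, y))} \to 0$ as $\beta \to +\infty$ on that event, the dominated convergence theorem yields $\lim_{\beta \to +\infty} \sE_{\wt\ell}(\beta h^\ast) = 0$ (exactly as spelled out for $\wt\ell_{\log}$ in the discussion preceding the theorem). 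Therefore $\sE^\ast_{\wt\ell}(\sH) \le \inf_{\beta > 0} \sE_{\wt\ell}(\beta h^\ast) = 0$, and since $0 \le \sM_{\wt\ell}(\sH) \le \sA_{\wt\ell}(\sH) \le \sE^\ast_{\wt\ell}(\sH) = 0$, all three quantities are zero.

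\textbf{Step 2 (invoke the $\sH$-consistency bound).} Applying Theorem~\ref{thm:bound-comp} with $k = 1$, dropping the nonnegative term $\sM_{\ell_{0-1}}(\sH)$ from the left-hand side, and substituting $\sE^\ast_{\wt\ell}(\sH) = \sM_{\wt\ell}(\sH) = 0$ from Step~1, we get, with the corresponding $\psi$,
\[
\sE_{\ell_{0-1}}(h) - \sE^\ast_{\ell_{0-1}}(\sH) \;\le\; \psi^{-1}\!\paren[\big]{\sE_{\wt\ell}(h)} \qquad \text{for all } h \in \sH .
\]
Now take any sequence $\curl*{h_n}_{n \in \Nset} \subset \sH$ with $\sE_{\wt\ell}(h_n) - \sE^\ast_{\wt\ell}(\sH) \to 0$; by Step~1 this means $\sE_{\wt\ell}(h_n) \to 0$. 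Each of the four functions $\psi$ is continuous and increasing on $[0,1]$ with $\psi(0) = 0$, so $\psi^{-1}$ is continuous at $0$ with $\psi^{-1}(0) = 0$; hence the right-hand side tends to $0$, forcing $\sE_{\ell_{0-1}}(h_n) - \sE^\ast_{\ell_{0-1}}(\sH) \to 0$. Letting $\wt\ell$ range over $\wt\ell_{\log}, \wt\ell_{\exp}, \wt\ell_{\rm{gce}}, \wt\ell_{\rm{mae}}$ gives exactly realizable $\sH$-consistency with respect to $\ell_{0-1}$.

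\textbf{Main obstacle.} The only step requiring care is Step~1: the realizability hypothesis only provides an almost-sure strict ordering, not a uniform margin, so before applying dominated convergence one must exhibit an integrable envelope valid for all large $\beta$ and uniformly over the four losses. The elementary remark that on the true (top) label all margin exponentials are at most $1$ makes this envelope trivial; everything else is routine bookkeeping with Theorem~\ref{thm:bound-comp} and the basic monotonicity/continuity of the functions $\psi$.
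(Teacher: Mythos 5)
Your proposal is correct and follows essentially the same route as the paper: show via scaling and dominated convergence that $\sE^*_{\wt \ell}(\sH) = \sM_{\wt \ell}(\sH) = 0$ under the realizability assumption, then conclude by Theorem~\ref{thm:bound-comp} with $k = 1$. You merely fill in details the paper leaves implicit (the integrable envelope justifying dominated convergence and the continuity of $\psi^{-1}$ at $0$), which are correct but not a different argument.
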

The formal proof is presented in
Appendix~\ref{app:realizability}. However, for $ k > 1 $, since in the
realizability assumption, $h(x, y)$ is only larger than $h(x, \hh_{k +
  1}(x))$ and can be smaller than $h(x, \hh_{1}(x))$, there may exist an
$\sH$-realizable distribution $\sD$ such that $\sM_{\wt \ell_{\log}}(\sH)
> 0$. This explains the inconsistency of the logistic loss
on top-$k$ separable data with linear predictors, when $k = 2$ and $n
> 2$, as shown in \citep{yang2020consistency}. More generally, the exact same example in \citep[Proposition~5.1]{yang2020consistency} can be used to show that all the comp-sum losses, $\wt \ell_{\log}$,
$\wt \ell_{\exp}$, $\wt \ell_{\rm{gce}}$ and $\wt \ell_{\rm{mae}}$ are not realizable $\sH$-consistent with
respect to $\ell_k$. Nevertheless, as previously shown, when the
hypothesis set $\sH$ adopted is sufficiently rich such that
$\sM_{\wt \ell}(\sH) = 0$ or even $\sA_{\wt \ell}(\sH) = 0$, they are
guaranteed to be $\sH$-consistent. This is typically the case in
practice when using deep neural networks.

\section{Proofs of realizable \texorpdfstring{$\sH$}{H}-consistency for comp-sum losses}
\label{app:realizability}

\RealizabilityP*
\begin{proof}
Since the distribution is realizable, there exists a hypothesis $h \in \sH$ such that \[\mathbb{P}_{(x,y)\sim \sD}\paren*{h^*(x, y) > h^*(x, \hh_{2}(x))} = 1.\]
Therefore, for the logistic loss, by using the
Lebesgue dominated convergence theorem,
\begin{align*}
  \sM_{\wt \ell_{\log}}(\sH) &\leq \sE^*_{\wt \ell_{\log}}(\sH)
  \leq \lim_{\beta \to \plus \infty} \sE_{\wt \ell_{\log}}(\beta h) = \lim_{\beta \to \plus \infty} \log \bracket[\bigg]{1 + \sum_{y' \neq y} e^{\beta \paren*{ h^*(x, y') - h^*(x, y)}}} = 0.
\end{align*}
For the sum exponential loss, by using the
Lebesgue dominated convergence theorem,
\begin{align*}
  \sM_{\wt \ell_{\exp}}(\sH) &\leq \sE^*_{\wt \ell_{\exp}}(\sH)
  \leq \lim_{\beta \to \plus \infty} \sE_{\wt \ell_{\exp}}(\beta h) = \lim_{\beta \to \plus \infty} \sum_{y' \neq y} e^{ \beta\paren*{h^*(x, y') - h^*(x, y) }} = 0.
\end{align*}
For the generalized cross entropy loss, by using the
Lebesgue dominated convergence theorem,
\begin{align*}
  \sM_{\wt \ell_{\rm{gce}}}(\sH) &\leq \sE^*_{\wt \ell_{\rm{gce}}}(\sH)
  \leq \lim_{\beta \to \plus \infty} \sE_{\wt \ell_{\rm{gce}}}(\beta h) = \lim_{\beta \to \plus \infty} \frac{1}{\q}\bracket*{1 - \bracket*{\sum_{y' \in \sY}e^{ \beta( h^*(x, y')
      - h^*(x, y)) }}^{-\q}} = 0.
\end{align*}
For the mean absolute error loss, by using the
Lebesgue dominated convergence theorem,
\begin{align*}
  \sM_{\wt \ell_{\rm{mae}}}(\sH) &\leq \sE^*_{\wt \ell_{\rm{mae}}}(\sH)
  \leq \lim_{\beta \to \plus \infty} \sE_{\wt \ell_{\rm{mae}}}(\beta h) = \lim_{\beta \to \plus \infty} 1
- \bracket*{\sum_{y' \in \sY}e^{\beta \paren*{h^*(x, y') - h^*(x, y)} }}^{-1} = 0.
\end{align*}
Therefore, by Theorem~\ref{thm:bound-comp}, the proof is completed.
\end{proof}

\section{\texorpdfstring{$\sH$}{H}-Consistency bounds for constrained losses}
\label{app:cstnd}

Constrained losses are defined as a summation of a function
$\Phi$ applied to the scores, subject to a constraint, as shown in
\citep{lee2004multicategory,awasthi2022multi}. For any $h \in \sH$ and
$ (x, y) \in \sX \times \sY$, they are expressed as
\begin{align*}
\wt \ell_{\rm{cstnd}}(h, x, y)
= \sum_{y'\neq y} \Phi\paren*{-h(x, y')},
\end{align*}
with the constraint $\sum_{y\in \sY} h(x,y) = 0$, where $\Phi \colon
\Rset \to \Rset_{+} $ is non-increasing. When $\Phi$ is chosen as the
function $t \mapsto e^{-t}$, $t \mapsto \max \curl*{0, 1 - t}^2$, $t
\mapsto \max \curl*{0, 1 - t}$ and $t \mapsto \min\curl*{\max\curl*{0,
    1 - t/\rho}, 1}$, $\rho > 0$, $\wt \ell_{\rm{cstnd}}(h, x, y)$ are
referred to as the constrained exponential loss
$\wt \ell^{\rm{cstnd}}_{\exp}(h, x, y) = \sum_{y'\neq y} e^{h(x,
  y')}$, the constrained squared hinge loss
$\wt \ell_{\rm{sq-hinge}}(h, x, y) = \sum_{y'\neq y} \max \curl*{0, 1 + h(x,
  y')}^2$, the constrained hinge loss
$\wt \ell_{\rm{hinge}}(h, x, y) = \sum_{y'\neq y} \max \curl*{0, 1 +
  h(x, y')}$, and the constrained $\rho$-margin loss
$\wt \ell_{\rho}(h, x, y)
= \sum_{y'\neq y} \min\curl*{\max \curl*{0, 1 + h(x, y') / \rho},
  1}$, respectively \citep{awasthi2022multi}. We
now study these loss functions and show that they benefit
from $\sH$-consistency bounds with respect to the top-$k$ loss. 


\begin{restatable}{theorem}{BoundCstnd}
\label{thm:bound-cstnd}
Assume that $\sH$ is symmetric and complete. Then, for any $1 \leq k \leq n$, the following $\sH$-consistency bound holds for the constrained loss:
\begin{align*}
\sE_{\ell_k}(h) - \sE^*_{\ell_k}(\sH) + \sM_{\ell_k}(\sH) \leq k \gamma \paren*{ \sE_{\wt \ell_{\rm{cstnd}}}(h)
    - \sE^*_{\wt \ell_{\rm{cstnd}}}(\sH)
    + \sM_{\wt \ell_{\rm{cstnd}}}(\sH) }.
\end{align*}
In the special case where $\sA_{\wt \ell_{\rm{cstnd}}}(\sH) = 0$,
for any $1 \leq k \leq n$, the following bound holds:
\begin{align*}
  \sE_{\ell_k}(h) - \sE^*_{\ell_k}(\sH) \leq k \gamma
  \paren*{ \sE_{\wt \ell_{\rm{cstnd}}}(h)
    - \sE^*_{\wt \ell_{\rm{cstnd}}}(\sH)},
\end{align*}
where $\gamma(t) = 2\sqrt{t}$ when
$\wt \ell_{\rm{cstnd}}$ is either $\wt \ell^{\rm{cstnd}}_{\exp}$ or
$\wt \ell_{\rm{sq-hinge}}$; $\gamma(t) = t$ when
$\wt \ell_{\rm{cstnd}}$ is either $\wt \ell_{\rm{hinge}}$ or $\wt \ell_{\rho}$.
\end{restatable}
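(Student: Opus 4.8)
The plan is to follow the template of the proof of Theorem~\ref{thm:bound-comp}, replacing the comp-sum perturbation argument with one that respects the normalization constraint $\sum_{y \in \sY} h(x,y) = 0$ defining the constrained losses. The first step is to rewrite the conditional error of the constrained loss in label-indexed form, $\sC_{\wt \ell_{\rm{cstnd}}}(h, x) = \sum_{y \in \sY} (1 - p(x,y)) \Phi(-h(x,y))$, so that $\sC^*_{\wt \ell_{\rm{cstnd}}}(\sH, x)$ is an infimum over score vectors summing to zero. Then, by Lemma~\ref{lemma:regret-target}, the target conditional regret decomposes as $\Delta\sC_{\ell_k, \sH}(h, x) = \sum_{i=1}^k \bracket*{p(x, \pp_i(x)) - p(x, \hh_i(x))}$, so it suffices to control each summand by the surrogate conditional regret.

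For each index $i \in [k]$, I would introduce the perturbed hypothesis $h_{\mu, i}$ that coincides with $h$ outside $\curl*{\hh_i(x), \pp_i(x)}$ and sets $h_{\mu,i}(x, \hh_i(x)) = h(x, \pp_i(x)) + \mu$ and $h_{\mu,i}(x, \pp_i(x)) = h(x, \hh_i(x)) - \mu$; this swap-and-shift keeps the sum of the two coordinates, hence the global constraint, intact, and stays in $\sH$ by symmetry and completeness. Since the other coordinates are untouched, $\Delta\sC_{\wt \ell_{\rm{cstnd}}, \sH}(h, x) \geq \sup_{\mu} \bracket*{\sC_{\wt \ell_{\rm{cstnd}}}(h, x) - \sC_{\wt \ell_{\rm{cstnd}}}(h_{\mu, i}, x)}$, which reduces to a one-dimensional optimization in $\mu$ involving only $\Phi$ evaluated at the two scores and the probabilities $p(x, \hh_i(x)), p(x, \pp_i(x))$. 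For each of the four choices of $\Phi$ (constrained exponential, squared hinge, hinge, $\rho$-margin) I would solve this optimization (for the piecewise-linear losses, identifying which linear piece contains the optimum), then bound the worst case over the admissible scores, obtaining $\Delta\sC_{\wt \ell_{\rm{cstnd}}, \sH}(h, x) \geq \Gamma\paren*{p(x, \pp_i(x)) - p(x, \hh_i(x))}$ with $\Gamma = \gamma^{-1}$ convex, non-decreasing on $[0,1]$, and even. These single-index estimates are the constrained-loss analogues of Cases~II--IV of Theorem~\ref{thm:bound-comp} and, for $k = 1$, recover the known $\sH$-consistency bounds for constrained losses with respect to the zero-one loss in \citep{awasthi2022multi}.

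Summing the per-index estimates (which hold in absolute-value form on the left, so any index with a negative summand is covered trivially since $\gamma \geq 0$) and using that $\gamma$ is non-decreasing and concave, I get $\Delta\sC_{\ell_k, \sH}(h, x) \leq k\, \gamma\paren*{\Delta\sC_{\wt \ell_{\rm{cstnd}}, \sH}(h, x)}$ pointwise in $x$. Taking expectations over $x$, applying Jensen's inequality to the concave $\gamma$, and using $\E_x\bracket*{\Delta\sC_{\ell, \sH}(h, x)} = \sE_\ell(h) - \sE^*_\ell(\sH) + \sM_\ell(\sH)$ for $\ell \in \curl*{\ell_k, \wt \ell_{\rm{cstnd}}}$ yields the first bound; the second follows since $\sA_{\wt \ell_{\rm{cstnd}}}(\sH) = 0$ forces $\sM_{\wt \ell_{\rm{cstnd}}}(\sH) = 0$, and when $\sR = \sH_{\rm{all}}$ the target minimizability gap also vanishes.

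The main obstacle is the per-index computation for the non-smooth losses, the constrained hinge and $\rho$-margin losses: the kinks of $\Phi$ (and, for the $\rho$-margin loss, its saturation at $1$) mean the optimal $\mu$ need not be interior, so one must track which branch of the piecewise-linear objective is active and verify that the worst case over the free scores still yields the linear rate $\gamma(t) = t$. One also has to use the score-ordering constraints tying $h(x, \hh_i(x))$ to the $i$-th largest score correctly, exactly as in the comp-sum argument. The constrained exponential and squared hinge cases are smooth and parallel Case~II of Theorem~\ref{thm:bound-comp}.
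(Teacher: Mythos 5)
Your proposal matches the paper's proof essentially step for step: the same label-indexed rewriting of the conditional error, the same decomposition of the top-$k$ conditional regret via Lemma~\ref{lemma:regret-target}, the same swap-and-shift hypotheses $h_{\mu,i}$ (which is exactly how the paper preserves the zero-sum constraint), the same per-index one-dimensional optimization in $\mu$ for each of the four choices of $\Phi$, and the same summation, Jensen, and vanishing-minimizability-gap arguments. The only slip is cosmetic: the second part of the theorem follows from $\sA_{\wt \ell_{\rm{cstnd}}}(\sH) = 0$ forcing $\sM_{\wt \ell_{\rm{cstnd}}}(\sH) = 0$ and dropping the nonnegative $\sM_{\ell_k}(\sH)$, not from taking $\sH = \sH_{\rm{all}}$.
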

The proof is included in Appendix~\ref{app:bound-cstnd}. The
second part follows from the fact that when the hypothesis set $\sH$ is
sufficiently rich such that
$\sA_{\wt \ell_{\rm{cstnd}}}(\sH) = 0$, we have
$\sM_{\wt \ell_{\rm{cstnd}}}(\sH) = 0$.  Therefore, the constrained loss is $\sH$-consistent and Bayes-consistent with respect
to $\ell_k$. If the surrogate estimation error
$\sE_{\wt \ell_{\rm{cstnd}}}(h) -
\sE^*_{\wt \ell_{\rm{cstnd}}}(\sH)$ is $\e$, then, the target
estimation error satisfies $\sE_{\ell_k}(h) - \sE^*_{\ell_k}(\sH) \leq
k \gamma(\e)$. Note that the constrained exponential loss and the constrained squared hinge loss both admit a square root $\sH$-consistency bound while the bounds for the constrained hinge loss and $\rho$-margin loss are both linear.

\section{Proofs of \texorpdfstring{$\sH$}{H}-consistency bounds for constrained losses}
\label{app:bound-cstnd}

The conditional error for the constrained loss can be expressed as follows:
\begin{align*}
\sC_{\wt \ell_{\rm{cstnd}}}(h, x) = \sum_{y = 1}^n p(x, y)  \wt \ell_{\rm{cstnd}}(h, x, y) = \sum_{y = 1}^n p(x, y) \sum_{y'\neq y} \Phi\paren*{-h(x, y')} = \sum_{y\in \sY} \paren*{1-p(x, y)}\Phi\paren*{-h(x, y)}.
\end{align*}

\BoundCstnd*
\begin{proof}
\textbf{Case I: $\wt \ell_{\rm{cstnd}} = \wt \ell^{\rm{cstnd}}_{\exp}$.} For the constrained exponential loss $\wt \ell^{\rm{cstnd}}_{\exp}$, the conditional regret can be written as 
\begin{align*}
\Delta\sC_{\wt \ell^{\rm{cstnd}}_{\exp}, \sH}(h, x) 
& = \sum_{y = 1}^n p(x, y) \wt \ell^{\rm{cstnd}}_{\exp}(h, x, y) - \inf_{h \in \sH} \sum_{y = 1}^n p(x, y) \wt \ell^{\rm{cstnd}}_{\exp}(h, x, y)\\
& \geq \sum_{y = 1}^n p(x, y) \wt \ell^{\rm{cstnd}}_{\exp}(h, x, y) - \inf_{\mu \in \Rset} \sum_{y = 1}^n p(x, y) \wt \ell^{\rm{cstnd}}_{\exp}(h_{\mu, i}, x, y),
\end{align*}
where for any $i \in [k]$, $h_{\mu, i}(x, y) = \begin{cases}
h(x, y), & y \notin \curl*{\pp_i(x), \hh_i(x)}\\
h(x, \pp_i(x)) + \mu & y = \hh_i(x)\\
h(x, \hh_i(x)) - \mu & y = \pp_i(x).
\end{cases}$
Note that such a choice of $h_{\mu, i}$ leads to the following equality holds:
\begin{equation*}
\sum_{y \notin \curl*{\hh_i(x), \pp_i(x)}} p(x, y) \wt \ell^{\rm{cstnd}}_{\exp}(h, x, y) = \sum_{y \notin \curl*{\hh_i(x), \pp_i(x)}}  p(x, y) \wt \ell^{\rm{cstnd}}_{\exp}(h_{\mu, i}, x, y).
\end{equation*}
Let $q(x, \pp_i(x)) = 1 - p(x, \pp_i(x))$ and $q(x, \hh_i(x)) = 1 - p(x, \hh_i(x))$.
Therefore, for any $i \in [k]$, the conditional regret of constrained exponential loss can be lower bounded as
\begin{align*}
& \Delta\sC_{\wt \ell^{\rm{cstnd}}_{\exp}, \sH}(h, x)\\ 
& \geq  \inf_{h \in \sH}\sup_{\mu\in \Rset} \curl*{q(x, \pp_i(x))\paren*{e^{h(x, \pp_i(x))}-e^{h(x,\hh_i(x))-\mu}}+q(x,\hh_i(x))\paren*{e^{h(x,\hh_i(x))}-e^{h(x, \pp_i(x))+\mu}}}\\
& = \paren*{\sqrt{q(x, \pp_i(x))}-\sqrt{q(x,\hh_i(x))}}^2 \tag{differentiating with respect to $\mu$, $h$ to optimize}\\
&   =   \paren*{\frac{q(x,\hh_i(x)) - q(x, \pp_i(x))}{\sqrt{q(x, \pp_i(x))} + \sqrt{q(x, \hh_i(x))}}}^2\\
& \geq \frac1{4} \paren*{q(x,\hh_i(x)) - q(x, \pp_i(x))}^2 \tag{$0\leq q(x, y)\leq 1$}\\
& = \frac1{4} \paren*{p(x,\pp_i(x)) - p(x, \hh_i(x))}^2.
\end{align*}
Therefore, by Lemma~\ref{lemma:regret-target}, the conditional regret of the top-$k$ loss can be upper bounded as follows:
\begin{equation*}
\Delta \sC_{\ell_k, \sH}(h, x) = \sum_{i = 1}^k \paren*{p(x, \pp_i(x)) - p(x, \hh_i(x))} \leq 2 k \paren*{\Delta\sC_{\wt \ell^{\rm{cstnd}}_{\exp}, \sH}(h, x) }^{\frac12}.
\end{equation*}
By the concavity, taking expectations on both sides of the preceding equation, we obtain
\begin{equation*}
\sE_{\ell_k}(h) - \sE^*_{\ell_k}(\sH) + \sM_{\ell_k}(\sH) \leq 2 k\paren*{ \sE_{\wt \ell^{\rm{cstnd}}_{\exp}}(h) - \sE^*_{\wt \ell^{\rm{cstnd}}_{\exp}}(\sH) + \sM_{\wt \ell^{\rm{cstnd}}_{\exp}}(\sH) }^{\frac12}.
\end{equation*}
The second part follows from the fact that when
$\sA_{\wt \ell^{\rm{cstnd}}_{\exp}}(\sH) = 0$, we have
$\sM_{\wt \ell^{\rm{cstnd}}_{\exp}}(\sH) = 0$.

\textbf{Case II: $\wt \ell_{\rm{cstnd}} = \wt \ell_{\rm{sq-hinge}}$.} For the constrained squared hinge loss $\wt \ell_{\rm{sq-hinge}}$, the conditional regret can be written as 
\begin{align*}
\Delta\sC_{\wt \ell_{\rm{sq-hinge}}, \sH}(h, x) 
& = \sum_{y = 1}^n p(x, y) \wt \ell_{\rm{sq-hinge}}(h, x, y) - \inf_{h \in \sH} \sum_{y = 1}^n p(x, y) \wt \ell_{\rm{sq-hinge}}(h, x, y)\\
& \geq \sum_{y = 1}^n p(x, y) \wt \ell_{\rm{sq-hinge}}(h, x, y) - \inf_{\mu \in \Rset} \sum_{y = 1}^n p(x, y) \wt \ell_{\rm{sq-hinge}}(h_{\mu, i}, x, y),
\end{align*}
where for any $i \in [k]$, $h_{\mu, i}(x, y) = \begin{cases}
h(x, y), & y \notin \curl*{\pp_i(x), \hh_i(x)}\\
h(x, \pp_i(x)) + \mu & y = \hh_i(x)\\
h(x, \hh_i(x)) - \mu & y = \pp_i(x).
\end{cases}$
Note that such a choice of $h_{\mu, i}$ leads to the following equality holds:
\begin{equation*}
\sum_{y \notin \curl*{\hh_i(x), \pp_i(x)}} p(x, y) \wt \ell_{\rm{sq-hinge}}(h, x, y) = \sum_{y \notin \curl*{\hh_i(x), \pp_i(x)}}  p(x, y) \wt \ell_{\rm{sq-hinge}}(h_{\mu, i}, x, y).
\end{equation*}
Let $q(x, \pp_i(x)) = 1 - p(x, \pp_i(x))$ and $q(x, \hh_i(x)) = 1 - p(x, \hh_i(x))$.
Therefore, for any $i \in [k]$, the conditional regret of the constrained squared hinge loss can be lower bounded as
\begin{align*}
\Delta\sC_{\wt \ell_{\rm{sq-hinge}}, \sH}(h, x) 
&  \geq  \inf_{h \in \sH}  \sup_{\mu\in \Rset} \bigg\{q(x, \pp_i(x))\paren*{\max\curl*{0, 1 + h(x, \pp_i(x))}^2-\max\curl*{0, 1 + h(x,\hh_i(x))-\mu}^2 }\\
& \qquad + q(x,\hh_i(x))\paren*{\max\curl*{0, 1 + h(x,\hh_i(x))}^2-\max\curl*{0, 1 + h(x, \pp_i(x))+\mu}^2}\bigg\}\\
& \geq \frac14 \paren*{q(x,\pp_i(x))-q(x, \hh_i(x))}^2
\tag{differentiating with respect to $\mu$, $h$ to optimize}\\
& = \frac14 \paren*{p(x,\pp_i(x))-p(x, \hh_i(x))}^2
\end{align*}
Therefore, by Lemma~\ref{lemma:regret-target}, the conditional regret of the top-$k$ loss can be upper bounded as follows:
\begin{equation*}
\Delta \sC_{\ell_k, \sH}(h, x) = \sum_{i = 1}^k \paren*{p(x, \pp_i(x)) - p(x, \hh_i(x))} \leq 2 k \paren*{\Delta\sC_{\wt \ell_{\rm{sq-hinge}}, \sH}(h, x) }^{\frac12}.
\end{equation*}
By the concavity, taking expectations on both sides of the preceding equation, we obtain
\begin{equation*}
\sE_{\ell_k}(h) - \sE^*_{\ell_k}(\sH) + \sM_{\ell_k}(\sH) \leq 2 k\paren*{ \sE_{\wt \ell_{\rm{sq-hinge}}}(h) - \sE^*_{\wt \ell_{\rm{sq-hinge}}}(\sH) + \sM_{\wt \ell_{\rm{sq-hinge}}}(\sH) }^{\frac12}.
\end{equation*}
The second
part follows from the fact that when the hypothesis set $\sH$ is
sufficiently rich such that $\sA_{\wt \ell_{\rm{sq-hinge}}}(\sH) = 0$, we
have $\sM_{\wt \ell_{\rm{sq-hinge}}}(\sH) = 0$.

\textbf{Case III: $\wt \ell_{\rm{cstnd}} = \wt \ell_{\rm{hinge}}$.} For the constrained hinge loss $\wt \ell_{\rm{hinge}}$, the conditional regret can be written as 
\begin{align*}
\Delta\sC_{\wt \ell_{\rm{hinge}}, \sH}(h, x) 
& = \sum_{y = 1}^n p(x, y) \wt \ell_{\rm{hinge}}(h, x, y) - \inf_{h \in \sH} \sum_{y = 1}^n p(x, y) \wt \ell_{\rm{hinge}}(h, x, y)\\
& \geq \sum_{y = 1}^n p(x, y) \wt \ell_{\rm{hinge}}(h, x, y) - \inf_{\mu \in \Rset} \sum_{y = 1}^n p(x, y) \wt \ell_{\rm{hinge}}(h_{\mu, i}, x, y),
\end{align*}
where for any $i \in [k]$, $h_{\mu, i}(x, y) = \begin{cases}
h(x, y), & y \notin \curl*{\pp_i(x), \hh_i(x)}\\
h(x, \pp_i(x)) + \mu & y = \hh_i(x)\\
h(x, \hh_i(x)) - \mu & y = \pp_i(x).
\end{cases}$
Note that such a choice of $h_{\mu, i}$ leads to the following equality holds:
\begin{equation*}
\sum_{y \notin \curl*{\hh_i(x), \pp_i(x)}} p(x, y) \wt \ell_{\rm{hinge}}(h, x, y) = \sum_{y \notin \curl*{\hh_i(x), \pp_i(x)}}  p(x, y) \wt \ell_{\rm{hinge}}(h_{\mu, i}, x, y).
\end{equation*}
Let $q(x, \pp_i(x)) = 1 - p(x, \pp_i(x))$ and $q(x, \hh_i(x)) = 1 - p(x, \hh_i(x))$.
Therefore, for any $i \in [k]$, the conditional regret of the constrained hinge loss can be lower bounded as
\begin{align*}
\Delta\sC_{\wt \ell_{\rm{hinge}}, \sH}(h, x) 
&   \geq  \inf_{h \in \sH}  \sup_{\mu\in \Rset} \bigg\{q(x, \pp_i(x))\paren*{\max\curl*{0, 1 + h(x, \pp_i(x))}-\max\curl*{0, 1 + h(x,\hh_i(x))-\mu} }\\
& \qquad + q(x,\hh_i(x))\paren*{\max\curl*{0, 1 + h(x,\hh_i(x))}-\max\curl*{0, 1 + h(x, \pp_i(x))+\mu}}\bigg\}\\
& \geq  q(x,\hh_i(x))-q(x, \pp_i(x))
\tag{differentiating with respect to $\mu$, $h$ to optimize}\\
& = p(x,\pp_i(x))-p(x, \hh_i(x))
\end{align*}
Therefore, by Lemma~\ref{lemma:regret-target}, the conditional regret of the top-$k$ loss can be upper bounded as follows:
\begin{equation*}
\Delta \sC_{\ell_k, \sH}(h, x) = \sum_{i = 1}^k \paren*{p(x, \pp_i(x)) - p(x, \hh_i(x))} \leq k \Delta\sC_{\wt \ell_{\rm{hinge}}, \sH}(h, x).
\end{equation*}
By the concavity, taking expectations on both sides of the preceding equation, we obtain
\begin{equation*}
\sE_{\ell_k}(h) - \sE^*_{\ell_k}(\sH) + \sM_{\ell_k}(\sH) \leq k \paren*{ \sE_{\wt \ell_{\rm{hinge}}}(h) - \sE^*_{\wt \ell_{\rm{hinge}}}(\sH) + \sM_{\wt \ell_{\rm{hinge}}}(\sH)}.
\end{equation*}
The second
part follows from the fact that when the hypothesis set $\sH$ is
sufficiently rich such that $\sA_{\wt \ell_{\rm{hinge}}}(\sH) = 0$, we
have $\sM_{\wt \ell_{\rm{hinge}}}(\sH) = 0$. 

\textbf{Case IV: $\wt \ell_{\rm{cstnd}} = \wt \ell_{\rho}$.} For the constrained $\rho$-margin loss $\wt \ell_{\rho}$, the conditional regret can be written as 
\begin{align*}
\Delta\sC_{\wt \ell_{\rho}, \sH}(h, x) 
& = \sum_{y = 1}^n p(x, y) \wt \ell_{\rho}(h, x, y) - \inf_{h \in \sH} \sum_{y = 1}^n p(x, y) \wt \ell_{\rho}(h, x, y)\\
& \geq \sum_{y = 1}^n p(x, y) \wt \ell_{\rho}(h, x, y) - \inf_{\mu \in \Rset} \sum_{y = 1}^n p(x, y) \wt \ell_{\rho}(h_{\mu, i}, x, y),
\end{align*}
where for any $i \in [k]$, $h_{\mu, i}(x, y) = \begin{cases}
h(x, y), & y \notin \curl*{\pp_i(x), \hh_i(x)}\\
h(x, \pp_i(x)) + \mu & y = \hh_i(x)\\
h(x, \hh_i(x)) - \mu & y = \pp_i(x).
\end{cases}$
Note that such a choice of $h_{\mu, i}$ leads to the following equality holds:
\begin{equation*}
\sum_{y \notin \curl*{\hh_i(x), \pp_i(x)}} p(x, y) \wt \ell_{\rho}(h, x, y) = \sum_{y \notin \curl*{\hh_i(x), \pp_i(x)}}  p(x, y) \wt \ell_{\rho}(h_{\mu, i}, x, y).
\end{equation*}
Let $q(x, \pp_i(x)) = 1 - p(x, \pp_i(x))$ and $q(x, \hh_i(x)) = 1 - p(x, \hh_i(x))$.
Therefore, for any $i \in [k]$, the conditional regret of the constrained $\rho$-margin loss can be lower bounded as
\begin{align*}
& \Delta\sC_{\wt \ell_{\rho}, \sH}(h, x)\\ 
&  \geq  \inf_{h \in \sH} \sup_{\mu\in \Rset} \bigg\{q(x, \pp_i(x))\paren*{\min\curl*{\max\curl*{0,1 + \frac{h(x, \pp_i(x))}{\rho}},1}-\min\curl*{\max\curl*{0,1 + \frac{h(x,\hh_i(x))-\mu}{\rho}},1}}\\
&+q(x,\hh_i(x))\paren*{\min\curl*{\max\curl*{0,1 + \frac{h(x,\hh_i(x))}{\rho}},1}-\min\curl*{\max\curl*{0,1 + \frac{h(x, \pp_i(x))+\mu}{\rho}},1}}\bigg\}\\
& \geq q(x,\hh_i(x))-q(x, \pp_i(x))
\tag{differentiating with respect to $\mu$, $h$ to optimize}\\
& = p(x,\pp_i(x))-p(x, \hh_i(x))
\end{align*}
Therefore, by Lemma~\ref{lemma:regret-target}, the conditional regret of the top-$k$ loss can be upper bounded as follows:
\begin{equation*}
\Delta \sC_{\ell_k, \sH}(h, x) = \sum_{i = 1}^k \paren*{p(x, \pp_i(x)) - p(x, \hh_i(x))} \leq k \Delta\sC_{\wt \ell_{\rho}, \sH}(h, x).
\end{equation*}
By the concavity, taking expectations on both sides of the preceding equation, we obtain
\begin{equation*}
\sE_{\ell_k}(h) - \sE^*_{\ell_k}(\sH) + \sM_{\ell_k}(\sH) \leq k \paren*{ \sE_{\wt \ell_{\rho}}(h) - \sE^*_{\wt \ell_{\rho}}(\sH) + \sM_{\wt \ell_{\rho}}(\sH)}.
\end{equation*}
The second part
follows from the fact that when the hypothesis set $\sH$ is
sufficiently rich such that $\sA_{\wt \ell_{\rho}}(\sH) = 0$, we have
$\sM_{\wt \ell_{\rho}}(\sH) = 0$.
\end{proof}

\newpage
\section{Technical challenges and novelty in Section~\ref{sec:comp}}
\label{app:novelty}

The technical challenges and novelty of proofs in Section~\ref{sec:comp} lie in the following three aspects:

(1) Conditional regret of the top-$k$ loss: This involves a comprehensive analysis of the conditional regret associated with the top-$k$ loss, which is significantly more complex than that of the zero-one loss in a standard setting. The conditional regret of the top-$k$ loss incorporates both the top-$k$ conditional probabilities $\mathsf{p}_i(x)$, for $i = 1, \ldots, k$, and the top-$k$ scores $\mathsf{h}_i(x)$, for $i = 1, \ldots, k$, as characterized in Lemma~\ref{lemma:regret-target}.

(2) Relating to the conditional regret of the surrogate loss: To establish $\sH$-consistency bounds, it is necessary to upper bound the conditional regret of the top-$k$ loss with that of the surrogate loss. This task is particularly challenging in the top-$k$ setting due to the intricate nature of the top-$k$ loss's conditional regret. A pivotal observation is that the conditional regret of the top-$k$ loss can be expressed as the sum of $k$ terms $\left( p(x, \mathsf{p}_i(x)) - p(x, \mathsf{h}_i(x)) \right)$ for $i =1, \ldots, k$. Each term $\left( p(x, \mathsf{p}_i(x)) - p(x, \mathsf{h}_i(x)) \right)$ exhibits structural similarities to the conditional regret of the zero-one loss, $\left( p(x, \mathsf{p}_1(x)) - p(x, \mathsf{h}_1(x)) \right)$. Consequently, we introduce a series of auxiliary hypotheses $h_{\mu, i}$, each dependent on $\mathsf{h}_i(x)$ and $\mathsf{p}_i(x)$ for $i \in [k]$. This approach transforms the challenge of upper bounding the conditional regret of the top-$k$ loss into $k$ subproblems, each focusing on upper bounding the term $\left( p(x, \mathsf{p}_i(x)) - p(x, \mathsf{h}_i(x)) \right)$ with the conditional regret of the surrogate loss.

(3) Upper bounding each term $\left( p(x, \mathsf{p}_i(x)) - p(x, \mathsf{h}_i(x)) \right)$: Following the approach in prior work \citep{mao2023cross} for top-$1$ classification, we define $h_{\mu, i}(x, y)$ as: 
\[h_{\mu, i}(x, y) = \begin{cases}
h(x, y), & y \notin \curl*{\mathsf{p}_i(x)), \mathsf{h}_i(x))}\\
\log\paren*{e^{h(x, \mathsf{p}_i(x))} + \mu} & y = \mathsf{h}_i(x))\\
\log\paren*{e^{h(x, \mathsf{h}_i(x)))} - \mu} & y = \mathsf{p}_i(x)).
\end{cases}\]
for the proof of comp-sum losses (Theorem~\ref{thm:bound-comp}). The subsequent proof is considered straightforward.

However, for the proof of constrained losses (Theorem~\ref{thm:bound-cstnd}), we adopt a different hypothesis formulation for $h_{\mu, i}(x, y)$, leveraging the constraint that the scores sum to zero and the specific structure of constrained losses. The hypothesis is defined as:
\[h_{\mu, i}(x, y) = 
 \begin{cases}
h(x, y), & y \notin \curl*{\mathsf{p}_i(x)), \mathsf{h}_i(x))}\\
h(x, \mathsf{p}_i(x)) + \mu & y = \mathsf{h}_i(x))\\
h(x, \mathsf{h}_i(x))) - \mu & y = \mathsf{p}_i(x)).
\end{cases}
\]
The remainder of the proof then specifically addresses the peculiarities of constrained losses, which significantly diverges from the previous work.

In summary, aspects (1) and (2) are novel and represent significant advancements that have not been explored previously. For aspect (3), the proof for comp-sum loss closely follows the approach in \citep{mao2023cross}, which appears straightforward due to the innovative ideas presented in aspects (1) and (2). However, the proof for constrained losses significantly deviates from the previous work, particularly in terms of the new auxiliary hypothesis formulation and the specific constrained losses examined.

We would like to further emphasize that these results are significant and useful. They demonstrate that comp-sum losses, which include the cross-entropy loss commonly used in top-$1$ classification, and constrained losses, are $\sH$-consistent in top-$k$ classification for any $k$. Notably, the cross-entropy loss is the only Bayes-consistent smooth surrogate loss for top-$k$ classification identified to date. Furthermore, the Bayes-consistency of loss functions within the constrained loss family is a novel exploration in the context of top-$k$ classification. These findings are pivotal as they highlight two broad families of smooth loss functions that are Bayes-consistent in top-$k$ classification. Additionally, they reveal that these families, including the cross-entropy loss, benefit from stronger, non-asymptotic and hypothesis set-specific guarantees—$\sH$-consistency bounds—in top-$k$ classification.

\newpage
\section{Generalization bounds}
\label{app:generalization}

Given a finite sample $S = \paren*{(x_1, y_1), \ldots, (x_m, y_m)}$ drawn from
$\sD^m$,  let $\h h_S$ be the minimizer of the
empirical loss within $\sH$ with respect to the top-$k$ surrogate loss $\wt \ell$:
$
\h h_S = \argmin_{h \in \sH} \h \sE_{\wt \ell, S}(h) = \argmin_{h\in \sH} \frac{1}{m}\sum_{i = 1}^m \wt \ell(h, x_i,y _i).
$
Next, we will show that we can use $\sH$-consistency bounds for $\wt \ell$ to derive generalization bounds for the top-$k$ loss by upper bounding the surrogate
estimation error $\sE_{\wt \ell}(\h h_S) - \sE_{\wt \ell}^*(\sH)$ with the
complexity (e.g. the Rademacher complexity) of the family of functions
associated with $\wt \ell$ and $\sH$: $\sH_{\wt \ell}=\curl*{(x, y) \mapsto
  \wt \ell(h, x, y) \colon h \in \sH}$.

Let $\Rad_m^{\wt \ell}(\sH)$ be the Rademacher complexity of
$\sH_{\wt \ell}$ and $B_{\wt \ell}$ an upper bound of the surrogate loss
$\wt \ell$. Then, we obtain the following generalization bounds for the top-$k$
loss.

\begin{restatable}[\textbf{Generalization bound with comp-sum losses}]{theorem}{GBoundComp}
\label{Thm:Gbound-comp}
Assume that $\sH$ is symmetric and complete. Then, for any $1 \leq k
\leq n$, the following top-$k$ generalization bound holds for $\h h_S$: for any
$\delta > 0$, with probability at least $1-\delta$ over the draw of an
i.i.d sample $S$ of size $m$:
\begin{equation*}
\sE_{\ell_k}(\h h_S) - \sE_{\ell_k}^*( \sH) + \sM_{\ell_k}( \sH) \leq k \psi^{-1}\paren*{4
    \Rad_m^{\wt \ell}(\sH) + 2 B_{\wt \ell} \sqrt{\tfrac{\log
        \frac{2}{\delta}}{2m}} + \sM_{\wt \ell}( \sH)}.
\end{equation*}
where $\psi(t) = \frac{1 - t}{2}\log(1 - t) + \frac{1 + t}{2}\log(1+
t)$, $t \in [0,1]$ when $\wt \ell$ is $\wt \ell_{\log}$; $\psi(t) = 1 - \sqrt{1 - t^2}$, $t \in [0,1]$ when $\wt \ell$ is $\wt \ell_{\exp}$; $\psi(t) = t / n$ when $\wt \ell$ is $\wt \ell_{\rm{mae}}$; and $\psi(t) = \frac{1}{\q n^{\q}}
\bracket*{\bracket*{\frac{\paren*{1 + t}^{\frac1{1 - \q }} +
      \paren*{1 - t}^{\frac1{1 - \q }}}{2}}^{1 - \q } -1}$,
for all $\q \in (0,1)$, $t \in [0, 1]$ when $\wt \ell$ is $\wt \ell_{\rm{gce}}$.
\end{restatable}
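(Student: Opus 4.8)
The plan is to compose the $\sH$-consistency bound of Theorem~\ref{thm:bound-comp} with a standard Rademacher-complexity generalization bound for the surrogate loss $\wt \ell$.

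First, I would apply Theorem~\ref{thm:bound-comp} directly to the empirical minimizer $h = \h h_S \in \sH$. Since $\sH$ is symmetric and complete, this gives, with $\psi$ the function associated with the chosen comp-sum loss $\wt \ell \in \curl*{\wt \ell_{\log}, \wt \ell_{\exp}, \wt \ell_{\rm{mae}}, \wt \ell_{\rm{gce}}}$,
\begin{equation*}
\sE_{\ell_k}(\h h_S) - \sE^*_{\ell_k}(\sH) + \sM_{\ell_k}(\sH) \le k\, \psi^{-1}\paren*{\sE_{\wt \ell}(\h h_S) - \sE^*_{\wt \ell}(\sH) + \sM_{\wt \ell}(\sH)}.
\end{equation*}
It therefore suffices to upper bound the surrogate estimation error $\sE_{\wt \ell}(\h h_S) - \sE^*_{\wt \ell}(\sH)$ with probability at least $1-\delta$ and then invoke monotonicity of $\psi^{-1}$: each $\psi$ in the list is nonnegative, vanishes at $0$, and is increasing on $[0,1]$, so $\psi^{-1}$ is non-decreasing on the relevant range, and the convention for arguments beyond $\psi(1)$ is exactly the one already used in Theorem~\ref{thm:bound-comp} (the left-hand side above is at most $1 \le k$).

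Second, I would control the surrogate estimation error by the usual route. Using that $\h h_S$ minimizes $\h\sE_{\wt \ell, S}$ over $\sH$ and that $\sE^*_{\wt \ell}(\sH) = \inf_{h \in \sH}\sE_{\wt \ell}(h)$, one gets
\begin{equation*}
\sE_{\wt \ell}(\h h_S) - \sE^*_{\wt \ell}(\sH) \le \sup_{h \in \sH}\bracket*{\sE_{\wt \ell}(h) - \h\sE_{\wt \ell, S}(h)} + \sup_{h \in \sH}\bracket*{\h\sE_{\wt \ell, S}(h) - \sE_{\wt \ell}(h)}.
\end{equation*}
Each supremum is handled by McDiarmid's inequality (the map $S \mapsto \sup_{h}[\cdots]$ has bounded differences $B_{\wt \ell}/m$ since $0 \le \wt \ell \le B_{\wt \ell}$) together with the symmetrization bound $\E_S\bracket*{\sup_{h \in \sH}(\sE_{\wt \ell}(h) - \h\sE_{\wt \ell, S}(h))} \le 2\Rad_m^{\wt \ell}(\sH)$ for the loss class $\sH_{\wt \ell}$; a union bound over the two one-sided events, each with confidence $1-\delta/2$, then yields, with probability at least $1-\delta$,
\begin{equation*}
\sE_{\wt \ell}(\h h_S) - \sE^*_{\wt \ell}(\sH) \le 4\Rad_m^{\wt \ell}(\sH) + 2 B_{\wt \ell}\sqrt{\tfrac{\log \frac{2}{\delta}}{2m}}.
\end{equation*}

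Finally, I would substitute this bound into the argument of $\psi^{-1}$ in the first display (adding $\sM_{\wt \ell}(\sH)$ inside and using that $\psi^{-1}$ is non-decreasing) to obtain the claimed inequality; the four explicit forms of $\psi$ are inherited verbatim from Theorem~\ref{thm:bound-comp}. I do not expect a substantive obstacle here: the statement is the composition of an already-established $\sH$-consistency bound with an entirely standard ERM generalization bound. The only points requiring a little care are (i) producing the precise constants $4\Rad_m^{\wt \ell}(\sH)$ and $2B_{\wt \ell}\sqrt{\log(2/\delta)/(2m)}$ from the two-sided concentration step, and (ii) checking that $\psi^{-1}$ is monotone on the interval where it is applied so that the substitution is legitimate; both are routine, and the argument mirrors in structure the analogous generalization bounds of \citep{awasthi2022multi,mao2023cross}.
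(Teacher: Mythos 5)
Your proposal is correct and follows essentially the same route as the paper: apply the $\sH$-consistency bound of Theorem~\ref{thm:bound-comp} to $\h h_S$ and control the surrogate estimation error $\sE_{\wt \ell}(\h h_S) - \sE^*_{\wt \ell}(\sH)$ by $4 \Rad_m^{\wt \ell}(\sH) + 2 B_{\wt \ell}\sqrt{\log(2/\delta)/(2m)}$ via standard Rademacher-complexity concentration, then use monotonicity of $\psi^{-1}$. The only cosmetic difference is bookkeeping in the ERM step (your two one-sided suprema at confidence $\delta/2$ each versus the paper's two-sided uniform bound combined with an $\e$-near-minimizer $h^*$), which yields the same constants.
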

\begin{proof}
  By using the standard Rademacher complexity bounds \citep{mohri2018foundations}, for any $\delta>0$,
  with probability at least $1 - \delta$, the following holds for all $h \in \sH$:
\[
\abs*{\sE_{\wt \ell}(h) - \h\sE_{\wt \ell, S}(h)}
\leq 2 \Rad_m^{\wt \ell}(\sH) +
B_{\wt \ell} \sqrt{\tfrac{\log (2/\delta)}{2m}}.
\]
Fix $\e > 0$. By the definition of the infimum, there exists $h^* \in
\sH$ such that $\sE_{\wt \ell}(h^*) \leq
\sE_{\wt \ell}^*(\sH) + \e$. By definition of
$\h h_S$, we have
\begin{align*}
  & \sE_{\wt \ell}(\h h_S) - \sE_{\wt \ell}^*(\sH)\\
  & = \sE_{\wt \ell}(\h h_S) - \h\sE_{\wt \ell, S}(\h h_S) + \h\sE_{\wt \ell, S}(\h h_S) - \sE_{\wt \ell}^*(\sH)\\
  & \leq \sE_{\wt \ell}(\h h_S) - \h\sE_{\wt \ell, S}(\h h_S) + \h\sE_{\wt \ell, S}(h^*) - \sE_{\wt \ell}^*(\sH)\\
  & \leq \sE_{\wt \ell}(\h h_S) - \h\sE_{\wt \ell, S}(\h h_S) + \h\sE_{\wt \ell, S}(h^*) - \sE_{\wt \ell}^*(h^*) + \e\\
  & \leq
  2 \bracket*{2 \Rad_m^{\wt \ell}(\sH) +
B_{\wt \ell} \sqrt{\tfrac{\log (2/\delta)}{2m}}} + \e.
\end{align*}
Since the inequality holds for all $\e > 0$, it implies:
\[
\sE_{\wt \ell}(\h h_S) - \sE_{\wt \ell}^*(\sH)
\leq 
4 \Rad_m^{\wt \ell}(\sH) +
2 B_{\wt \ell} \sqrt{\tfrac{\log (2/\delta)}{2m}}.
\]
Plugging in this inequality in the bounds of Theorem~\ref{thm:bound-comp} completes the proof.
\end{proof}
\begin{restatable}[\textbf{Generalization bound with constrained losses}]{theorem}{GBoundCstnd}
\label{Thm:Gbound-cstnd}
Assume that $\sH$ is symmetric and complete. Then, for any $1 \leq k
\leq n$, the following top-$k$ generalization bound holds for $\h h_S$: for any
$\delta > 0$, with probability at least $1-\delta$ over the draw of an
i.i.d sample $S$ of size $m$:
\begin{equation*}
\sE_{\ell_k}(\h h_S) - \sE_{\ell_k}^*( \sH) + \sM_{\ell_k}( \sH) \leq k \gamma \paren*{4
    \Rad_m^{\wt \ell}(\sH) + 2 B_{\wt \ell} \sqrt{\tfrac{\log
        \frac{2}{\delta}}{2m}} + \sM_{\wt \ell}( \sH)}.
\end{equation*}
where $\gamma(t) = 2\sqrt{t}$ when
$\wt \ell$ is either $\wt \ell^{\mathrm{cstnd}}_{\exp}$ or
$\wt \ell_{\rm{sq-hinge}}$; $\gamma(t) = t$ when
$\wt \ell$ is either $\wt \ell_{\rm{hinge}}$ or $\wt \ell_{\rho}$.
\end{restatable}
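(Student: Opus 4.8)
The plan is to mirror the proof of Theorem~\ref{Thm:Gbound-comp}, replacing the $\sH$-consistency bound for comp-sum losses with the one for constrained losses established in Theorem~\ref{thm:bound-cstnd}. The only feature specific to constrained losses is that their $\sH$-consistency bound has the functional form $t \mapsto k\,\gamma(t)$ with $\gamma(t) = 2\sqrt{t}$ (for $\wt \ell^{\rm{cstnd}}_{\exp}$ and $\wt \ell_{\rm{sq-hinge}}$) or $\gamma(t) = t$ (for $\wt \ell_{\rm{hinge}}$ and $\wt \ell_{\rho}$); both are non-decreasing, which is precisely what lets us pass a high-probability \emph{upper} bound on the surrogate estimation error through the $\sH$-consistency inequality.

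\textbf{Step 1 (uniform convergence).} First I would invoke the standard Rademacher complexity generalization bound \citep{mohri2018foundations} applied to the function class $\sH_{\wt \ell} = \curl*{(x, y) \mapsto \wt \ell(h, x, y) \colon h \in \sH}$, whose elements are bounded by $B_{\wt \ell}$: with probability at least $1 - \delta$ over the draw of $S$, for all $h \in \sH$,
\[
\abs*{\sE_{\wt \ell}(h) - \h\sE_{\wt \ell, S}(h)} \leq 2\Rad_m^{\wt \ell}(\sH) + B_{\wt \ell}\sqrt{\tfrac{\log(2/\delta)}{2m}}.
\]
\textbf{Step 2 (bounding the surrogate estimation error).} Next, fixing $\e > 0$ and choosing $h^* \in \sH$ with $\sE_{\wt \ell}(h^*) \leq \sE^*_{\wt \ell}(\sH) + \e$, I would argue exactly as in the proof of Theorem~\ref{Thm:Gbound-comp}: decompose $\sE_{\wt \ell}(\h h_S) - \sE^*_{\wt \ell}(\sH)$ through the empirical errors $\h\sE_{\wt \ell, S}(\h h_S)$ and $\h\sE_{\wt \ell, S}(h^*)$, use the empirical optimality of $\h h_S$ to replace $\h\sE_{\wt \ell, S}(\h h_S)$ by $\h\sE_{\wt \ell, S}(h^*)$, bound the two remaining deviations by the Step~1 inequality, and let $\e \to 0$. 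This yields
\[
\sE_{\wt \ell}(\h h_S) - \sE^*_{\wt \ell}(\sH) \leq 4\Rad_m^{\wt \ell}(\sH) + 2 B_{\wt \ell}\sqrt{\tfrac{\log(2/\delta)}{2m}}.
\]

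\textbf{Step 3 (composition with the $\sH$-consistency bound).} Finally, adding $\sM_{\wt \ell}(\sH)$ to both sides of the Step~2 inequality and plugging into Theorem~\ref{thm:bound-cstnd}, the monotonicity of $t \mapsto k\,\gamma(t)$ gives
\begin{align*}
\sE_{\ell_k}(\h h_S) - \sE^*_{\ell_k}(\sH) + \sM_{\ell_k}(\sH)
&\leq k\,\gamma\paren*{\sE_{\wt \ell}(\h h_S) - \sE^*_{\wt \ell}(\sH) + \sM_{\wt \ell}(\sH)} \\
&\leq k\,\gamma\paren*{4\Rad_m^{\wt \ell}(\sH) + 2 B_{\wt \ell}\sqrt{\tfrac{\log(2/\delta)}{2m}} + \sM_{\wt \ell}(\sH)},
\end{align*}
with $\gamma$ instantiated according to which of the four constrained losses is used, which is the claimed bound.

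There is essentially no serious obstacle here: the argument is a direct composition of a uniform-convergence step with an already-proved $\sH$-consistency bound, and the only points requiring routine care are the monotonicity of $\gamma$ used in the final substitution and the fact that $B_{\wt \ell}$ and $\Rad_m^{\wt \ell}(\sH)$ are finite for the constrained losses in question on bounded hypothesis sets (e.g.\ via the contraction lemma, since each $\Phi$ is Lipschitz on the relevant score range). In short, nothing new beyond Theorem~\ref{thm:bound-cstnd} is needed; the proof is a near-verbatim copy of that of Theorem~\ref{Thm:Gbound-comp}.
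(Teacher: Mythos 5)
Your proposal is correct and follows essentially the same route as the paper: a standard Rademacher-complexity uniform-convergence bound, the $\e$-argument with the empirical minimizer $\h h_S$ to bound the surrogate estimation error by $4\Rad_m^{\wt \ell}(\sH) + 2B_{\wt \ell}\sqrt{\log(2/\delta)/(2m)}$, and then substitution into the $\sH$-consistency bound of Theorem~\ref{thm:bound-cstnd}. The only minor difference is that you make the monotonicity of $\gamma$ explicit, which the paper leaves implicit.
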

\begin{proof}
  By using the standard Rademacher complexity bounds \citep{mohri2018foundations}, for any $\delta>0$,
  with probability at least $1 - \delta$, the following holds for all $h \in \sH$:
\[
\abs*{\sE_{\wt \ell}(h) - \h\sE_{\wt \ell, S}(h)}
\leq 2 \Rad_m^{\wt \ell}(\sH) +
B_{\wt \ell} \sqrt{\tfrac{\log (2/\delta)}{2m}}.
\]
Fix $\e > 0$. By the definition of the infimum, there exists $h^* \in
\sH$ such that $\sE_{\wt \ell}(h^*) \leq
\sE_{\wt \ell}^*(\sH) + \e$. By definition of
$\h h_S$, we have
\begin{align*}
  & \sE_{\wt \ell}(\h h_S) - \sE_{\wt \ell}^*(\sH)\\
  & = \sE_{\wt \ell}(\h h_S) - \h\sE_{\wt \ell, S}(\h h_S) + \h\sE_{\wt \ell, S}(\h h_S) - \sE_{\wt \ell}^*(\sH)\\
  & \leq \sE_{\wt \ell}(\h h_S) - \h\sE_{\wt \ell, S}(\h h_S) + \h\sE_{\wt \ell, S}(h^*) - \sE_{\wt \ell}^*(\sH)\\
  & \leq \sE_{\wt \ell}(\h h_S) - \h\sE_{\wt \ell, S}(\h h_S) + \h\sE_{\wt \ell, S}(h^*) - \sE_{\wt \ell}^*(h^*) + \e\\
  & \leq
  2 \bracket*{2 \Rad_m^{\wt \ell}(\sH) +
B_{\wt \ell} \sqrt{\tfrac{\log (2/\delta)}{2m}}} + \e.
\end{align*}
Since the inequality holds for all $\e > 0$, it implies:
\[
\sE_{\wt \ell}(\h h_S) - \sE_{\wt \ell}^*(\sH)
\leq 
4 \Rad_m^{\wt \ell}(\sH) +
2 B_{\wt \ell} \sqrt{\tfrac{\log (2/\delta)}{2m}}.
\]
Plugging in this inequality in the bounds of Theorem~\ref{thm:bound-cstnd} completes the proof.
\end{proof}
To the best
of our knowledge, Theorems~\ref{Thm:Gbound-comp} and \ref{Thm:Gbound-cstnd} provide the first
finite-sample guarantees for the estimation error of the minimizer of comp-sum losses and constrained losses, with respect to the top-$k$ loss, for any $1 \leq k \leq n$.  The
proofs use our $\sH$-consistency bounds with respect to the top-$k$ loss, as well as standard Rademacher complexity guarantees.

\newpage
\section{Proofs of \texorpdfstring{$\sH$}{H}-consistency bounds for cost-sensitive losses}
\label{app:cost}

We first characterize the best-in class conditional error and the
conditional regret of the target cardinality aware loss function \eqref{eq:target-cardinality}, which will be used in the analysis
of $\sH$-consistency bounds.
\begin{restatable}{lemma}{RegretTargetCost}
\label{lemma:regret-target-cost}
Assume that $\sR$ is symmetric and complete. Then, for any $r \in \sK$ and $x \in
\sX$, the best-in class conditional error and the conditional regret
of the target cardinality aware loss function can be expressed as follows:
\begin{align*}
  \sC^*_{\ell}(\sR, x)
  & = \min_{k  \in \sK} \sum_{y\in \sY}  p(x, y) c(x, k, y)\\
\Delta \sC_{\ell, \sR}(r, x)
& = \sum_{y\in \sY}  p(x, y) c(x, \rr(x), y) - \min_{k  \in \sK} \sum_{y\in \sY}  p(x, y) c(x, k, y).
\end{align*}
\end{restatable}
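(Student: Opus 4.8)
The plan is to evaluate the conditional error directly from the definition of the cardinality-aware loss and then argue that, because $\sR$ is symmetric and complete, taking the infimum over $\sR$ amounts to taking a minimum over the index set $\sK$. By \eqref{eq:target-cardinality}, $\ell(r,x,y) = c(x,\rr(x),y)$ with $\rr(x) = \argmax_{k\in\sK} r(x,k)$, so the conditional error is
\begin{equation*}
\sC_{\ell}(r,x) = \sum_{y\in\sY} p(x,y)\,\ell(r,x,y) = \sum_{y\in\sY} p(x,y)\,c(x,\rr(x),y),
\end{equation*}
which depends on $r$ only through the single selected index $\rr(x)\in\sK$.

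Next I would show that every index $k\in\sK$ is realizable as $\rr(x)$ for some $r\in\sR$: since $\sR$ is symmetric, each coordinate map $x\mapsto r(x,k)$ ranges over a common family $\sF$, and since $\sR$ is complete the scores $\curl*{r(x,k)\colon r\in\sR}$ span $\Rset$ at each fixed $x$; hence one can choose $r\in\sR$ whose value at $(x,k)$ strictly exceeds its values at $(x,k')$ for all $k'\neq k$, forcing $\rr(x)=k$ (the tie-breaking rule is then irrelevant since the maximizer is strict). Combining this with the previous display,
\begin{equation*}
\sC^*_{\ell}(\sR,x) = \inf_{r\in\sR}\sum_{y\in\sY} p(x,y)\,c(x,\rr(x),y) = \min_{k\in\sK}\sum_{y\in\sY} p(x,y)\,c(x,k,y),
\end{equation*}
since as $r$ ranges over $\sR$ the index $\rr(x)$ ranges over all of $\sK$, and conversely each such choice is attained.

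Finally, the conditional regret is obtained by subtraction:
\begin{equation*}
\Delta\sC_{\ell,\sR}(r,x) = \sC_{\ell}(r,x) - \sC^*_{\ell}(\sR,x) = \sum_{y\in\sY} p(x,y)\,c(x,\rr(x),y) - \min_{k\in\sK}\sum_{y\in\sY} p(x,y)\,c(x,k,y),
\end{equation*}
which is the claimed identity. This argument is the exact analogue, for the index space $\sK$, of the proof of Lemma~\ref{lemma:regret-target}, where regularity of $\sH$ played the role that symmetry and completeness of $\sR$ play here. The only point requiring a little care is the realizability step — ensuring that $\rr(x)$ can be made to equal an arbitrary target index at a fixed $x$ without constraining the behavior at other inputs — but this follows immediately from completeness, and since everything is argued pointwise in $x$ there is no interaction between different inputs. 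I do not anticipate a genuine obstacle: the lemma is essentially a bookkeeping identity that repackages the instance-dependent cost-sensitive structure of $\ell$ in a form suitable for the subsequent $\sH$-consistency analysis.
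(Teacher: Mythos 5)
Your proof is correct and follows essentially the same route as the paper's: compute the conditional error, use symmetry and completeness to argue that every index $k \in \sK$ is realizable as $\rr(x)$, so the infimum over $\sR$ becomes a minimum over $\sK$, then subtract. The only difference is that you spell out the realizability step, which the paper asserts in one line.
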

\begin{proof}
By definition, for any $r \in \sR$ and $x \in \sX$, the conditional
error of the target cardinality aware loss function can be written as
\begin{equation*}
\sC_{ \ell }(r, x) =  \sum_{y\in \sY} p(x,y) c(x, \rr(x), y).
\end{equation*}
Since $\sR$ is symmetric and complete, we have
\begin{equation*}
  \sC^*_{\ell}(\sR, x)
  = \inf_{r \in \sR} \sum_{y\in \sY} p(x,y) c(x, \rr(x), y) = \min_{k  \in \sK} \sum_{y \in \sY} p(x, y) c(x, k, y).
\end{equation*}
Furthermore, the calibration gap can be expressed as
\begin{align*}
\Delta\sC_{\ell, \sR}(r, x)  = \sC_{ \ell }(r, x) - \sC^*_{ \ell }(\sR, x) = \sum_{y\in \sY}  p(x, y) c(x, \rr(x), y) - \min_{k  \in \sK} \sum_{y\in \sY}  p(x, y) c(x, k, y),
\end{align*}
which completes the proof.
\end{proof}

\subsection{Proof of Theorem~\ref{thm:bound-cost-comp}}
\label{app:bound-cost-comp}
For convenience, we let $\ov c(x, k, y) = 1 - c(x, k, y)$, $\ov q(x, k) = \sum_{y\in \sY} p(x, y) \ov c(x, k, y) \in [0, 1]$ and $\sS(x, k) = \frac{e^{r(x, k)}}{\sum_{k' \in \sK}e^{r(x, k')}}$. We also let $k_{\min}(x)  =  \argmin_{k \in \sK} \paren*{1 - \ov q(x, k)} = \argmin_{k \in \sK} \sum_{y\in \sY} p(x, y) c(x, k, y)$. 
\BoundCostComp*
\begin{proof}
\textbf{Case I: $\wt \ell_{\rm{c-comp}} = \wt \ell_{\rm{c-log}}$.} For the cost-sensitive logistic loss $\wt \ell_{\rm{c-log}}$, the conditional error can be written as 
\begin{align*}
\sC_{\wt \ell_{\rm{c-log}} }(r, x) = -\sum_{y\in \sY} p(x, y) \sum_{k \in \sK} \ov c(x, k, y) \log\paren*{\frac{e^{r(x, k)}}{\sum_{k'\in \sK}e^{r(x, k')}}}  =   - \sum_{k \in \sK}\log\paren*{\sS(x, k)}\ov q(x, k).
\end{align*}
The conditional regret can be written as 
\begin{align*}
\Delta\sC_{\wt \ell_{\rm{c-log}}, \sR}(r, x) 
& = - \sum_{k \in \sK}\log\paren*{\sS(x, k)}\ov q(x, k) - \inf_{r \in \sR} \paren*{- \sum_{k \in \sK}\log\paren*{\sS(x, k)}\ov q(x, k)}\\
& \geq - \sum_{k \in \sK}\log\paren*{\sS(x, k)}\ov q(x, k) - \inf_{\mu \in \bracket*{-\sS(x, k_{\min}(x)), \sS(x, \rr(x))}} \paren*{- \sum_{k \in \sK}\log\paren*{\sS_{\mu}(x, k)}\ov q(x, k)},
\end{align*}
where for any $x \in \sX$ and $k \in \sK$, $\sS_{\mu}(x, k) = \begin{cases}
\sS(x, y), & y \notin \curl*{k_{\min}(x), \rr(x)}\\
\sS(x, k_{\min}(x)) + \mu & y = \rr(x)\\
\sS(x, \rr(x)) - \mu & y = k_{\min}(x).
\end{cases}$
Note that such a choice of $\sS_{\mu}$ leads to the following equality holds:
\begin{equation*}
\sum_{k \notin \curl*{\rr(x), k_{\min}(x)}} \log\paren*{\sS(x, k)}\ov q(x, k) = \sum_{k \notin \curl*{\rr(x), k_{\min}(x)}}  \log\paren*{\sS_{\mu}(x, k)}\ov q(x, k).
\end{equation*}
Therefore, the conditional regret of cost-sensitive logistic loss can be lower bounded as
\begin{align*}
\Delta\sC_{\wt \ell_{\rm{c-log}}, \sH}(h, x)  & \geq \sup_{\mu \in [-\sS(x, k_{\min}(x)),\sS(x,\rr(x))]} \bigg\{\ov q(x, k_{\min}(x))\bracket*{-\log\paren*{\sS(x, k_{\min}(x))} + \log\paren*{\sS(x,\rr(x)) - \mu}}\\
& \qquad + \ov q(x,\rr(x))\bracket*{-\log\paren*{\sS(x,\rr(x))}
    + \log\paren*{\sS(x, k_{\min}(x))+\mu}}\bigg\}.
\end{align*}
By the concavity of the function, differentiate with respect to $\mu$, we obtain that the supremum is achieved by $\mu^* = \frac{\ov q(x,\rr(x))\sS(x,\rr(x))-\ov q(x, k_{\min}(x))\sS(x,
  k_{\min}(x))}{\ov q(x, k_{\min}(x))+\ov q(x,\rr(x))}$. Plug in $\mu^*$, we obtain
\begin{align*}
& \Delta\sC_{\wt \ell_{\rm{c-log}}, \sH}(h, x)\\ 
& \geq \ov q(x, k_{\min}(x))\log\frac{\paren*{\sS(x,\rr(x))+\sS(x, k_{\min}(x))}\ov q(x, k_{\min}(x))}{\sS(x, k_{\min}(x))\paren*{\ov q(x, k_{\min}(x))+\ov q(x,\rr(x))}}\\
& \qquad +\ov q(x,\rr(x))\log\frac{\paren*{\sS(x,\rr(x))+\sS(x, k_{\min}(x))}\ov q(x,\rr(x))}{\sS(x,\rr(x))\paren*{\ov q(x, k_{\min}(x))+\ov q(x,\rr(x))}}\\
& \geq \ov q(x, k_{\min}(x))\log\frac{2\ov q(x, k_{\min}(x))}{\ov q(x, k_{\min}(x))+\ov q(x,\rr(x))} +\ov q(x,\rr(x))\log\frac{2\ov q(x,\rr(x))}{\ov q(x, k_{\min}(x))+\ov q(x,\rr(x))}
\tag{minimum is achieved when $\sS(x, \rr(x)) = \sS(x, k_{\min}(x))$}\\
& \geq \frac{\paren*{\ov q(x,\rr(x))-\ov q(x, k_{\min}(x))}^2}{2\paren*{\ov q(x,\rr(x))+\ov q(x, k_{\min}(x))}}
\tag{$a\log \frac{2a}{a+b}+b\log \frac{2b}{a+b}\geq \frac{(a-b)^2}{2(a+b)}, \forall a,b\in[0,1]$ \citep[Proposition~E.7]{mohri2018foundations}}\\
& \geq \frac{\paren*{\ov q(x,\rr(x))-\ov q(x, k_{\min}(x))}^2}{4} \tag{$0\leq \ov q(x,\rr(x))+\ov q(x, k_{\min}(x))\leq 2$}.
\end{align*}
Therefore, by Lemma~\ref{lemma:regret-target-cost}, the conditional regret of the target cardinality aware loss function can be upper bounded as follows:
\begin{equation*}
\Delta \sC_{\ell, \sH}(r, x) =  \ov q(x, k_{\min}(x)) - \ov q(x, \rr(x)) \leq 2 \paren*{\Delta\sC_{\wt \ell_{\rm{c-log}}, \sR}(r, x) }^{\frac12}.
\end{equation*}
By the concavity, taking expectations on both sides of the preceding equation, we obtain
\begin{equation*}
\sE_{\ell}(r) - \sE^*_{\ell}(\sR) + \sM_{\ell}(\sR) \leq 2 \paren*{ \sE_{\wt \ell_{\rm{c-log}}}(r) - \sE^*_{\wt \ell_{\rm{c-log}}}(\sR) + \sM_{\wt \ell_{\rm{c-log}}}(\sR) }^{\frac12}.
\end{equation*}
The second part follows from the fact that 
$\sM_{\wt \ell_{\rm{c-log}}}(\sR_{\rm{all}}) = 0$.

\textbf{Case II: $\wt \ell_{\rm{c-comp}} = \wt \ell_{\rm{c-exp}}$.} For the cost-sensitive sum exponential loss $\wt \ell_{\rm{c-exp}}$, the conditional error can be written as 
\begin{align*}
\sC_{\wt \ell_{\rm{c-exp}} }(r, x) = \sum_{y\in \sY} p(x, y) \sum_{k \in \sK} \ov c(x, k, y) \sum_{k'\neq k'}e^{r(x, k')-r(x, k)}  =   \sum_{k \in \sK}\paren*{\frac{1}{\sS(x, k)}-1}\ov q(x, k).
\end{align*}
The conditional regret can be written as 
\begin{align*}
\Delta \sC_{\wt \ell_{\rm{c-exp}}, \sR}(r, x) 
& = \sum_{k \in \sK}\paren*{\frac{1}{\sS(x, k)}-1}\ov q(x, k) - \inf_{r \in \sR} \paren*{\sum_{k \in \sK}\paren*{\frac{1}{\sS(x, k)}-1}\ov q(x, k)}\\
& \geq \sum_{k \in \sK}\paren*{\frac{1}{\sS(x, k)}-1}\ov q(x, k) - \inf_{\mu \in \bracket*{-\sS(x, k_{\min}(x)), \sS(x, \rr(x))}} \paren*{\sum_{k \in \sK}\paren*{\frac{1}{\sS_{\mu}(x, k)}-1}\ov q(x, k)},
\end{align*}
where for any $x \in \sX$ and $k \in \sK$, $\sS_{\mu}(x, k) = \begin{cases}
\sS(x, y), & y \notin \curl*{k_{\min}(x), \rr(x)}\\
\sS(x, k_{\min}(x)) + \mu & y = \rr(x)\\
\sS(x, \rr(x)) - \mu & y = k_{\min}(x).
\end{cases}$
Note that such a choice of $\sS_{\mu}$ leads to the following equality holds:
\begin{equation*}
\sum_{k \notin \curl*{\rr(x), k_{\min}(x)}} \paren*{\frac{1}{\sS(x, k)}-1}\ov q(x, k) = \sum_{k \notin \curl*{\rr(x), k_{\min}(x)}}  \paren*{\frac{1}{\sS_{\mu}(x, k)}-1}\ov q(x, k).
\end{equation*}
Therefore, the conditional regret of cost-sensitive sum exponential loss can be lower bounded as
\begin{align*}
\Delta\sC_{\wt \ell_{\rm{c-exp}}, \sH}(h, x)  & \geq \sup_{\mu \in [-\sS(x, k_{\min}(x)),\sS(x,\rr(x))]} \bigg\{\ov q(x, k_{\min}(x))\bracket*{\frac{1}{\sS(x, k_{\min}(x))}-\frac{1}{\sS(x,\rr(x))-\mu}}\\
&\qquad +\ov q(x, \rr(x))\bracket*{\frac{1}{\sS(x,\rr(x))}-\frac{1}{\sS(x, k_{\min}(x))+ \mu}}\bigg\}.
\end{align*}
By the concavity of the function, differentiate with respect to $\mu$, we obtain that the supremum is achieved by $\mu^* = \frac{\sqrt{\ov q(x,\rr(x)})\sS(x,\rr(x))-\sqrt{\ov q(x, k_{\min}(x))}\sS(x, k_{\min}(x))}{\sqrt{\ov q(x, k_{\min}(x))}+ \sqrt{\ov q(x, \rr(x))}}$. Plug in $\mu^*$, we obtain
\begin{align*}
& \Delta\sC_{\wt \ell_{\rm{c-exp}}, \sH}(h, x)\\ 
& \geq \frac{\ov q(x, k_{\min}(x))}{\sS(x, k_{\min}(x))}+ \frac{\ov q(x, \rr(x)))}{\sS(x,\rr(x)))}-\frac{\paren*{\sqrt{\ov q(x, k_{\min}(x))}+ \sqrt{\ov q(x, \rr(x)))}}^2}{\sS(x, k_{\min}(x))+ \sS(x,\rr(x)))}\\
& \geq \paren*{\sqrt{\ov q(x, k_{\min}(x))}-\sqrt{\ov q(x, \rr(x)))}}^2
\tag{minimum is achieved when $\sS(x, \rr(x)) = \sS(x, k_{\min}(x)) = \frac12$}\\
& \geq \frac{\paren*{\ov q(x, \rr(x)))-\ov q(x, k_{\min}(x))}^2}{\paren*{\sqrt{\ov q(x, \rr(x)))}+ \sqrt{\ov q(x, k_{\min}(x))}}^2}\\
& \geq \frac{\paren*{\ov q(x, \rr(x)))-\ov q(x, k_{\min}(x))}^2}{4}
\tag{$\sqrt{a}+ \sqrt{b}\leq 2, \forall a,b\in[0,1], a+b\leq 2$}.
\end{align*}
Therefore, by Lemma~\ref{lemma:regret-target-cost}, the conditional regret of the target cardinality aware loss function can be upper bounded as follows:
\begin{equation*}
\Delta \sC_{\ell, \sH}(r, x) =  \ov q(x, k_{\min}(x)) - \ov q(x, \rr(x)) \leq 2 \paren*{\Delta\sC_{\wt \ell_{\rm{c-exp}}, \sR}(r, x) }^{\frac12}.
\end{equation*}
By the concavity, taking expectations on both sides of the preceding equation, we obtain
\begin{equation*}
\sE_{\ell}(r) - \sE^*_{\ell}(\sR) + \sM_{\ell}(\sR) \leq 2 \paren*{ \sE_{\wt \ell_{\rm{c-exp}}}(r) - \sE^*_{\wt \ell_{\rm{c-exp}}}(\sR) + \sM_{\wt \ell_{\rm{c-exp}}}(\sR) }^{\frac12}.
\end{equation*}
The second part follows from the fact that 
$\sM_{\wt \ell_{\rm{c-exp}}}(\sR_{\rm{all}}) = 0$.

\textbf{Case III: $\wt \ell_{\rm{c-comp}} = \wt \ell_{\rm{c-gce}}$.} For the cost-sensitive generalized cross-entropy  loss $\wt \ell_{\rm{c-gce}}$, the conditional error can be written as 
\begin{align*}
\sC_{\wt \ell_{\rm{c-gce}} }(r, x) = \sum_{y\in \sY} p(x, y) \sum_{k \in \sK} \ov c(x, k, y)\frac{1}{\q}\paren*{1 - \paren*{\frac{e^{r(x, k)}}{\sum_{k'\in \sK}e^{r(x, k')}}}^{\q}}  =   \frac{1}{\q} \sum_{k \in \sK}\paren*{1 - \sS(x, k)^{\q}}\ov q(x, k).
\end{align*}
The conditional regret can be written as 
\begin{align*}
\Delta\sC_{\wt \ell_{\rm{c-gce}}, \sR}(r, x) 
& = \frac{1}{\q} \sum_{k \in \sK}\paren*{1 - \sS(x, k)^{\q}}\ov q(x, k) - \inf_{r \in \sR} \paren*{\frac{1}{\q} \sum_{k \in \sK}\paren*{1 - \sS(x, k)^{\q}}\ov q(x, k)}\\
& \geq \frac{1}{\q} \sum_{k \in \sK}\paren*{1 - \sS(x, k)^{\q}}\ov q(x, k) - \inf_{\mu \in \bracket*{-\sS(x, k_{\min}(x)), \sS(x, \rr(x))}} \paren*{\frac{1}{\q} \sum_{k \in \sK}\paren*{1 - \sS_{\mu}(x, k)^{\q}}\ov q(x, k)},
\end{align*}
where for any $x \in \sX$ and $k \in \sK$, $\sS_{\mu}(x, k) = \begin{cases}
\sS(x, y), & y \notin \curl*{k_{\min}(x), \rr(x)}\\
\sS(x, k_{\min}(x)) + \mu & y = \rr(x)\\
\sS(x, \rr(x)) - \mu & y = k_{\min}(x).
\end{cases}$
Note that such a choice of $\sS_{\mu}$ leads to the following equality holds:
\begin{equation*}
\sum_{k \notin \curl*{\rr(x), k_{\min}(x)}} \frac{1}{\q} \sum_{k \in \sK}\paren*{1 - \sS(x, k)^{\q}}\ov q(x, k) = \sum_{k \notin \curl*{\rr(x), k_{\min}(x)}}  \frac{1}{\q} \sum_{k \in \sK}\paren*{1 - \sS_{\mu}(x, k)^{\q}}\ov q(x, k).
\end{equation*}
Therefore, the conditional regret of cost-sensitive generalized cross-entropy loss can be lower bounded as
\begin{align*}
\Delta\sC_{\wt \ell_{\rm{c-gce}}, \sH}(h, x)  &   =  \frac{1}{\q} \sup_{\mu \in [-\sS(x, k_{\min}(x)),\sS(x, \rr(x))]} \bigg\{\ov q(x, k_{\min}(x))\bracket*{-\sS(x, k_{\min}(x))^{\q}+\paren*{\sS(x, \rr(x))-\mu}^{\q}}\\
&\qquad +\ov q(x, \rr(x))\bracket*{-\sS(x, \rr(x))^{\q}+ \paren*{\sS(x, k_{\min}(x))+ \mu}^{\q}}\bigg\}.
\end{align*}
By the concavity of the function, differentiate with respect to $\mu$, we obtain that the supremum is achieved by $\mu^* = \frac{\ov q(x, \rr(x))^{\frac{1}{1-\q}}\sS(x, \rr(x))-\ov q(x, k_{\min}(x))^{\frac{1}{1-\q}}\sS(x, k_{\min}(x))}{\ov q(x, k_{\min}(x))^{\frac{1}{1-\q}}+\ov q(x, \rr(x))^{\frac{1}{1-\q}}}$. Plug in $\mu^*$, we obtain
\begin{align*}
& \Delta\sC_{\wt \ell_{\rm{c-gce}}, \sH}(h, x)\\ 
&   \geq  \frac{1}{\q}\paren*{\sS(x, \rr(x))+ \sS(x, k_{\min}(x))}^{\q}\paren*{\ov q(x, k_{\min}(x))^{\frac{1}{1-\q}}+\ov q(x, \rr(x))^{\frac{1}{1-\q}}}^{1-\q}\\
&\qquad-\frac{1}{\q}\ov q(x, k_{\min}(x))\sS(x, k_{\min}(x))^{\q}-\frac{1}{\q}\ov q(x, \rr(x))\sS(x, \rr(x))^{\q}\\
& \geq \frac{1}{\q \abs*{\sK}^{\q}}\bracket*{2^{\q}\paren*{\ov q(x, k_{\min}(x))^{\frac{1}{1-\q}}+\ov q(x, \rr(x))^{\frac{1}{1-\q}}}^{1-\q}-\ov q(x, k_{\min}(x))-\ov q(x, \rr(x))}
\tag{minimum is achieved when $\sS(x, \rr(x)) = \sS(x, k_{\min}(x)) = \frac1{\abs*{\sK}}$}\\
& \geq \frac{\paren*{\ov q(x, \rr(x))-\ov q(x, k_{\min}(x))}^2}{4\abs*{\sK}^{\q}}
\tag{$\paren*{\frac{a^{\frac{1}{1-\q}}+b^{\frac{1}{1-\q}}}{2}}^{1-\q}-\frac{a+b}{2}\geq \frac{\q}{4}(a-b)^2, \forall a,b\in[0,1]$, $0\leq a+b\leq 1$}.
\end{align*}
Therefore, by Lemma~\ref{lemma:regret-target-cost}, the conditional regret of the target cardinality aware loss function can be upper bounded as follows:
\begin{equation*}
\Delta \sC_{\ell, \sH}(r, x) =  \ov q(x, k_{\min}(x)) - \ov q(x, \rr(x)) \leq 2 \abs*{\sK}^{\frac{\q}{2}} \paren*{\Delta\sC_{\wt \ell_{\rm{c-gce}}, \sR}(r, x) }^{\frac12}.
\end{equation*}
By the concavity, taking expectations on both sides of the preceding equation, we obtain
\begin{equation*}
\sE_{\ell}(r) - \sE^*_{\ell}(\sR) + \sM_{\ell}(\sR) \leq 2 \abs*{\sK}^{\frac{\q}{2}}\paren*{ \sE_{\wt \ell_{\rm{c-gce}}}(r) - \sE^*_{\wt \ell_{\rm{c-gce}}}(\sR) + \sM_{\wt \ell_{\rm{c-gce}}}(\sR) }^{\frac12}.
\end{equation*}
The second part follows from the fact that 
$\sM_{\wt \ell_{\rm{c-gce}}}(\sR_{\rm{all}}) = 0$.

\textbf{Case IV: $\wt \ell_{\rm{c-comp}} = \wt \ell_{\rm{c-mae}}$.} For the cost-sensitive mean absolute error loss $\wt \ell_{\rm{c-mae}}$, the conditional error can be written as 
\begin{align*}
\sC_{\wt \ell_{\rm{c-mae}} }(r, x) = \sum_{y\in \sY} p(x, y) \sum_{k \in \sK} \ov c(x, k, y) \paren*{1 - \paren*{\frac{e^{r(x, k)}}{\sum_{k'\in \sK}e^{r(x, k')}}}}  =  \sum_{k \in \sK}\paren*{1 - \sS(x, k)}\ov q(x, k).
\end{align*}
The conditional regret can be written as 
\begin{align*}
\Delta\sC_{\wt \ell_{\rm{c-mae}}, \sR}(r, x) 
& =\sum_{k \in \sK}\paren*{1 - \sS(x, k)}\ov q(x, k) - \inf_{r \in \sR} \paren*{\sum_{k \in \sK}\paren*{1 - \sS(x, k)}\ov q(x, k)}\\
& \geq \sum_{k \in \sK}\paren*{1 - \sS(x, k)}\ov q(x, k) - \inf_{\mu \in \bracket*{-\sS(x, k_{\min}(x)), \sS(x, \rr(x))}} \paren*{\sum_{k \in \sK}\paren*{1 - \sS_{\mu}(x, k)}\ov q(x, k)},
\end{align*}
where for any $x \in \sX$ and $k \in \sK$, $\sS_{\mu}(x, k) = \begin{cases}
\sS(x, y), & y \notin \curl*{k_{\min}(x), \rr(x)}\\
\sS(x, k_{\min}(x)) + \mu & y = \rr(x)\\
\sS(x, \rr(x)) - \mu & y = k_{\min}(x).
\end{cases}$
Note that such a choice of $\sS_{\mu}$ leads to the following equality holds:
\begin{equation*}
\sum_{k \in \sK}\paren*{1 - \sS(x, k)}\ov q(x, k) = \sum_{k \in \sK}\paren*{1 - \sS_{\mu}(x, k)}\ov q(x, k).
\end{equation*}
Therefore, the conditional regret of cost-sensitive mean absolute error can be lower bounded as
\begin{align*}
\Delta\sC_{\wt \ell_{\rm{c-mae}}, \sH}(h, x)  & \geq \sup_{\mu \in [-\sS(x, k_{\min}(x)),\sS(x, \rr(x))]} \bigg\{\ov q(x, k_{\min}(x))\bracket*{-\sS(x, k_{\min}(x))+\sS(x, \rr(x))-\mu}\\
&\qquad +\ov q(x, \rr(x))\bracket*{-\sS(x, \rr(x))+ \sS(x, k_{\min}(x))+ \mu}\bigg\}.
\end{align*}
By the concavity of the function, differentiate with respect to $\mu$, we obtain that the supremum is achieved by $\mu^* = -\sS(x, k_{\min}(x))$. Plug in $\mu^*$, we obtain
\begin{align*}
& \Delta\sC_{\wt \ell_{\rm{c-mae}}, \sH}(h, x)\\ 
& \geq \ov q(x, k_{\min}(x))\sS(x, \rr(x))-\ov q(x, \rr(x))\sS(x, \rr(x))\\
& \geq \frac{1}{\abs*{\sK}}\paren*{\ov q(x, k_{\min}(x))-\ov q(x, \rr(x))}
\tag{minimum is achieved when $\sS(x, \rr(x)) = \frac1{\abs*{\sK}}$}.
\end{align*}
Therefore, by Lemma~\ref{lemma:regret-target-cost}, the conditional regret of the target cardinality aware loss function can be upper bounded as follows:
\begin{equation*}
\Delta \sC_{\ell, \sH}(r, x) =  \ov q(x, k_{\min}(x)) - \ov q(x, \rr(x)) \leq \abs*{\sK} \paren*{\Delta\sC_{\wt \ell_{\rm{c-mae}}, \sR}(r, x) }.
\end{equation*}
By the concavity, taking expectations on both sides of the preceding equation, we obtain
\begin{equation*}
\sE_{\ell}(r) - \sE^*_{\ell}(\sR) + \sM_{\ell}(\sR) \leq \abs*{\sK} \paren*{ \sE_{\wt \ell_{\rm{c-mae}}}(r) - \sE^*_{\wt \ell_{\rm{c-mae}}}(\sR) + \sM_{\wt \ell_{\rm{c-mae}}}(\sR) }.
\end{equation*}
The second part follows from the fact that 
$\sM_{\wt \ell_{\rm{c-mae}}}(\sR_{\rm{all}}) = 0$.
\end{proof}

\subsection{Proof of Theorem~\ref{thm:bound-cost-cstnd}}
\label{app:bound-cost-cstnd}

The conditional error for the cost-sensitive constrained loss can be expressed as follows:
\begin{align*}
\sC_{\wt \ell_{\rm{c-cstnd}}}(r, x) 
&=  \sum_{y\in \sY} p(x, y)  \wt \ell_{\rm{c-cstnd}}(r, x, y)\\
&=  \sum_{y\in \sY} p(x, y) \sum_{k \in \sK} c(x, k, y) \Phi\paren*{-r(x, k)}\\
& = \sum_{k \in \sK} \wt q(x, k)\Phi\paren*{-r(x, k)},
\end{align*}
where $\wt q(x, k) =   \sum_{y\in \sY} p(x, y)  c(x, k, y) \in [0, 1]$. Let $k_{\min}(x)  =  \argmin_{k \in \sK} \wt q(x, k)$.
We denote by $\Phi_{\exp} \colon t \mapsto e^{-t}$ the exponential loss function, $\Phi_{\rm{sq-hinge}} \colon t \mapsto \max \curl*{0, 1 - t}^2$ the squared hinge loss function, $\Phi_{\rm{hinge}} \colon t
\mapsto \max \curl*{0, 1 - t}$ the hinge loss function, and $\Phi_{\rho} \colon t \mapsto \min\curl*{\max\curl*{0,
    1 - t/\rho}, 1}$, $\rho > 0$ the $\rho$-margin loss function.
\BoundCostCstnd*
\begin{proof}
\textbf{Case I: $\ell = \wt \ell^{\rm{cstnd}}_{\rm{c}{-\exp}}$.}
For the cost-sensitive constrained exponential loss $\wt \ell^{\rm{cstnd}}_{\rm{c}{-\exp}}$, the conditional regret can be written as 
\begin{align*}
\Delta\sC_{\wt \ell^{\rm{cstnd}}_{\rm{c}{-\exp}}, \sR}(r, x) 
& = \sum_{k \in \sK} \wt q(x, k) \Phi_{\exp}\paren*{-r(x, k)} - \inf_{r \in \sR} \sum_{k \in \sK} \wt q(x, k)\Phi_{\exp}\paren*{-r(x, k)}\\
& \geq \sum_{k \in \sK} \wt q(x, k)\Phi_{\exp}\paren*{-r(x, k)} - \inf_{\mu \in \Rset} \sum_{k \in \sK} \wt q(x, k)\Phi_{\exp}\paren*{-r_{\mu}(x, k)},
\end{align*}
where for any $k \in \sK$, $r_{\mu}(x, k) = \begin{cases}
r(x, y), & y \notin \curl*{k_{\min}(x), \rr(x)}\\
r(x, k_{\min}(x)) + \mu & y = \rr(x)\\
r(x, \rr(x)) - \mu & y = k_{\min}(x).
\end{cases}$
Note that such a choice of $r_{\mu}$ leads to the following equality holds:
\begin{equation*}
\sum_{k \notin \curl*{\rr(x), k_{\min}(x)}}  \wt q(x, k)\Phi_{\exp}\paren*{-r(x, k)} = \sum_{k \notin \curl*{\rr(x), k_{\min}(x)}}  \sum_{k \in \sK} \wt q(x, k)\Phi_{\exp}\paren*{-r_{\mu}(x, k)}.
\end{equation*}
Therefore, the conditional regret of cost-sensitive constrained exponential loss can be lower bounded as
\begin{align*}
& \Delta\sC_{\wt \ell^{\rm{cstnd}}_{\rm{c}{-\exp}}, \sR}(r, x)\\ 
& \geq \inf_{r \in \sR} \sup_{\mu\in \Rset} \curl*{\wt q(x, k_{\min}(x))\paren*{e^{r(x, k_{\min}(x))}-e^{r(x,\rr(x))-\mu}}+\wt q(x,\rr(x))\paren*{e^{r(x,\rr(x))}-e^{r(x, k_{\min}(x))+\mu}}}\\
& = \paren*{\sqrt{\wt q(x, k_{\min}(x))}-\sqrt{\wt q(x,\rr(x))}}^2 \tag{differentiating with respect to $\mu$, $r$ to optimize}\\
&   =   \paren*{\frac{\wt q(x,\rr(x)) - \wt q(x, k_{\min}(x))}{\sqrt{\wt q(x, k_{\min}(x))} + \sqrt{\wt q(x, \rr(x))}}}^2\\
& \geq \frac1{4} \paren*{\wt q(x,\rr(x)) - \wt q(x, k_{\min}(x))}^2 \tag{$0 \leq \wt q(x, k)\leq 1$}.
\end{align*}
Therefore, by Lemma~\ref{lemma:regret-target-cost}, the conditional regret of the target cardinality aware loss function can be upper bounded as follows:
\begin{equation*}
\Delta \sC_{\ell, \sH}(r, x) = \wt q(x, \rr(x)) - \wt q(x, k_{\min}(x)) \leq 2 \paren*{\Delta\sC_{\wt \ell^{\rm{cstnd}}_{\rm{c}{-\exp}}, \sR}(r, x) }^{\frac12}.
\end{equation*}
By the concavity, taking expectations on both sides of the preceding equation, we obtain
\begin{equation*}
\sE_{\ell}(r) - \sE^*_{\ell}(\sR) + \sM_{\ell}(\sR) \leq 2 \paren*{ \sE_{\wt \ell^{\rm{cstnd}}_{\rm{c}{-\exp}}}(r) - \sE^*_{\wt \ell^{\rm{cstnd}}_{\rm{c}{-\exp}}}(\sR) + \sM_{\wt \ell^{\rm{cstnd}}_{\rm{c}{-\exp}}}(\sR) }^{\frac12}.
\end{equation*}
The second part follows from the fact that 
$\sM_{\wt \ell^{\rm{cstnd}}_{\rm{c}{-\exp}}}(\sR_{\rm{all}}) = 0$.

\textbf{Case II: $\ell = \wt \ell_{c-\rm{sq-hinge}}$.}
For the cost-sensitive constrained squared hinge loss $\wt \ell_{c-\rm{sq-hinge}}$, the conditional regret can be written as 
\begin{align*}
\Delta\sC_{\wt \ell_{c-\rm{sq-hinge}}, \sR}(r, x) 
& = \sum_{k \in \sK} \wt q(x, k)\Phi_{\rm{sq-hinge}}\paren*{-r(x, k)} - \inf_{r \in \sR} \sum_{k \in \sK} \wt q(x, k)\Phi_{\rm{sq-hinge}}\paren*{-r(x, k)}\\
& \geq \sum_{k \in \sK} \wt q(x, k)\Phi_{\rm{sq-hinge}}\paren*{-r(x, k)} - \inf_{\mu \in \Rset} \sum_{k \in \sK} \wt q(x, k)\Phi_{\rm{sq-hinge}}\paren*{-r_{\mu}(x, k)},
\end{align*}
where for any $k \in \sK$, $r_{\mu}(x, k) = \begin{cases}
r(x, y), & y \notin \curl*{k_{\min}(x), \rr(x)}\\
r(x, k_{\min}(x)) + \mu & y = \rr(x)\\
r(x, \rr(x)) - \mu & y = k_{\min}(x).
\end{cases}$
Note that such a choice of $r_{\mu}$ leads to the following equality holds:
\begin{equation*}
\sum_{k \notin \curl*{\rr(x), k_{\min}(x)}}  \wt q(x, k)\Phi_{\rm{sq-hinge}}\paren*{-r(x, k)} = \sum_{k \notin \curl*{\rr(x), k_{\min}(x)}}  \sum_{k \in \sK} \wt q(x, k)\Phi_{\rm{sq-hinge}}\paren*{-r_{\mu}(x, k)}.
\end{equation*}
Therefore, the conditional regret of cost-sensitive constrained squared hinge loss can be lower bounded as
\begin{align*}
& \Delta\sC_{\wt \ell_{c-\rm{sq-hinge}}, \sR}(r, x)\\ 
& \geq \inf_{r \in \sR} \sup_{\mu\in \Rset} \bigg\{\wt q(x, k_{\min}(x))\paren*{\max\curl*{0, 1 + r(x, k_{\min}(x))}^2-\max\curl*{0, 1 + r(x,\rr(x))-\mu}^2 }\\
& \qquad + \wt q(x,\rr(x))\paren*{\max\curl*{0, 1 + r(x,\rr(x))}^2-\max\curl*{0, 1 + r(x, k_{\min}(x))+\mu}^2}\bigg\}\\
& \geq \frac14 \paren*{\wt q(x,k_{\min}(x))-\wt q(x, \rr(x))}^2
\tag{differentiating with respect to $\mu$, $r$ to optimize}.
\end{align*}
Therefore, by Lemma~\ref{lemma:regret-target-cost}, the conditional regret of the target cardinality aware loss function can be upper bounded as follows:
\begin{equation*}
\Delta \sC_{\ell, \sH}(r, x) = \wt q(x, \rr(x)) - \wt q(x, k_{\min}(x)) \leq 2 \paren*{\Delta\sC_{\wt \ell_{c-\rm{sq-hinge}}, \sR}(r, x) }^{\frac12}.
\end{equation*}
By the concavity, taking expectations on both sides of the preceding equation, we obtain
\begin{equation*}
\sE_{\ell}(r) - \sE^*_{\ell}(\sR) + \sM_{\ell}(\sR) \leq 2 \paren*{ \sE_{\wt \ell_{c-\rm{sq-hinge}}}(r) - \sE^*_{\wt \ell_{c-\rm{sq-hinge}}}(\sR) + \sM_{\wt \ell_{c-\rm{sq-hinge}}}(\sR) }^{\frac12}.
\end{equation*}
The second part follows from the fact that 
$\sM_{\wt \ell_{c-\rm{sq-hinge}}}(\sR_{\rm{all}}) = 0$.

\textbf{Case III: $\ell = \wt \ell_{c-\rm{hinge}}$.}
For the cost-sensitive constrained hinge loss $\wt \ell_{c-\rm{hinge}}$, the conditional regret can be written as 
\begin{align*}
\Delta\sC_{\wt \ell_{c-\rm{hinge}}, \sR}(r, x) 
& = \sum_{k \in \sK} \wt q(x, k)\Phi_{\rm{hinge}}\paren*{-r(x, k)} - \inf_{r \in \sR} \sum_{k \in \sK} \wt q(x, k)\Phi_{\rm{hinge}}\paren*{-r(x, k)}\\
& \geq \sum_{k \in \sK} \wt q(x, k)\Phi_{\rm{hinge}}\paren*{-r(x, k)} - \inf_{\mu \in \Rset} \sum_{k \in \sK} \wt q(x, k)\Phi_{\rm{hinge}}\paren*{-r_{\mu}(x, k)},
\end{align*}
where for any $k \in \sK$, $r_{\mu}(x, k) = \begin{cases}
r(x, y), & y \notin \curl*{k_{\min}(x), \rr(x)}\\
r(x, k_{\min}(x)) + \mu & y = \rr(x)\\
r(x, \rr(x)) - \mu & y = k_{\min}(x).
\end{cases}$
Note that such a choice of $r_{\mu}$ leads to the following equality holds:
\begin{equation*}
\sum_{k \notin \curl*{\rr(x), k_{\min}(x)}}  \wt q(x, k)\Phi_{\rm{hinge}}\paren*{-r(x, k)} = \sum_{k \notin \curl*{\rr(x), k_{\min}(x)}}  \sum_{k \in \sK} \wt q(x, k)\Phi_{\rm{hinge}}\paren*{-r_{\mu}(x, k)}.
\end{equation*}
Therefore, the conditional regret of cost-sensitive constrained hinge loss can be lower bounded as
\begin{align*}
& \Delta\sC_{\wt \ell_{c-\rm{hinge}}, \sR}(r, x)\\ 
& \geq \inf_{r \in \sR} \sup_{\mu\in \Rset} \bigg\{q(x, k_{\min}(x))\paren*{\max\curl*{0, 1 + r(x, k_{\min}(x))}-\max\curl*{0, 1 + r(x,\rr(x))-\mu} }\\
& \qquad + q(x,\rr(x))\paren*{\max\curl*{0, 1 + r(x,\rr(x))}-\max\curl*{0, 1 + r(x, k_{\min}(x))+\mu}}\bigg\}\\
& \geq  q(x,\rr(x))-q(x, k_{\min}(x))
\tag{differentiating with respect to $\mu$, $r$ to optimize}.
\end{align*}
Therefore, by Lemma~\ref{lemma:regret-target-cost}, the conditional regret of the target cardinality aware loss function can be upper bounded as follows:
\begin{equation*}
\Delta \sC_{\ell, \sH}(r, x) = \wt q(x, \rr(x)) - \wt q(x, k_{\min}(x)) \leq \Delta\sC_{\wt \ell_{c-\rm{hinge}}, \sR}(r, x).
\end{equation*}
By the concavity, taking expectations on both sides of the preceding equation, we obtain
\begin{equation*}
\sE_{\ell}(r) - \sE^*_{\ell}(\sR) + \sM_{\ell}(\sR) \leq \sE_{\wt \ell_{c-\rm{hinge}}}(r) - \sE^*_{\wt \ell_{c-\rm{hinge}}}(\sR) + \sM_{\wt \ell_{c-\rm{hinge}}}(\sR).
\end{equation*}
The second part follows from the fact that 
$\sM_{\wt \ell_{c-\rm{hinge}}}(\sR_{\rm{all}}) = 0$.

\textbf{Case IV: $\ell = \wt \ell_{c-\rho}$.}
For the cost-sensitive constrained $\rho$-margin loss $\wt \ell_{c-\rho}$, the conditional regret can be written as 
\begin{align*}
\Delta\sC_{\wt \ell_{c-\rho}, \sR}(r, x) 
& = \sum_{k \in \sK} \wt q(x, k)\Phi_{\rho}\paren*{-r(x, k)} - \inf_{r \in \sR} \sum_{k \in \sK} \wt q(x, k)\Phi_{\rho}\paren*{-r(x, k)}\\
& \geq \sum_{k \in \sK} \wt q(x, k)\Phi_{\rho}\paren*{-r(x, k)} - \inf_{\mu \in \Rset} \sum_{k \in \sK} \wt q(x, k)\Phi_{\rho}\paren*{-r_{\mu}(x, k)},
\end{align*}
where for any $k \in \sK$, $r_{\mu}(x, k) = \begin{cases}
r(x, y), & y \notin \curl*{k_{\min}(x), \rr(x)}\\
r(x, k_{\min}(x)) + \mu & y = \rr(x)\\
r(x, \rr(x)) - \mu & y = k_{\min}(x).
\end{cases}$
Note that such a choice of $r_{\mu}$ leads to the following equality holds:
\begin{equation*}
\sum_{k \notin \curl*{\rr(x), k_{\min}(x)}}  \wt q(x, k)\Phi_{\rho}\paren*{-r(x, k)} = \sum_{k \notin \curl*{\rr(x), k_{\min}(x)}}  \sum_{k \in \sK} \wt q(x, k)\Phi_{\rho}\paren*{-r_{\mu}(x, k)}.
\end{equation*}
Therefore, the conditional regret of cost-sensitive constrained $\rho$-margin loss can be lower bounded as
\begin{align*}
& \Delta\sC_{\wt \ell_{c-\rho}, \sR}(r, x)\\ 
& \geq \inf_{r \in \sR} \sup_{\mu\in \Rset} \bigg\{\wt q(x, k_{\min}(x))\paren*{\min\curl*{\max\curl*{0,1 + \frac{r(x, k_{\min}(x))}{\rho}},1}-\min\curl*{\max\curl*{0,1 + \frac{r(x,\rr(x))-\mu}{\rho}},1}}\\
&+\wt q(x,\rr(x))\paren*{\min\curl*{\max\curl*{0,1 + \frac{r(x,\rr(x))}{\rho}},1}-\min\curl*{\max\curl*{0,1 + \frac{r(x, k_{\min}(x))+\mu}{\rho}},1}}\bigg\}\\
& \geq \wt q(x,\rr(x))-\wt q(x, k_{\min}(x))
\tag{differentiating with respect to $\mu$, $r$ to optimize}.
\end{align*}
Therefore, by Lemma~\ref{lemma:regret-target-cost}, the conditional regret of the target cardinality aware loss function can be upper bounded as follows:
\begin{equation*}
\Delta \sC_{\ell, \sH}(r, x) = \wt q(x, \rr(x)) - \wt q(x, k_{\min}(x)) \leq \Delta\sC_{\wt \ell_{c-\rho}, \sR}(r, x).
\end{equation*}
By the concavity, taking expectations on both sides of the preceding equation, we obtain
\begin{equation*}
\sE_{\ell}(r) - \sE^*_{\ell}(\sR) + \sM_{\ell}(\sR) \leq \sE_{\wt \ell_{c-\rho}}(r) - \sE^*_{\wt \ell_{c-\rho}}(\sR) + \sM_{\wt \ell_{c-\rho}}(\sR).
\end{equation*}
The second part follows from the fact that 
$\sM_{\wt \ell_{c-\rho}}(\sR_{\rm{all}}) = 0$.
\end{proof}

\newpage

\section{Additional experimental results: top-\texorpdfstring{$k$}{k} classifiers}
\label{app:add}

\begin{figure}[t]
\vskip -.05in
\begin{center}
\begin{tabular}{@{}cc@{}}
\includegraphics[scale=0.35]{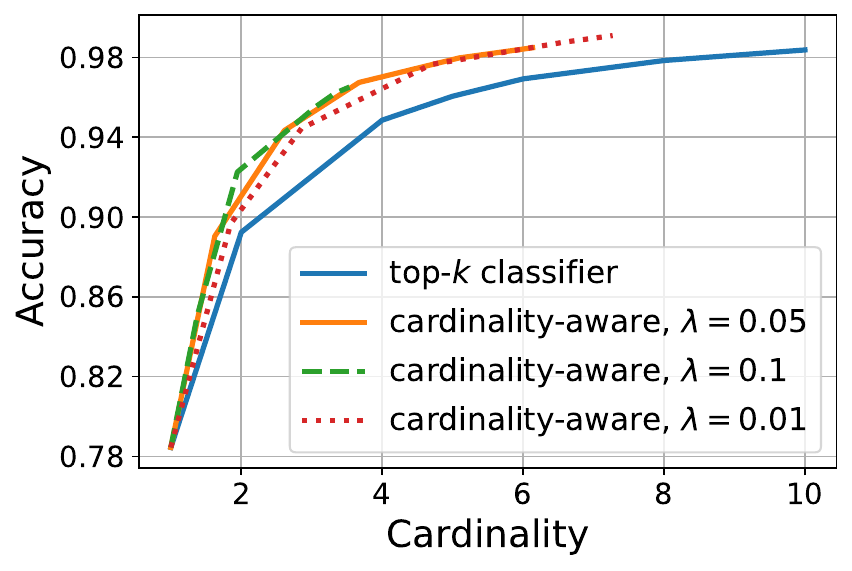}& 
\includegraphics[scale=0.35]{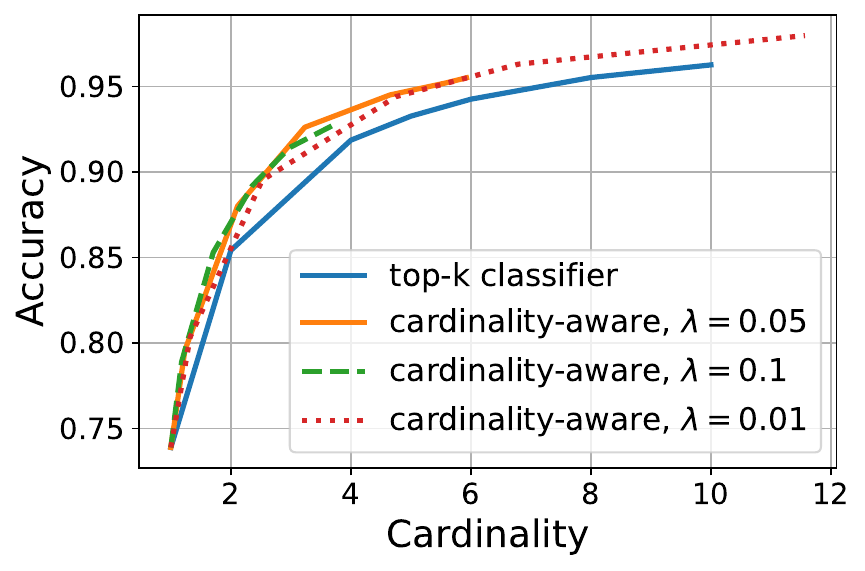}\\[-0.15cm]
{\small CIFAR-100} & {\small ImageNet} \\
\includegraphics[scale=0.35]{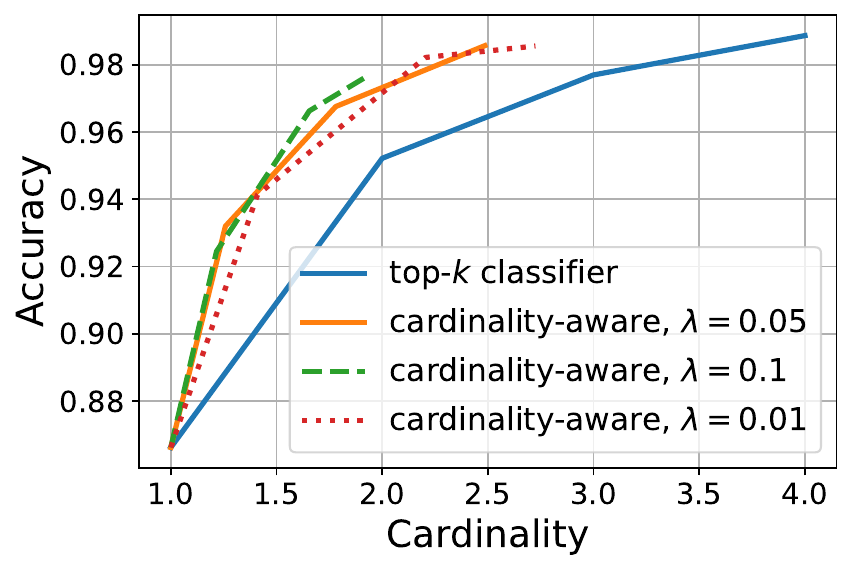}&
\includegraphics[scale=0.35]{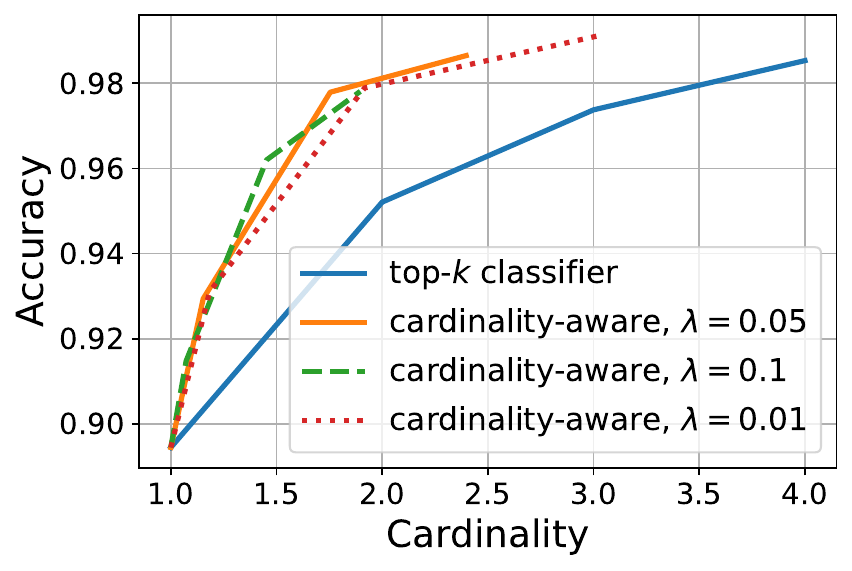}\\[-0.15cm]
{\small CIFAR-10} & {\small SVHN} 
\end{tabular}
\caption{Accuracy versus cardinality on various datasets for $\cost(\abs*{\g_k(x)}) = \log k$. Each curve of the cardinality-aware algorithm is for a fixed value of $\lambda$ and the points on the curve are obtained by varying the number of experts.}
\label{fig:topk_lambda}
\end{center}
\end{figure} 

\begin{figure}[t]
\begin{center}
\begin{tabular}{@{}cc@{}}
\includegraphics[scale=0.35]{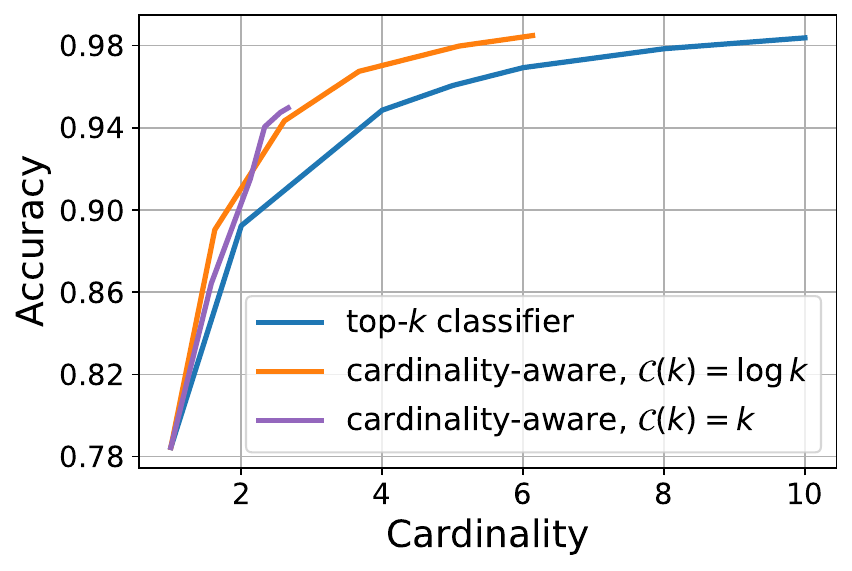}& 
\includegraphics[scale=0.35]{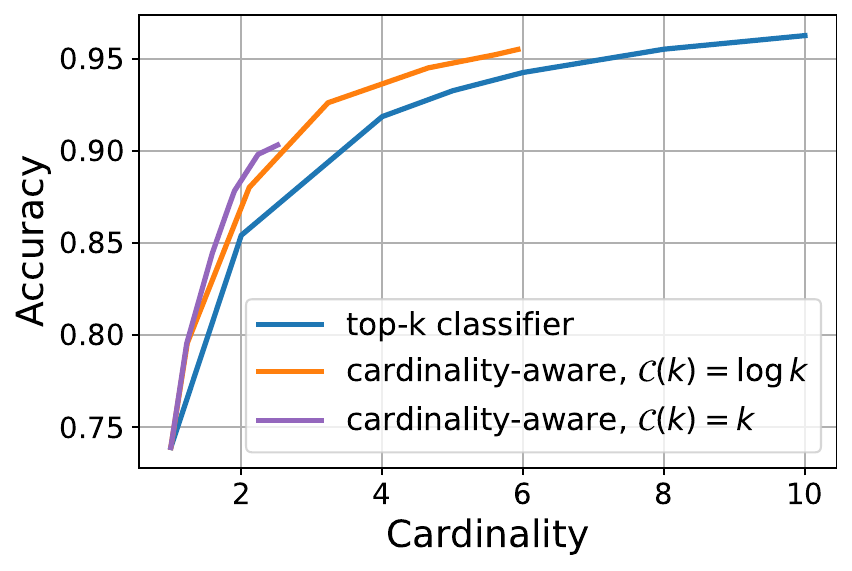}\\[-0.15cm]
{\small CIFAR-100} & {\small ImageNet} \\
\includegraphics[scale=0.35]{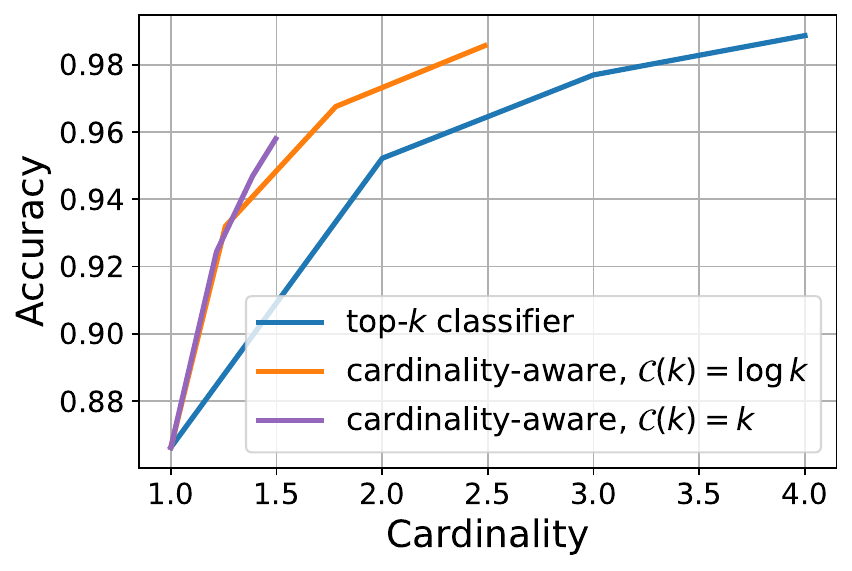}&
\includegraphics[scale=0.35]{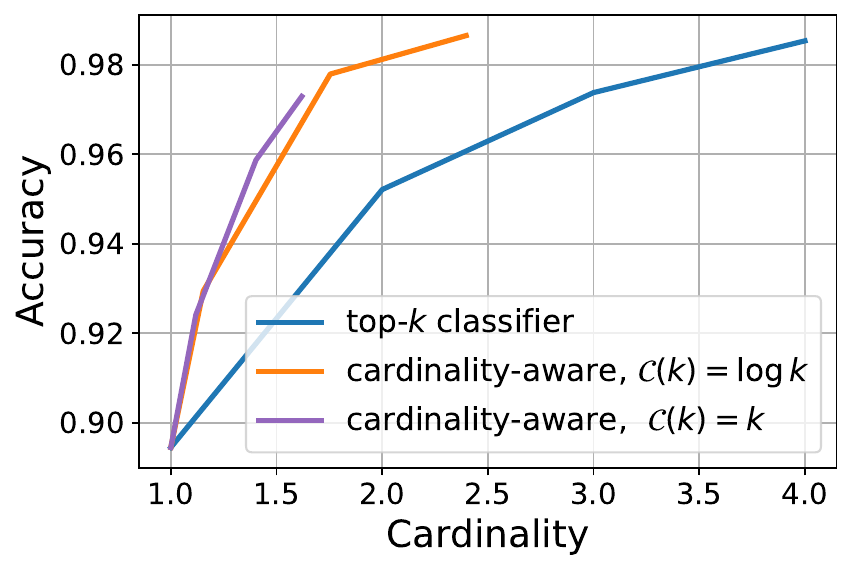}\\[-0.15cm]
{\small CIFAR-10} & {\small SVHN} 
\end{tabular}
\caption{Accuracy versus cardinality on various datasets for
  $\cost(\abs*{\g_k(x)}) = \log k$ and $\cost(\abs*{\g_k(x)}) = k$, with
  $\lambda = 0.05$. The points on each curve of the cardinality-aware algorithm are obtained by varying the number of experts.}
\label{fig:topk_Ck}
\end{center}
\vskip -0.2in
\end{figure} 

Here, we report additional experimental results with different choices
of set $\sK$ and $\cost(\abs*{\g_k(x)})$ on benchmark datasets CIFAR-10,
CIFAR-100 \citep{Krizhevsky09learningmultiple}, SVHN
\citep{Netzer2011}, and ImageNet \citep{deng2009imagenet} and show that
our cardinality-aware algorithm consistently outperforms top-$k$
classifiers across all configurations.

In Figure~\ref{fig:topk_lambda} and Figure~\ref{fig:topk_Ck}, we began
with a set $\sK = \curl*{1}$ for the loss function and then
progressively expanded it by adding choices of larger cardinality,
each of which doubles the largest value currently in $\sK$. The
largest set $\sK$ for the CIFAR-100 and ImageNet datasets is
$\curl*{1, 2, 4, 8, 16, 32, 64}$, whereas for the CIFAR-10 and SVHN
datasets, it is $\curl*{1, 2, 4, 8}$. As the set $\sK$ expands, there
is an increase in both the average cardinality and the
accuracy. 
Figure~\ref{fig:topk_lambda} shows that the accuracy versus
cardinality curve of the cardinality-aware algorithm is above that of
top-$k$ classifiers for various values of
$\lambda$. Figure~\ref{fig:topk_Ck} presents the comparison of
$\cost(\abs*{\g_k(x)}) = k$ and $\cost(\abs*{\g_k(x)}) = \log k$ for
$\lambda = 0.05$.
These results demonstrate that different $\lambda$ and different $\cost(\abs*{\g_k(x)})$ basically lead to the same curve, which verifies the effectiveness and benefit of our algorithm.

\newpage
\section{Additional experimental results: threshold-based classifiers}
\label{app:add-conformal}

We first characterize the Bayes predictor $r^*$ in this setting.
We say that the scenario is deterministic if for all $x \in \sX$, there exists some true label $y \in \sY$ such that $p(x, y) = 1$; otherwise, we say that the scenario is stochastic.
To simplify the
discussion, we will assume that $\abs*{\g_k(x)}$ is an increasing
function of $k$, for any $x$. 

\begin{lemma}
\label{lemma:det}
Consider the deterministic scenario.
Assume that $\lambda \cost(\abs*{\g_{k}(x)}) \leq 1$ for all $k$ and $x \in \sX$. Then, the Bayes predictor $r^*$ for the cardinality-aware loss function $\ell$ satisfies: $\rr^*(x) = \argmin_{k \colon y \in \g_k(x)} k $, that is the smallest $k$ such that the true label $y$ is in $\g_k(x)$.
\end{lemma}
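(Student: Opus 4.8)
The plan is to reduce the statement to a pointwise minimization over $\sK$ and then read off the answer from the monotonicity hypotheses. First I would use that the Bayes predictor $r^*$ minimizes $\sE_\ell(r) = \E_x\bracket*{\sC_\ell(r, x)}$ over all measurable scoring functions, so it suffices to minimize the conditional error $\sC_\ell(r, x)$ for each fixed $x$. In the deterministic scenario there is a unique label $y_x \in \sY$ with $p(x, y_x) = 1$, hence by \eqref{eq:target-cardinality} we have $\sC_\ell(r, x) = \sum_{y \in \sY} p(x, y)\, c(x, \rr(x), y) = c(x, \rr(x), y_x)$. Since, over $\sR_{\rm{all}}$, the induced index $\rr^*(x)$ can be chosen to be any element of $\sK$ (because $\sR_{\rm{all}}$ is symmetric and complete, so that $r^*(x, \cdot)$ may put its largest value on any prescribed index; cf.\ Lemma~\ref{lemma:regret-target-cost}), the Bayes predictor satisfies $\rr^*(x) \in \argmin_{k \in \sK} c(x, k, y_x)$, and it remains to locate this minimizer.

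Next I would partition $\sK$ as $\sK_x^+ = \curl*{k \in \sK : y_x \in \g_k(x)}$ and $\sK_x^- = \sK \setminus \sK_x^+$, noting that $k^* := \argmin_{k \colon y_x \in \g_k(x)} k = \min \sK_x^+$. For $k \in \sK_x^-$ we have $c(x, k, y_x) = 1 + \lambda \cost(\abs*{\g_k(x)}) \geq 1$ because $\cost$ is non-negative, while for $k \in \sK_x^+$ we have $c(x, k, y_x) = \lambda \cost(\abs*{\g_k(x)}) \leq 1$ by the hypothesis $\lambda \cost(\abs*{\g_k(x)}) \leq 1$. Hence the minimum of $c(x, \cdot, y_x)$ over $\sK$ is attained inside $\sK_x^+$.

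Finally, on $\sK_x^+$ the cost equals $\lambda \cost(\abs*{\g_k(x)})$, and since $k \mapsto \abs*{\g_k(x)}$ is increasing (the standing assumption on the set predictors) and $\cost$ is increasing, $k \mapsto \lambda \cost(\abs*{\g_k(x)})$ is increasing on $\sK_x^+$; its minimum over $\sK_x^+$ is therefore attained at $\min \sK_x^+ = k^*$. Combining the last two steps gives $k^* \in \argmin_{k \in \sK} c(x, k, y_x)$, i.e.\ $\rr^*(x) = k^*$.

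The routine part of the argument is short; the delicate point — which I expect to be the main obstacle — is the bookkeeping at the boundary of the hypothesis. If $\lambda \cost(\abs*{\g_{k^*}(x)}) = 1$ and some $k \in \sK_x^-$ happens to have $\cost(\abs*{\g_k(x)}) = 0$ (e.g.\ $\cost = \log$ with $\abs*{\g_k(x)} = 1$), the minimizer of $c(x, \cdot, y_x)$ need not be unique, so "$\rr^*(x) = k^*$" should either be read as "$k^*$ realizes a Bayes predictor", or the hypothesis should be tightened to $\lambda \cost(\abs*{\g_k(x)}) < 1$, under which $k^*$ is the unique minimizer. One should also record that the statement presupposes $\sK_x^+ \neq \emptyset$, i.e.\ that $y_x \in \g_k(x)$ for at least one $k \in \sK$; for the families of interest (top-$k$ or nested threshold-based predictors with $\g_{k_{\max}}(x) = \sY$) this holds automatically.
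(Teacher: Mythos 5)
Your proof is correct and follows essentially the same route as the paper's: a pointwise comparison of $c(x,k,y)$ showing that indices with $y \notin \g_k(x)$ incur cost at least $1 \geq \lambda \cost(\abs*{\g_{k^*}(x)})$, while larger indices containing $y$ are dominated via the monotonicity of $k \mapsto \abs*{\g_k(x)}$ and of $\cost$. Your additional remarks (explicit reduction to conditional-error minimization over $\sR_{\rm all}$, possible non-uniqueness at the boundary $\lambda\cost = 1$, and the need for $\sK_x^+ \neq \emptyset$) are sound refinements of points the paper leaves implicit.
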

\begin{proof}
By the assumption, for $k < \rr^*(x)$, we can write
$c(x, \rr^*(x), y) = \lambda \cost(\abs*{\g_{\rr^*(x)}(x)}) \leq 1 \leq 1_{y \not \in \g_k(x)} + \lambda \cost(\abs*{\g_k(x)}) = c(x, k, y)$.
Furthermore, since $\abs*{\g_k(x)}$ is an increasing
function of $k$, we have
$c(x, \rr^*(x), y) = \lambda \cost(\abs*{\g_{\rr^*(x)}(x)}) \leq \lambda \cost(\abs*{\g_k'(x)}) = c(x, k', y)$ for $k' > \rr^*(x)$.
\end{proof}

\begin{figure}[t]
\vskip -.05in
\begin{center}
\includegraphics[scale=0.35]{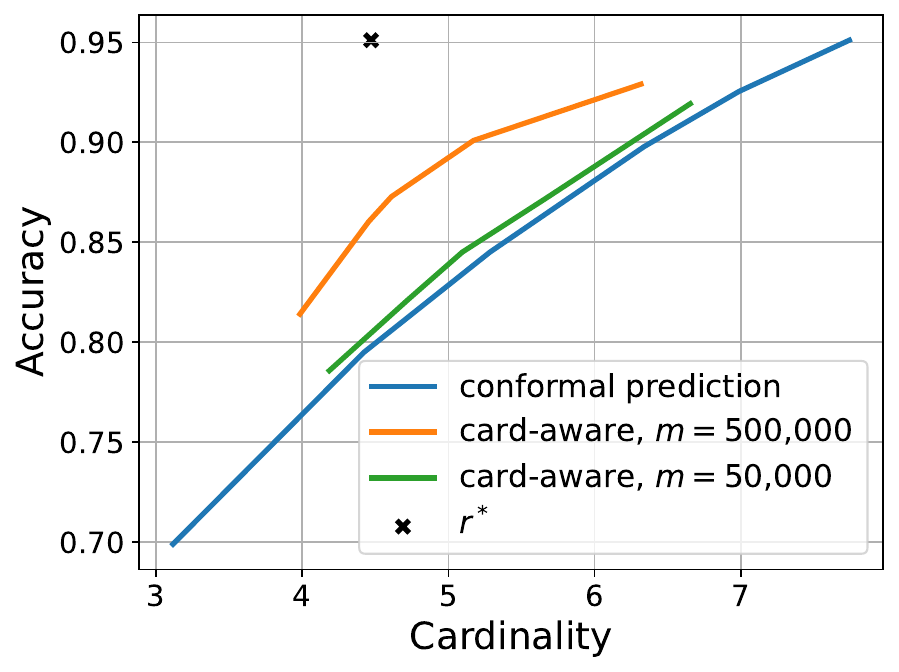}
\caption{Accuracy versus cardinality on an artificial dataset for different training sample sizes $m$.}
\label{fig:artificial}
\end{center}
\vskip -0.2in
\end{figure}

\begin{lemma}
\label{lemma:sto}
Consider the stochastic scenario. The Bayes predictor $r^*$ for the cardinality-aware loss function $\ell$ satisfies: \[\rr^*(x) = \argmin_{k \in \sK} \paren*{\lambda \cost(\abs*{\g_k(x)}) - \sum_{y \in \g_{k}(x)} p(x, y) }.\] 
\end{lemma}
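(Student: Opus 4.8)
The plan is to reduce the problem to a pointwise minimization over the index set $\sK$ and then simplify the resulting objective by elementary algebra. Since the Bayes predictor ranges over all measurable scoring functions, a family that is symmetric and complete, Lemma~\ref{lemma:regret-target-cost} applies with $\sR = \sR_{\rm{all}}$ and yields $\sC^*_{\ell}(\sR_{\rm{all}}, x) = \min_{k \in \sK} \sum_{y \in \sY} p(x, y)\, c(x, k, y)$, the minimum being attained by any predictor $r^*$ whose induced prediction $\rr^*(x)$ minimizes the map $k \mapsto \sum_{y \in \sY} p(x, y)\, c(x, k, y)$ for each $x$. Hence it suffices to rewrite this inner objective in the claimed form and read off the minimizer.

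Substituting $c(x, k, y) = 1_{y \notin \g_k(x)} + \lambda\, \cost(\abs*{\g_k(x)})$ and using $\sum_{y \in \sY} p(x, y) = 1$, I would compute
\begin{align*}
\sum_{y \in \sY} p(x, y)\, c(x, k, y)
&= \sum_{y \in \sY} p(x, y)\, 1_{y \notin \g_k(x)} + \lambda\, \cost(\abs*{\g_k(x)}) \sum_{y \in \sY} p(x, y)\\
&= \paren*{1 - \sum_{y \in \g_k(x)} p(x, y)} + \lambda\, \cost(\abs*{\g_k(x)}).
\end{align*}
The additive term $1$ does not depend on $k$, so minimizing over $k \in \sK$ is equivalent to minimizing $\lambda\, \cost(\abs*{\g_k(x)}) - \sum_{y \in \g_k(x)} p(x, y)$. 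Therefore $\rr^*(x) = \argmin_{k \in \sK} \paren*{\lambda\, \cost(\abs*{\g_k(x)}) - \sum_{y \in \g_k(x)} p(x, y)}$, which is the claimed characterization.

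I do not anticipate any real obstacle here: the statement is essentially a corollary of Lemma~\ref{lemma:regret-target-cost} combined with a one-line simplification. Two minor points deserve a remark. First, the argument does not actually invoke stochasticity; the formula is valid in general, but it is stated for the stochastic case because in the deterministic case the sharper description of Lemma~\ref{lemma:det} holds under the extra assumption $\lambda\, \cost(\abs*{\g_k(x)}) \leq 1$. Second, the tie-breaking rule in the definition of $\rr$ (largest index) plays no role, since among measurable functions one can choose $r^*$ to strictly separate a desired optimal index at each $x$, so that $\rr^*(x)$ realizes any fixed (measurably chosen) minimizer.
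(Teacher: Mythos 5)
Your proposal is correct and follows essentially the same argument as the paper: both reduce to the pointwise minimization of the conditional error $\sC_{\ell}(r,x) = 1 - \sum_{y \in \g_{\rr(x)}(x)} p(x,y) + \lambda\,\cost(\abs*{\g_{\rr(x)}(x)})$ over $k \in \sK$ and drop the constant term (the paper expands this conditional error directly rather than routing through Lemma~\ref{lemma:regret-target-cost}, but the computation is identical). Your added remarks on tie-breaking and the fact that stochasticity is not actually used are fine but not needed.
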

\begin{proof}
The conditional error can be written as follows:
\begin{align*}
\sC_{\ell}(r, x, y) 
& = \sum_{y \in \sY}p(x, y)  c(x, \rr(x), y)\\
& = \sum_{y \in \sY}p(x, y) \paren*{1_{y \notin \g_{\rr(x)}(x)} + \lambda \cost(\abs*{\g_{\rr(x)}(x)})}\\
& = \sum_{y \in \sY}p(x, y) 1_{y \notin \g_{\rr(x)}(x)}  + \lambda \cost(\abs*{\g_{\rr(x)}(x)})\\
& = 1 - \sum_{y \in \g_{\rr(x)}(x)} p(x, y) + \lambda \cost(\abs*{\g_{\rr(x)}(x)}).
\end{align*}
Thus, the Bayes predictor can be characterized as 
\[\rr^*(x) = \argmin_{k \in \sK} \paren*{\lambda \cost(\abs*{\g_k(x)}) - \sum_{y \in \g_{k}(x)} p(x, y) }.\] 
\end{proof}
It is clear that Lemma~\ref{lemma:sto} implies Lemma~\ref{lemma:det} when there exists some true $y \in \sY$ such that $p(x, y) = 1$ and $\lambda \cost(\abs*{\g_k(x)}) \leq 1$. 

We first consider an artificial dataset containing 10 classes. Each class is modeled by a Gaussian distribution in a 100-dimensional space. As in Section~\ref{sec:experiments}, we plot the accuracy versus cardinality curve of the cardinality-aware algorithm by varying $\lambda$, where the set predictors used are threshold-based classifiers, and compare with that of conformal prediction. In Figure~\ref{fig:artificial}, we also indicate the point corresponding to $r^*$. 
The problem is close to being realizable, as we can train a predictor that performs almost as well as $r^*$ on the test set. Thus, the minimizability gaps vanish, and our $\sH$-consistency bounds (Theorems~\ref{thm:bound-cost-comp} and \ref{thm:bound-cost-cstnd}) then suggest that with sufficient training data, we can get close to the optimal solution and therefore outperform conformal prediction. For some tasks, however, the problem is hard, and it appears that a very large training sample would be needed. Figure~\ref{fig:artificial} demonstrates that on the artificial dataset, with training sample size $m = 50{,}000$, the performance of our cardinality-aware algorithm is only slightly better than that of conformal prediction. If we increase the training sample size to $m = 500{,}000$, then the curve of our algorithm becomes much closer to the optimal point and significantly outperforms conformal prediction.

\begin{figure}[t]
\vskip -.05in
\begin{center}
\begin{tabular}{@{}cc@{}}
\includegraphics[scale=0.35]{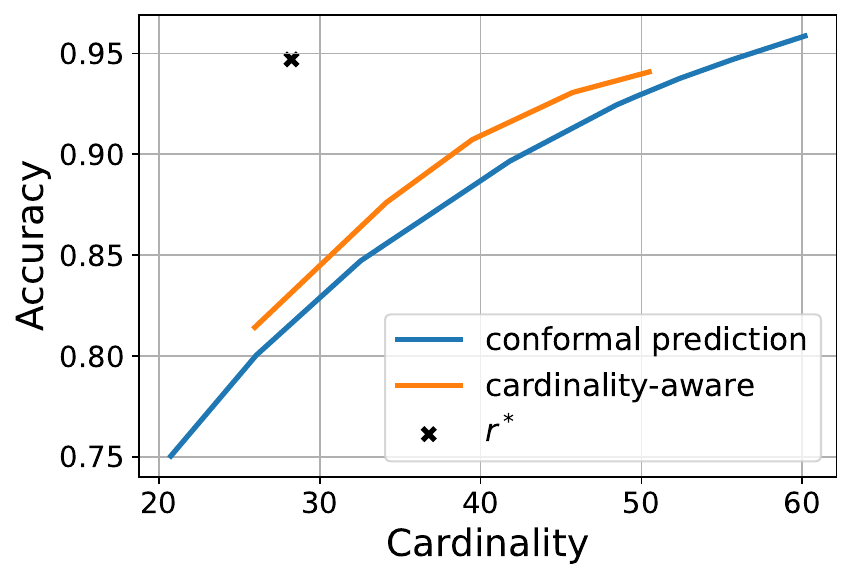}& 
\includegraphics[scale=0.35]{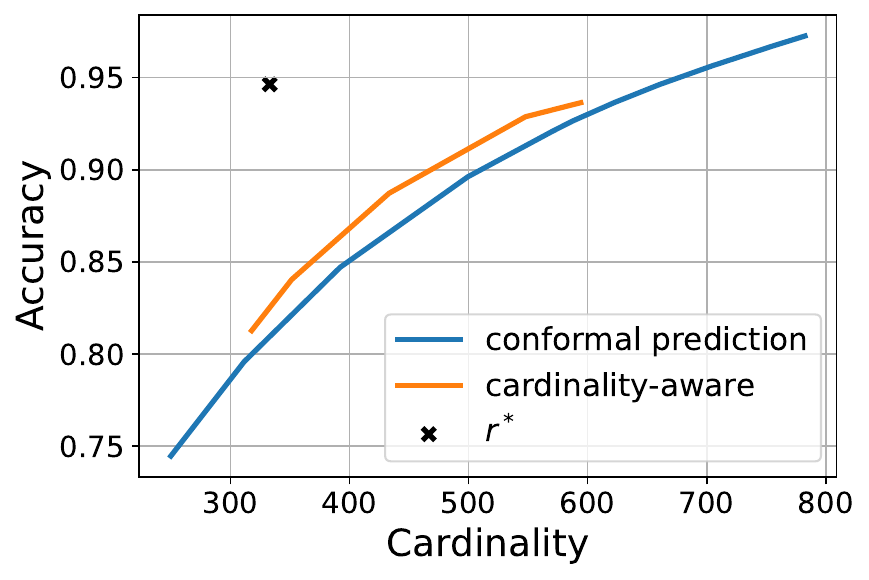}\\[-0.15cm]
{\small CIFAR-100} & {\small ImageNet} \\
\includegraphics[scale=0.35]{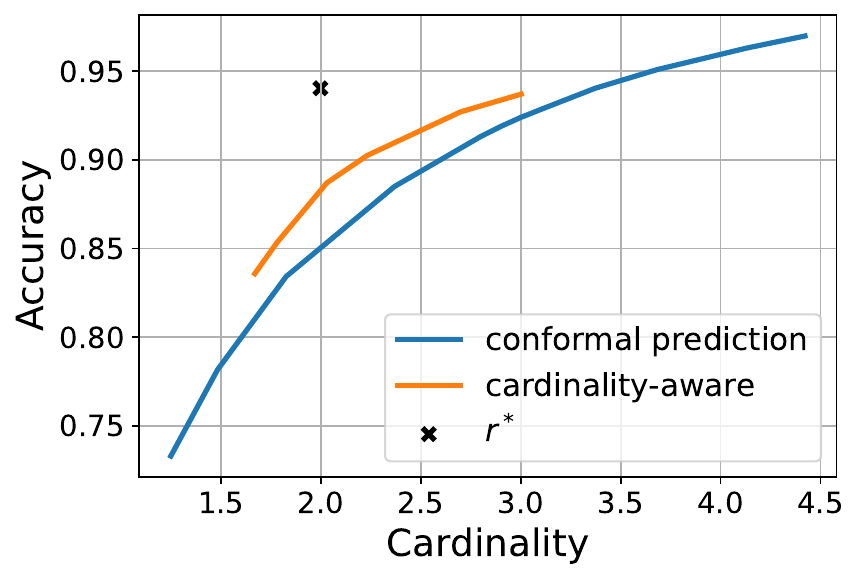}&
\includegraphics[scale=0.35]{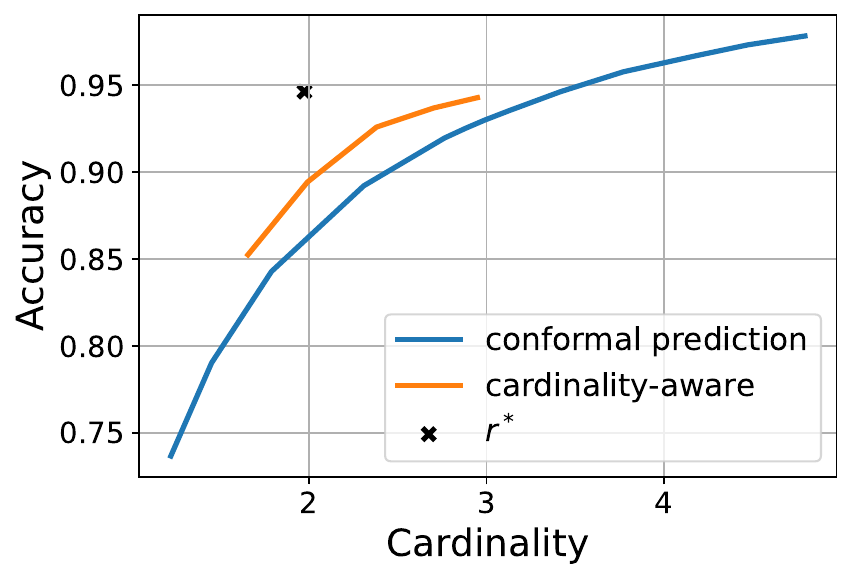}\\[-0.15cm]
{\small CIFAR-10} & {\small SVHN} 
\end{tabular}
\caption{Accuracy versus cardinality on CIFAR-100, ImageNet, CIFAR-10, and SVHN datasets.}
\label{fig:thresh_lambda}
\end{center}
\vskip -0.2in
\end{figure} 

Additionally, for a weaker scoring function, a smaller training sample suffices in many cases, and our cardinality-aware algorithm can outperform conformal prediction on real datasets as shown in Figure~\ref{fig:thresh_lambda}.

\section{Future work}
\label{app:future_work}

While our framework of cardinality-aware set prediction is very general—applicable to any collection of set predictors (Section~\ref{sec:cardinality})—and leads to novel cardinality-aware algorithms (Section~\ref{sec:cardinality-aware-algorithms}), benefits from theoretical guarantees with sufficient training data (Section~\ref{sec:pre}), and demonstrates effectiveness and empirical advantages in top-$k$ classification (Section~\ref{sec:experiments}), the learning problem can be challenging for certain tasks, often requiring a very large training sample (as shown in Appendix~\ref{app:add-conformal}). This underscores the need for a more detailed investigation to enhance our algorithms in these scenarios.

\end{document}